\def\eqref#1{equation~\ref{#1}}
\def\1{\bm{1}}
\DeclareMathAlphabet{\mathsfit}{\encodingdefault}{\sfdefault}{m}{sl}
\SetMathAlphabet{\mathsfit}{bold}{\encodingdefault}{\sfdefault}{bx}{n}
\newcommand{\R}{\mathbb{R}}
\newcommand{\A}{\mathcal{A}}
\newcommand{\X}{\mathcal{X}}
\newcommand{\M}{\mathcal{M}}
\newcommand{\G}{\mathcal{G}}
\newcommand{\F}{\mathcal{F}}
\newcommand{\D}{\mathcal{D}}
\newcommand{\s}{\mathcal{S}}
\newcommand{\Z}{\mathcal{Z}}
\renewcommand\abs[1]{\left|#1\right|}
\newcommand{\Mod}[1]{\ (\mathrm{mod}\ #1)}
\newcommand{\equivset}[1]{\Upsilon_{#1}}
\newcommand{\Ponline}{\rho_{o}}
\newcommand{\Poffline}{\rho_{e}}
\newcommand{\otherPoffline}{\widetilde{\rho}_{e}}
\newcommand{\donline}{d_{\Ponline}}
\newcommand{\doffline}{d_{\Poffline}}
\DeclarePairedDelimiterX{\infdivx}[2]{(}{)}{%
  #1\;\delimsize|\delimsize|\;#2%
}
\DeclarePairedDelimiterXPP\expect[2]{\mathbb{E}_{#1}}[]{}{\setargs{#2}}%
\NewDocumentCommand{\setargs}{>{\SplitArgument{1}{|}}m}
{\setargsaux#1}
\NewDocumentCommand{\setargsaux}{mm}
{\IfNoValueTF{#2}{#1}{\nonscript\,#1\nonscript\;\delimsize\vert\nonscript\:\allowbreak #2\nonscript\,}}
\DeclarePairedDelimiterXPP\expectaux[3]{\mathbb{E}_{#1}}[]{}{#2\nonscript\:\delimsize\vert\nonscript\:#3}%
\newtheorem*{theorem*}{Theorem}
\newtheorem{lemma}{Lemma}
\newtheorem{definition}{Definition}
\newtheorem{assumption}{Assumption}
\newtheorem{remark}{Remark}
\newtheorem*{remark*}{Remark}
\title{On Covariate Shift of Latent Confounders \\ in Imitation and Reinforcement Learning}
\author{ Guy Tennenholtz \thanks{Correspondence to 
\texttt{guytenn@gmail.com}} \\
	Technion University \& \\
	Nvidia Research
	\And
	Assaf Hallak \\
	Nvidia Research
	\And
	Gal Dalal \\
	Nvidia Research
	\AND
	Shie Mannor \\
	Technion University \& \\
	Nvidia Research
	\And 
	Gal Chechik \\
	Nvidia Research
    \And
    Uri Shalit \\
    Technion University
	
}
\begin{document}
\maketitle

\begin{abstract}
	We consider the problem of using expert data with unobserved confounders for imitation and reinforcement learning. We begin by defining the problem of learning from confounded expert data in a contextual MDP setup. We analyze the limitations of learning from such data with and without external reward, and propose an adjustment of standard imitation learning algorithms to fit this setup. 
    We then discuss the problem of distribution shift between the expert data and the online environment when the data is only partially observable. We prove possibility and impossibility results for imitation learning under arbitrary distribution shift of the missing covariates. When additional external reward is provided, we propose a sampling procedure that addresses the unknown shift and prove convergence to an optimal solution. Finally, we validate our claims empirically on challenging assistive healthcare and recommender system simulation tasks.
\end{abstract}

\section{Introduction}

Reinforcement Learning (RL) is increasingly used across many fields to create agents that learn via interaction and reward feedback. 
In many such cases, we rely on experts to perform certain tasks, integrating their knowledge to improve learning efficiency and overall performance. Imitation Learning~(IL, \citet{hussein2017imitation}) is concerned with learning via expert demonstrations without access to a reward function. Similarly, RL settings often use expert data to boost performance, eliminating the need to learn from scratch. In this work we consider the IL and RL paradigms in the presence of partially observable expert data. 

While expert demonstration data is useful, in many realistic settings such data may be prone to hidden confounding \citep{gottesman2019guidelines}, i.e., there may be features used by the expert which are not observed by the learning agent. This can occur due to, e.g., privacy constraints, continually changing features in ongoing production pipelines, or when not all information available to the human expert was recorded. As we show in our work, covariate shift of unobserved factors between the expert data and the real world may lead to significant negative impact on performance, frequently rendering the data useless for imitation (see \Cref{fig: assistive illustration} and \Cref{thm: impossibility}). 

In this paper we define the tasks of imitation and reinforcement learning using expert data with unobserved confounders and possible covariate shift. We focus on a contextual MDP setting \citep{hallak2015contextual}, where a \emph{context} is sampled at every episode from some distribution, affecting both the reward and the transition between states. We assume that the agent has access to additional expert data, generated by an optimal policy, for which the sampled context is missing, yet \emph{is observed} in the online environment. 

We begin by analyzing the imitation-learning problem, (i.e., without access to reward) in \Cref{section: imitation}. Under no covariate shift in the unobserved context, we characterize a sufficient and necessary set of optimal policies. In contrast, we prove that in the presence of a covariate shift, if the true reward depends on the context, then the imitation-learning problem is non-identifiable and prone to catastrophic errors (see \Cref{section: covariate shift} and \Cref{thm: impossibility}). We further analyze the RL setting (i.e., with access to reward and confounded expert data) in \Cref{section: rl}. \Cref{fig: assistive illustration} depicts a possible failure case of using confounded expert data with unknown covariate shift in a dressing task. Unlike the imitation setting, we show that optimality can still be achieved in the RL setting while using confounded expert data with arbitrary covariate shift. We use a corrective data sampling procedure and prove convergence to an optimal policy. 

Our contributions are as follows. (1) We introduce IL and RL with hidden confounding and prove fundamental characteristics w.r.t. covariate shift and the feasibility of imitation. (2) In the RL setting, under arbitrary covariate shift, we provide a novel algorithm with convergence guarantees which uses a corrective sampling technique to account for the unknown context distribution in the expert data. (3) Finally, we conduct extensive experiments on recommender system \citep{ie2019recsim} and assistive-healthcare \citep{erickson2020assistive} environments, demonstrating our theoretical results, and suggesting that confounded expert data can be used in a controlled manner to improve the efficiency and performance of RL agents.

\begin{figure}[t!]
\centering
\includegraphics[width=0.99\linewidth]{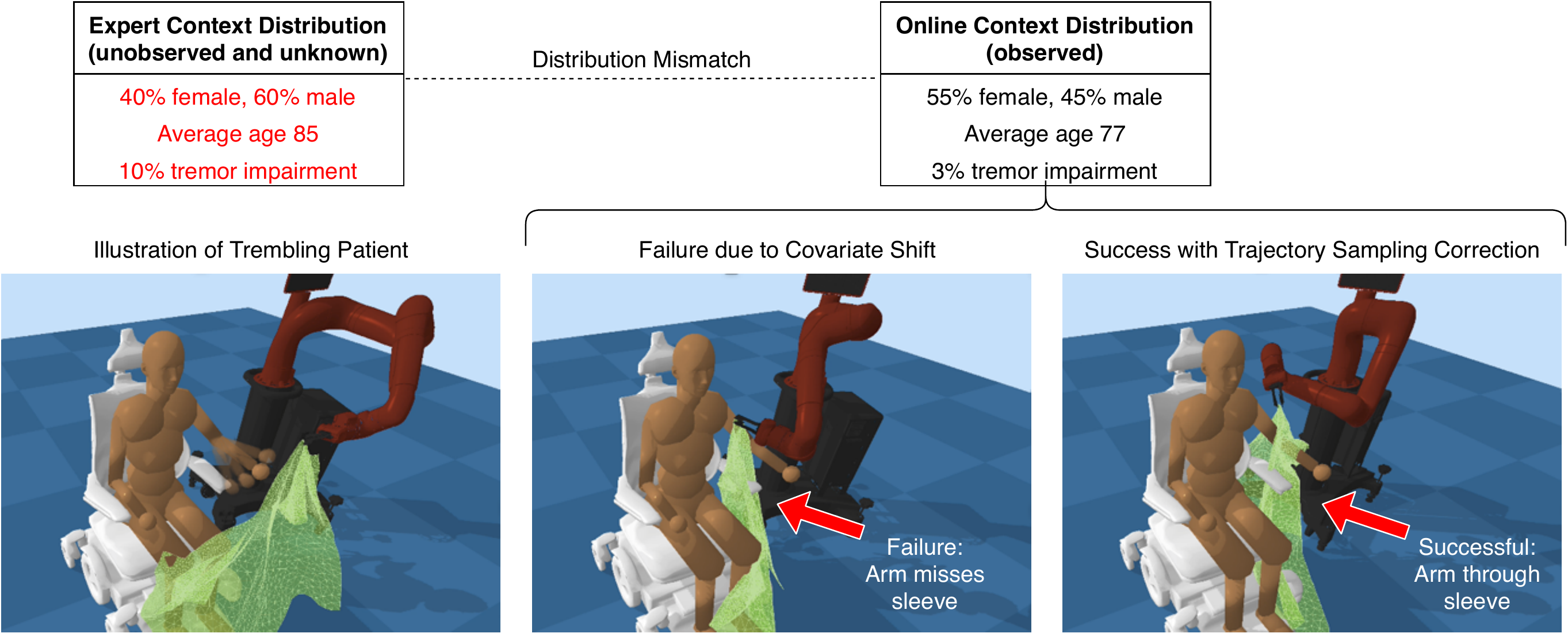}
\caption{\small Failure of using confounded expert data under context distribution mismatch between online environment and expert data. Caregiver does not learn to perform well in a dressing task when covariate shift of hidden confounders is present but not accounted for.}
\label{fig: assistive illustration}
\end{figure}

\section{Preliminaries}
\label{section: preliminaries}

\paragraph{Online Environment.}
We consider a contextual MDP \citep{hallak2015contextual} defined by the tuple ${\M = (\s, \X, \A, P, r, \Ponline, \nu, \gamma)}$, where $\s$ is the state space, $\X$ is the context space, $\A$ is the action space, $P: \s \times \s \times \A \times X \mapsto [0, 1]$ is the context dependent transition kernel, $r: \s \times \A \times X \mapsto [0, 1]$ is the context dependent reward function, and $\gamma \in (0, 1)$ is the discount factor. We assume an initial distribution over contexts $\Ponline: \X \mapsto [0,1]$ and an initial state distribution $\nu: \s \times \X \mapsto [0,1]$.

The environment initializes at some context  $x \sim \Ponline(\cdot)$, and state $s_0 \sim \nu(\cdot | x)$. At time $t$, the environment is at state $s_t \in \s$ and an agent selects an action $a_t \in \A$. The agent receives a reward $r_t = r(s_t, a_t, x)$ and the environment then transitions to state $s_{t+1} \sim P(\cdot | s_t, a_t, x)$.

We define a Markovian stationary policy $\pi$ as a mapping $\pi: \s \times \X \times \A \mapsto [0,1]$, such that $\pi(\cdot | s, x)$ is the action sampling probability. We define the value of a policy $\pi$ by
${
    v_\M(\pi) = \expect*{\pi}{(1-\gamma)\sum_{t=0}^\infty \gamma^t r(s_t, a_t, x) | x \sim \Ponline, s_0 \sim \nu(\cdot \mid x)},}
$
where $\mathbb{E}_\pi$ denotes the expectation induced by the policy $\pi$. We denote by $\Pi$ the set of all Markovian policies and $\Pi_{\text{det}}$ the set of deterministic Markovian policies. We define the optimal value and policy by $v^*_\M = \max_{\pi \in \Pi} v_\M(\pi)$, and ${\pi^*_{\M} \in \arg\max_{\pi \in \Pi} v_\M(\pi)}$, respectively. Whenever appropriate, we simplify notation and write $v^*, \pi^*$. We use $\Pi^*_\M$ to denote the set of optimal policies in $\M$, i.e., $\Pi^*_\M = \arg\max_{\pi \in \Pi} v_\M(\pi)$. We also define the set of catastrophic policies $\Pi^\dagger_\M$ as the set
\begin{align}
    \Pi^\dagger_\M = \arg\min_{\pi \in \Pi} v_\M(\pi).
\label{eq: catastrophic policy}
\end{align}
We later use this set to show impossibility of imitation under arbitrary covariate shift and a context-independent transition function.

\paragraph{Expert Data with Unobserved Confounders.} We assume additional access to a confounded dataset consisting of expert trajectories $\D^* = \brk[c]*{(s_0^i, a_0^i, s_1^i, a_1^i, \hdots, s_H^i, a_H^i)}_{i=1}^n$, where $a^i_j \sim \pi^* \in \Pi^*_\M$. The trajectories in the dataset were sampled i.i.d. from the marginalized expert distribution (under possible context covariate shift)
${
    P^*(s_0, a_0, s_1, a_1, \hdots, s_H)  
    = 
    \sum_x
    \Poffline(x)
    \nu(s_0|x)
    \prod_{t=0}^{H-1} 
    P(s_{t+1} | s_t, a_t, x)
    \pi^*(a_t | s_t, x),}
$
where $\Poffline$ is some distribution over contexts. Importantly, $\Poffline$ does not necessarily equal $\Ponline$ -- the distribution of contexts in the online environment. Notice that it is assumed that $\pi^*$ that generated the data had access to the context $x^i$ (i.e., $\pi^*$ is context-dependent), though it is missing in the data.

In this work, we consider two settings:
\begin{enumerate}
    \item \textbf{Confounded Imitation Learning} (\Cref{section: imitation}): The agent has access to confounded expert data (with context distribution $\Poffline$) as well as real environment $(\s, \X, \A, P, \Ponline, \nu, \gamma)$, \emph{without} access to reward.
    \item \textbf{Reinforcement Learning with Confounded Expert Data} (\Cref{section: rl}): The agent has access to confounded expert data (with context distribution $\Poffline$) as well as real environment ${\M = (\s, \X, \A, P, r, \Ponline, \nu, \gamma)}$, \emph{with} access to reward.
\end{enumerate}
In both settings we aim to find a context-dependent policy which maximizes the cumulative reward. The confounding factor here is w.r.t. the unobserved context and distribution $\Poffline$ in the offline data.

\paragraph{Marginalized Stationary Distribution. }

We denote the stationary distribution of a policy $\pi \in \Pi$ given context $x \in \X$ by $d^\pi(s,a | x) = (1-\gamma) \sum_{t=0}^\infty \gamma^t P^\pi(s_t = s, a_t = a | x, s_0 \sim \nu(\cdot | x))$,
where $P^\pi$ denotes the probability measure induced by $\pi$. Similarly, given a distribution over contexts, we define the marginalized stationary distribution of a policy $\pi$ under the corresponding context distribution by
\begin{align*}
    \donline^\pi(s,a) &= \expect*{x \sim \Ponline}{d^\pi(s,a\mid x)} \quad ~~~\text{(online environment)},\\
    \doffline^{\pi^*}(s,a) &= \expect*{x \sim \Poffline}{d^{\pi^*}(s,a\mid x)} \quad \text{(offline expert data)}.
\end{align*}

\paragraph{A Causal Perspective.} Our work sits at an intersection between the fields of RL and Causal Inference (CI). We believe it is essential to bridge the gap between these two fields, and include an interpretation of our model using CI terminology in \Cref{appendix: relation to causal inference}, where we equivalently define our objective as an intervention over the unknown distribution $\rho_e$ in a specified Structural Causal Model.

\section{Imitation Learning with Unobserved Confounders}
\label{section: imitation}

In this section, we analyze the problem of confounded imitation learning, namely, learning from expert trajectories with hidden confounders and without reward. Similar to previous work, we consider the task of imitation learning from expert data in the setting where the agent is allowed to interact with the environment \citep{ho2016generative,fu2017learning,kostrikov2019imitation,brantley2019disagreement}. In the first part of this section we assume no covariate shift between the online environment and the data is present, i.e., $\Poffline = \Ponline$. We lift this assumption in the second part, where we focus on the covariate shift of the hidden confounders.




\subsection{No Covariate Shift: $\Ponline = \Poffline$}

We first consider the scenario in which no covariate shift is present between the offline data and the online environment, i.e., $\Ponline = \Poffline$. We begin by defining the marginalized ambiguity set, a central component of our work.
\begin{definition}[Ambiguity Set]
    For a policy $\pi \in \Pi$, we define the set of all deterministic policies that match the marginalized stationary distributions of $\pi$ by
    \begin{align*}
        \equivset{\pi} = \brk[c]*{\pi' \in \Pi_{\text{det}} : \donline^{\pi'}(s, a) = \doffline^{\pi}(s, a) \quad  \forall \, s \in \s, a \in \A}.
    \end{align*}
\label{def: ambiguity set}
\end{definition}

Recall that, in general, $\pi^* \in \Pi^*_\M$ may depend on the context $x \in \X$. Therefore, the set $\equivset{\pi^*}$ corresponds to all deterministic policies that cannot be distinguished from $\pi^*$ based on the observed expert data. The following theorem shows that for any policy $\pi^* \in \Pi^*_\M$ and any policy $\pi_0 \in \equivset{\pi^*}$, one could design a reward function $r_0$, for which $\pi_0$ is optimal, while the set $\equivset{\pi^*}$ is indiscernible from $\equivset{\pi_0}$, i.e., $\equivset{\pi^*}=\equivset{\pi_0}$  (see \Cref{appendix: missing proofs} for proof). In other words, $\equivset{\pi^*}$ is the smallest set of candidate optimal policies.

\begin{restatable}{theorem}{ambiguitythm}[Sufficiency of $\equivset{\pi^*}$]
\label{thm: ambiguity uniqueness}
    Assume $\Poffline \equiv \Ponline.$ Let $\pi^* \in \Pi^*_{\M}$ and let ${\pi_0 \in \equivset{\pi^*}}$. Then, $\equivset{\pi^*} = \equivset{\pi_0}$. Moreover, if $\pi_0 \neq \pi^*$, then there exists $r_0$ such that $\pi_0 \in \Pi^*_{\M_0}$ but $\pi^* \notin \Pi^*_{\M_0}$, where ${\M_0 = (\s, \A, \X, P, r_0, \Ponline, \nu, \gamma)}$.
\end{restatable}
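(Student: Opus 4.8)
The plan is to handle the two assertions separately. The first, $\equivset{\pi^*}=\equivset{\pi_0}$, will follow directly from the hypothesis $\Poffline\equiv\Ponline$; the second requires constructing a single reward that promotes $\pi_0$ over $\pi^*$.

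\textbf{Equality of the ambiguity sets.} Since $\Poffline\equiv\Ponline$, the online and offline marginalized occupancy operators are identical, so I will abbreviate $d^\pi := \donline^\pi = \doffline^\pi$ for every $\pi\in\Pi$ (both equal $\expect*{x\sim\Ponline}{d^\pi(\cdot\mid x)}$, using the common environment dynamics). With this notation, membership $\pi'\in\equivset{\pi}$ is exactly the linear constraint $d^{\pi'}=d^{\pi}$ on $\s\times\A$. The hypothesis $\pi_0\in\equivset{\pi^*}$ says $d^{\pi_0}=d^{\pi^*}$, so the defining constraints of $\equivset{\pi^*}$ and $\equivset{\pi_0}$ coincide, and equality of the two sets is immediate by transitivity. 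This step is routine.

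\textbf{The distinguishing reward.} For the second part I will exhibit one reward. Set
\[
 r_0(s,a,x)=\1\{a=\pi_0(s,x)\},
\]
which is well defined since $\pi_0\in\Pi_{\text{det}}$ picks a unique action at each $(s,x)$. Because $r_0\in[0,1]$ and the value carries the factor $(1-\gamma)$, every policy obeys $v_{\M_0}(\pi)\le 1$; and $\pi_0$ deterministically selects the rewarded action at every step, so $v_{\M_0}(\pi_0)=1=v^*_{\M_0}$ and $\pi_0\in\Pi^*_{\M_0}$. Expanding the value of $\pi^*$ through its per-context occupancy yields
\[
 v_{\M_0}(\pi^*)=\expect*{x\sim\Ponline}{\sum_{s} d^{\pi^*}(s\mid x)\,\pi^*\!\brk*{\pi_0(s,x)\mid s,x}}\le 1,
\]
with equality precisely when $\pi^*$ puts all its mass on $\pi_0(s,x)$ at every $(s,x)$ in the support of $d^{\pi^*}(\cdot\mid x)$, for $\Ponline$-almost every $x$.

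\textbf{Main obstacle.} The hard part is ruling out this equality from $\pi_0\neq\pi^*$. The danger is that $\pi_0$ and $\pi^*$ disagree only on state–context pairs that $\pi^*$ never visits: such off-support disagreements cannot be detected by \emph{any} reward, since $v_{\M_0}$ sees a policy only through its per-context occupancy $d^\pi(\cdot\mid x)$. I would resolve this by showing a genuine disagreement must be on-support: if $\pi^*$ matched $\pi_0$ throughout $\mathrm{supp}\,d^{\pi^*}(\cdot\mid x)$ for every $x$, that support would be closed under the shared dynamics, forcing $d^{\pi_0}(\cdot\mid x)=d^{\pi^*}(\cdot\mid x)$ for all $x$ and hence making $\pi_0$ and $\pi^*$ behaviorally identical — contradicting $\pi_0\neq\pi^*$ under the convention that policies are identified by their induced behavior. (A clean sufficient condition that sidesteps the issue entirely is full support of $\nu(\cdot\mid x)$, which forces $d^{\pi^*}(s\mid x)\ge(1-\gamma)\nu(s\mid x)>0$ and makes every disagreement visible.) Either way I obtain a reachable $(s',x')$ with $\pi^*\!\brk*{\pi_0(s',x')\mid s',x'}<1$, which strictly reduces the displayed expectation below $1$, so $v_{\M_0}(\pi^*)<v^*_{\M_0}$ and $\pi^*\notin\Pi^*_{\M_0}$, completing the plan.
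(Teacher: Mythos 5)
Your proof follows essentially the same route as the paper's: the first claim is the paper's Lemma~\ref{lemma: Pi0 equivalence} (transitivity of matching marginalized occupancies, using $\Ponline\equiv\Poffline$), and your distinguishing reward $r_0(s,a,x)=\mathbf{1}\{a=\pi_0(s,x)\}$ is exactly the construction in the paper's Lemma~\ref{lemma: unique optimal policy}. The genuine difference is at the final strict inequality, and there your treatment is actually the more careful one. The paper's Lemma~\ref{lemma: unique optimal policy} claims $\pi_0$ is the \emph{unique} optimal policy and justifies this by asserting, for any $\pi_1\neq\pi_0$, that $\expect*{s,x \sim d^{\pi_1}}{\pi_1(\pi_0(s,x)\mid s,x)}<1$ with no further argument; as you observe, this strictness fails if $\pi_1$ deviates from $\pi_0$ only at state--context pairs carrying zero occupancy (unreachable states, or contexts outside $\mathrm{Supp}(\Ponline)$), in which case $\pi_1$ also attains value $1$ and no reward whatsoever can separate the two policies. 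So the theorem's ``Moreover'' clause, read with $\neq$ as inequality of functions, is false in that edge case, and the paper's proof silently skips over it. Your patch --- showing that agreement of $\pi^*$ with $\pi_0$ on $\mathrm{supp}\,d^{\pi^*}(\cdot\mid x)$ for $\Ponline$-a.e.\ $x$ propagates through the shared dynamics to equal occupancies, hence behaviorally identical policies --- is correct, and it makes explicit the hypothesis the statement actually needs: either the convention that policies are identified by their induced behavior (on the support of $\Ponline$), or full support of $\nu(\cdot\mid x)$. In short, your proof is the paper's argument done rigorously; what it buys is a precise account of when the second claim holds, at the cost of the extra reachability argument the paper omits.
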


The above theorem shows that any policy in $\equivset{\pi^*}$ is a candidate optimal policy, yet without knowing the context the expert used, no policy in $\equivset{\pi^*}$ can be ruled out (as they all have identical marginalized stationary distributions). Hence, the imitation solution is uniquely defined by the set $\equivset{\pi^*}$. Such ambiguity can result in selection of a suboptimal or even catastrophic policy. Nevertheless, as we show in the following proposition, acting uniformly w.r.t. $\equivset{\pi^*}$ is better than the worst policy in the set, i.e., robust to the ambiguity set (see \Cref{appendix: missing proofs} for proof).

\begin{restatable}{proposition}{ambiguityselectthm}
\label{thm: ambiguity policy selection}
    Define the mean policy
    $
        \bar{\pi}(a|s,x) = 
        \frac{
            \sum_{\pi \in \equivset{\pi^*}} d^{\pi}(s,a,x)
        }
        {
            \sum_{\pi \in \equivset{\pi^*}}  \sum_{a'} d^{\pi}(s,a',x)
        },
    $
    and denote ${\alpha^* = \frac{\abs{\Pi^*_\M \bigcap \equivset{\pi^*}}}{\abs{\equivset{\pi^*}}} \in [0, 1]}$. Then,
    $
        v_\M(\bar{\pi})
        \geq
        \alpha^* v^* + 
        (1-\alpha^*) \min_{\pi \in \equivset{\pi^*}} v_{\M}(\pi).
    $
\end{restatable}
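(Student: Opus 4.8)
The plan is to use two facts: that the value $v_\M(\pi)$ is a \emph{linear} functional of the occupancy measure, and that $\bar\pi$ is engineered so that its occupancy measure is exactly the uniform average of the occupancy measures of the policies in $\equivset{\pi^*}$. Writing $d^\pi(s,a,x) = \Ponline(x)\,d^\pi(s,a\mid x)$ for the joint visitation distribution, the definitions of $v_\M$ and of $d^\pi(s,a\mid x)$ (with the $(1-\gamma)$ factor already absorbed into $d^\pi$) give the occupancy form $v_\M(\pi) = \sum_{s,a,x} d^\pi(s,a,x)\, r(s,a,x)$, which is linear in $d^\pi$. Let $N = \abs{\equivset{\pi^*}}$, assumed finite since $\equivset{\pi^*}\subseteq\Pi_{\text{det}}$.

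The crux is the following lemma: for every context $x$, $d^{\bar\pi}(s,a\mid x) = \frac{1}{N}\sum_{\pi\in\equivset{\pi^*}} d^\pi(s,a\mid x)$. First I would note that the factor $\Ponline(x)$ cancels pointwise in $x$ inside the ratio defining $\bar\pi$, so $\bar\pi(a\mid s,x) = \bar d(s,a\mid x)/\sum_{a'}\bar d(s,a'\mid x)$ where $\bar d(\cdot,\cdot\mid x) := \frac{1}{N}\sum_{\pi\in\equivset{\pi^*}} d^\pi(\cdot,\cdot\mid x)$. Since the (latent) context is held fixed throughout an episode, each $x$ induces an ordinary reward-free MDP with dynamics $P(\cdot\mid\cdot,\cdot,x)$ and initial distribution $\nu(\cdot\mid x)$, whose set of valid occupancy measures is convex, being cut out by the linear Bellman-flow constraints. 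Hence $\bar d(\cdot,\cdot\mid x)$, a convex combination of valid occupancy measures, is itself a valid occupancy measure, and by the standard bijection between occupancy measures and policies the unique policy realizing it is precisely $a\mapsto \bar d(s,a\mid x)/\sum_{a'}\bar d(s,a'\mid x) = \bar\pi(a\mid s,x)$, which proves the lemma. On states with $\sum_{a'}\bar d(s,a'\mid x)=0$ the value of $\bar\pi$ is immaterial since such states are never visited.

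Multiplying the lemma by $\Ponline(x)$ and substituting into the occupancy form yields $v_\M(\bar\pi) = \frac{1}{N}\sum_{\pi\in\equivset{\pi^*}} v_\M(\pi)$, so the mean policy's value is simply the average value over the ambiguity set. To finish, I would partition $\equivset{\pi^*}$ into $\Pi^*_\M\cap\equivset{\pi^*}$ and its complement. Each policy in the first block satisfies $v_\M(\pi)=v^*$ and the block has cardinality $\alpha^* N$; each policy in the second block satisfies $v_\M(\pi)\geq \min_{\pi'\in\equivset{\pi^*}} v_\M(\pi')$ and the block has cardinality $(1-\alpha^*)N$. Averaging these two contributions gives exactly $v_\M(\bar\pi)\geq \alpha^* v^* + (1-\alpha^*)\min_{\pi\in\equivset{\pi^*}} v_\M(\pi)$.

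The main obstacle is the lemma, specifically justifying that the per-context average of occupancy measures is realized by a single (now stochastic) policy and that this policy coincides with $\bar\pi$. This rests on the convexity of the occupancy-measure polytope and the occupancy-to-policy correspondence; the only genuine care needed is to perform the averaging at fixed $x$ before integrating against $\Ponline$ (the context is latent but constant within a trajectory, so cross-context mixing would be invalid) and to dispose of the unvisited states where the normalizer vanishes. Everything after the lemma is bookkeeping that reduces to counting via $\alpha^*$.
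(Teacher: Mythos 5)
Your proof is correct and follows essentially the same route as the paper's: establish $v_\M(\bar\pi) = \frac{1}{\abs{\equivset{\pi^*}}}\sum_{\pi\in\equivset{\pi^*}} v_\M(\pi)$ and then split the sum over $\Pi^*_\M\cap\equivset{\pi^*}$ and its complement, bounding the second block by the minimum. The only difference is that the paper dispatches the averaging identity with a bare appeal to ``linearity of expectation,'' whereas you actually prove the underlying fact --- that $d^{\bar\pi}(\cdot,\cdot\mid x)$ equals the per-context average of the occupancy measures, via convexity of the Bellman-flow polytope and the occupancy-to-policy bijection, with the normalizer-zero states handled --- thereby filling in the one step the paper's proof glosses over.
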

\begin{remark} Note that
$\bar{\pi}$ is generally not the average policy $\frac{1}{\abs{\equivset{\pi^*}}}\sum_{\pi \in \equivset{\pi^*}} \pi(a | s,x)$. 
\end{remark}
\begin{remark}
In an episodic setting, $\bar{\pi}$ can be estimated by uniformly sampling a policy $\pi \in \equivset{\pi^*}$ at the beginning of the episode, and playing it until the environment terminates. 
\end{remark}

We provide a practical algorithm in \Cref{appendix: confounded imitation} which calculates the ambiguity set $\equivset{\pi^*}$, and returns $\bar{\pi}$ of \Cref{thm: ambiguity policy selection}, with computational guarantees, showing that $\bar{\pi}$ is returned after exactly $\abs{\equivset{\pi^*}}$ iterations. In the next subsection we analyze a more challenging scenario, for which $\Ponline \neq \Poffline$. In this case, $\equivset{\pi^*}$ may not be sufficient for the imitation problem.

\subsection{With Covariate Shift:  $\Ponline \neq \Poffline$}
\label{section: covariate shift}

Next, we assume covariate shift exists between the online environment and the expert data, i.e., ${\Ponline \neq \Poffline}$. Particularly, without further assumptions on the extent of covariate shift, we show two extremes of the problem. In \Cref{thm: impossibility} we prove that whenever the \emph{transitions} are independent of the context, the data cannot in general be used for imitation. In contrast, in \Cref{thm: context free reward} we prove that, whenever the \emph{reward} is independent of the context, the imitation problem can be efficiently solved. 

\begin{figure}[t!]
\centering
\includegraphics[width=0.979\linewidth]{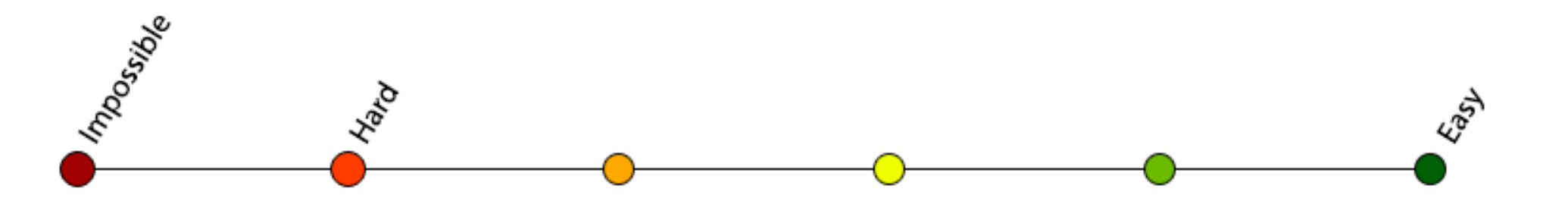}
\begin{tikzpicture}
\node[text width=3cm, align=center] at (-5.5,-1.5) {\centering \small Context-Independent \\ Transition (\Cref{thm: impossibility})};
\node[text width=3cm, align=center] at (-0,-1.5) {\centering \small Bounded Confounding \\ (\Cref{appendix: bounded confounding})};
\node[text width=3cm, align=center] at (5.4,-1.5) {\centering \small Context-Independent Reward \\ (\Cref{thm: context free reward})};
\end{tikzpicture}
\caption{\small A spectrum for the difficulty of confounded imitation with covariate shift.}
\label{fig: spectrum}
\end{figure}


Clearly, if $\text{Supp}(\Ponline) \not\subseteq \text{Supp}(\Poffline)$\footnote{For a distribution $\mathbb{P}$ we denote by $\text{Supp}(\mathbb{P})$ the support of $\mathbb{P}$.} then there exists ${x \in \text{Supp}(\Ponline)}$ for which $\pi^*$ is not identifiable from the expert data\footnote{We define non-identifiability in \Cref{def: non-identifiablity}. We use a similar notion of identifiability as in \citet{pearl2009causal}}. We therefore assume throughout that ${\text{Supp}(\Ponline) \subseteq \text{Supp}(\Poffline)}$. 
We begin by defining the set of non-identifiable policies as those that cannot be distinguished from their respective stationary distributions without information on $\Poffline$.
\begin{definition}
\label{def: non-identifiablity}
    We say that $\brk[c]*{\pi_i}_{i=1}^k$ are non-identifiable policies if there exist $\brk[c]*{\rho_i}_{i=1}^k$ such that ${d_{\rho_i}^{\pi_i}(s,a) = d_{\rho_j}^{\pi_j}(s,a)}$ for all $i \neq j$.
\end{definition}
Next, focusing on catastrophic policies (recall \Cref{eq: catastrophic policy}), we define catastrophic expert policies as those which could be either optimal or catastrophic under $\Ponline$ for different reward functions.
\begin{definition}
    We say that $\brk[c]*{\pi_i}_{i=1}^k$ are catastrophic expert policies if there exist $\brk[c]*{r_i}_{i=1}^k$ such that for all $i$, ${\pi_i \in \Pi^*_{\M_i}}$, and $\exists j \in [k], j \neq i$ such that $\pi_i \in \Pi^\dagger_{\M_j}$, where ${\M_j = (\s, \X, \A, P, r_j, \Ponline, \nu, \gamma)}$.
\end{definition}
Using the fact that both $\Poffline$ and $r$ are unknown, the following theorem shows that whenever $P(s'|s,a,x)$ is independent of $x$, one could find two policies which are non-identifiable, catastrophic expert policies (see \Cref{appendix: missing proofs} for proof). In other words, in the case of context-independent transitions, without further information on $\Poffline$ or $r$ the expert data is useless for imitation. Furthermore, attempting to imitate the policy using the expert data could result in a catastrophic policy.

\begin{restatable}{theorem}{impossibilitythm}[Catastrophic Imitation]
\label{thm: impossibility}
    Assume ${\abs{\X} \geq \abs{\A}}$, and ${P(s'|s,a,x) = P(s'|s,a,x')}$ for all $x, x' \in \X$. Then $\exists \pi_{e,1}, \pi_{e,2}$ s.t. $\brk[c]*{\pi_{e,1}, \pi_{e,2}}$ are non-identifiable, catastrophic expert policies.
\end{restatable}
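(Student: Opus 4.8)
The plan is to exhibit an explicit witnessing instance in the stated regime --- context-independent transitions with $\abs{\X}\ge\abs{\A}$ --- together with a pair of policies, a pair of context distributions (witnessing non-identifiability), and a pair of rewards (witnessing the catastrophic-expert property), all satisfied by the \emph{same} pair. The cleanest witness is an MDP whose recurrent behaviour collapses the state occupancy, e.g.\ a single recurrent state $s$ with self-loops (trivially context-independent), so that $d^{\pi}_{\rho}(s,a) = \sum_x \rho(x)\,\pi(a\mid s,x)$. The reason to work at such a state is that the discounted occupancy becomes \emph{linear} in the policy's action probabilities, which is exactly what lets two genuinely different context-dependent policies produce identical marginals. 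More generally, context-independence of $P$ guarantees that the per-context occupancy $d^{\pi}(\cdot\mid x)$ depends on $\pi$ only through its slice $\pi(\cdot\mid\cdot,x)$, so relabelling contexts by a permutation and correspondingly permuting the context distribution leaves the marginal occupancy invariant; this is the structural fact I would use to get non-identifiability in general.

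First I would fix the combinatorics. Index $\A=\{a_1,\dots,a_m\}$ and, using $\abs{\X}\ge\abs{\A}=m$, pick $m$ distinct contexts $x_1,\dots,x_m$ (any remaining contexts receive zero mass). Define two deterministic context-dependent policies by a ``diagonal'' and a ``shifted'' assignment, $\pi_{e,1}(a_j\mid s,x_j)=1$ and $\pi_{e,2}(a_{j+1}\mid s,x_j)=1$, indices mod $m$ (any fixed derangement works). Taking $\rho_1=\rho_2$ uniform on $\{x_1,\dots,x_m\}$, each action is played in exactly one context with equal weight, so both marginalized stationary distributions equal the uniform law over actions at $s$; hence $d^{\pi_{e,1}}_{\rho_1}=d^{\pi_{e,2}}_{\rho_2}$ and $\{\pi_{e,1},\pi_{e,2}\}$ are non-identifiable.

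Next I would design the rewards to flip optimality and catastrophe. Set $r_1(s,a_j,x_j)=1$ and $r_1(s,a,x_j)=0$ for $a\neq a_j$, and define $r_2$ the same way but aligned to $\pi_{e,2}$, i.e.\ $r_2(s,a_{j+1},x_j)=1$; take $\Ponline$ uniform on $\{x_1,\dots,x_m\}$. Since rewards lie in $[0,1]$ the value lies in $[0,1]$, and $\pi_{e,1}$ attains value $1$ under $r_1$ (so $\pi_{e,1}\in\Pi^*_{\M_1}$), whereas $\pi_{e,2}$ plays $a_{j+1}$ in $x_j$, which $r_1$ never rewards, attaining value $0$. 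Because $r_1\ge 0$ and this value $0$ is attained, $\min_\pi v_{\M_1}(\pi)=0$ and therefore $\pi_{e,2}\in\Pi^\dagger_{\M_1}$; the symmetric computation gives $\pi_{e,2}\in\Pi^*_{\M_2}$ and $\pi_{e,1}\in\Pi^\dagger_{\M_2}$. Thus the same pair is a set of catastrophic expert policies, and both requirements of the theorem hold. This is precisely where $\abs{\X}\ge\abs{\A}$ is used: it supplies one ``signature'' context per action, in which that action is uniquely correct, so a derangement makes each policy optimal under its own reward and everywhere wrong under the other's.

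The hard part will be making non-identifiability and the optimal/catastrophic swap hold for the \emph{same} pair, since the two pull in opposite directions: matching marginals wants the policies indistinguishable, while the reward swap wants them to disagree in every context. The derangement structure reconciles these --- it keeps the action-marginals identical (so the observable distributions coincide) while forcing per-context disagreement. The one technical point I would treat carefully is the occupancy matching itself: at a general multi-state MDP $d^{\pi}(\cdot\mid x)$ is nonlinear in $\pi$, so I would either restrict to the single-recurrent-state witness where the map is linear, or invoke the context-relabelling symmetry above (valid once $\nu(\cdot\mid x)$ is taken context-independent) to transport occupancies between contexts. I would also double-check that the catastrophic value is genuinely the arg-min rather than merely small, which holds since $r_i\ge 0$ and the witnessing policy achieves exactly $0$.
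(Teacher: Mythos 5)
Your proposal is correct and is essentially the paper's own proof: the paper uses the same diagonal policy ($\pi_1(x_i)=a_i$) versus cyclic-shift policy ($\pi_2(x_i)=a_{i+1 \bmod k}$), matches the marginalized occupancies via context distributions induced by a target action law $d^*$ (your uniform choice is exactly the special case $d^*=\mathrm{Unif}$, giving $\rho_1=\rho_2$), and uses the same pair of indicator rewards so that each policy attains maximal value under its own reward and value zero---hence minimal, since rewards are nonnegative---under the other's. The only differences are cosmetic: the paper keeps the given $\Ponline$ arbitrary (noting the degenerate case where its support misses the chosen contexts, in which all claims hold trivially) rather than choosing it uniform, and its ``general'' proof is likewise written entirely state-free, i.e.\ it implicitly works with the same singleton-state witness you make explicit.
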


While \Cref{thm: impossibility} shows the impossibility of imitation for context-free transitions, whenever the reward is independent of the context, the imitation problem becomes feasible. In fact, as we show in the following theorem, for context-free rewards, any policy in $\equivset{\pi^*}$ is an optimal policy.

\begin{restatable}{theorem}{contextfreereward}[Sufficiency of Context-Free Reward]
\label{thm: context free reward}
    Assume ${\text{Supp}(\Ponline) \subseteq \text{Supp}(\Poffline)}$ and ${r(s,a,x) = r(s,a,x')}$ for all $x, x' \in \X$. Then ${\equivset{\pi^*} \subseteq \Pi^*_\M}$.
\end{restatable}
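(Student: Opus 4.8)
The plan is to use that a context-free reward reduces the value of any policy to a \emph{linear functional of its marginalized occupancy measure}, and then to show that matching the expert's offline occupancy pins this functional at its maximum.

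First I would establish the occupancy identity. Since $r(s,a,x)=r(s,a)$ is independent of $x$, the discounted return depends on a trajectory only through the visited $(s,a)$ pairs, so for any $\pi$ one gets $v_\M(\pi)=\sum_{s,a}\donline^\pi(s,a)\,r(s,a)$, and analogously $\sum_{s,a}\doffline^{\pi}(s,a)\,r(s,a)=\E_{x\sim\Poffline}\left[\sum_{s,a}d^{\pi}(s,a\mid x)r(s,a)\right]$. Fixing $\pi'\in\equivset{\pi^*}$, \Cref{def: ambiguity set} gives $\donline^{\pi'}=\doffline^{\pi^*}$, and substituting into the first identity yields
\[
v_\M(\pi')=\sum_{s,a}\doffline^{\pi^*}(s,a)\,r(s,a).
\]
Hence the theorem collapses to the single scalar equality $\sum_{s,a}\doffline^{\pi^*}(s,a)r(s,a)=v^*$. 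The bound $\le v^*$ is immediate since $\pi'\in\Pi$ and $v^*$ is the optimum, so the whole content is the reverse inequality.

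Next I would bring in optimality of $\pi^*$. With a context-free reward an online-optimal policy is optimal \emph{per context}: writing $V^*(x)=\max_{\pi}\sum_{s,a}d^{\pi}(s,a\mid x)r(s,a)$, optimality of $\pi^*$ forces $\sum_{s,a}d^{\pi^*}(s,a\mid x)r(s,a)=V^*(x)$ for every $x\in\text{Supp}(\Ponline)$, and $v^*=\E_{x\sim\Ponline}\left[V^*(x)\right]$. I would then use the per-context advantage decomposition $\sum_{s,a}d^{\pi}(s,a\mid x)r(s,a)=V^*(x)+\sum_{s,a}d^{\pi}(s,a\mid x)A^*_x(s,a)$, where the optimal advantage satisfies $A^*_x\le 0$ with equality exactly on per-context-optimal actions, to express both $v_\M(\pi')$ and the expert's offline value as an optimal term corrected by nonpositive occupancy-weighted advantages.

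The hard part will be the reconciliation across the covariate shift. The advantage functions $A^*_x$ are genuinely context-dependent, whereas the constraint from $\pi'\in\equivset{\pi^*}$, namely $\sum_x\Ponline(x)d^{\pi'}(\cdot\mid x)=\sum_x\Poffline(x)d^{\pi^*}(\cdot\mid x)$, only controls the context-aggregated occupancy; one therefore cannot apply a single certificate to both sides. The crux is to argue that matching the \emph{full} state-action occupancy — not merely the reward it collects — is rigid enough to force the online corrective terms to vanish, i.e.\ that $\pi'$ can place probability only on per-context-optimal actions. This is exactly where $\text{Supp}(\Ponline)\subseteq\text{Supp}(\Poffline)$ is needed, ensuring the data covers every online context so that the matched occupancy certifies optimal online behavior rather than merely reproducing the expert's collected reward. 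I would attack this by pairing the aggregate occupancy identity with the context-wise optimal value functions and showing that any online suboptimality would create visitation mass the expert's optimal offline occupancy cannot account for; I expect this transfer from an aggregate identity to per-context optimality under shift to be the main obstacle.
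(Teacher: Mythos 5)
Your opening reduction coincides exactly with the paper's: for $\pi'\in\equivset{\pi^*}$, context-freeness of the reward gives
\begin{align*}
v_\M(\pi') \;=\; \sum_{s,a}\donline^{\pi'}(s,a)\,r(s,a) \;=\; \sum_{s,a}\doffline^{\pi^*}(s,a)\,r(s,a),
\end{align*}
so the theorem reduces to showing this last scalar equals $v^*_\M$. But at precisely that point your proposal stops being a proof: you label the transfer from this aggregate identity to online optimality "the main obstacle" and only describe how you would attack it. Since the displayed identity is two lines of algebra, all of the theorem's content lives in the step you leave open; this is a genuine gap. The paper crosses it with a tool your sketch lacks, namely \Cref{lemma: optimal policy for every x} (\Cref{appendix: missing proofs}): because policies may condition on $x$, a policy maximizing the $\rho$-averaged value for \emph{any} context distribution $\rho$ must be per-context optimal on $\text{Supp}(\rho)$ (otherwise modify it on the offending context alone and improve), and consequently $\arg\max_\pi\E_{x\sim\Poffline}\left[\cdot\right]\subseteq\arg\max_\pi\E_{x\sim\Ponline}\left[\cdot\right]$ once $\text{Supp}(\Ponline)\subseteq\text{Supp}(\Poffline)$. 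The paper then argues that $\pi'$, attaining the $\Poffline$-optimal value, lies in $\arg\max_\pi\E_{x\sim\Poffline}\left[\cdot\right]$, and this containment converts $\Poffline$-optimality into membership in $\Pi^*_\M$. That is a policy-level argument about sets of optimal context-dependent policies, not the occupancy-rigidity argument you envision.

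Two further points show the step you deferred is not routine. First, a support subtlety your sketch gets backwards: you invoke per-context optimality of $\pi^*$ on $\text{Supp}(\Ponline)$, but the quantity you must evaluate, $\E_{x\sim\Poffline}\left[\sum_{s,a}d^{\pi^*}(s,a\mid x)r(s,a)\right]$, is governed by the expert's behavior on $\text{Supp}(\Poffline)$, which may be strictly larger; that is exactly where the shift bites. Second, the rigidity argument you outline cannot be completed from the ingredients you list (aggregate occupancy match, expert optimality, support inclusion) alone. Consider two contexts, where in $x_1$ an action $g$ reaches a rewarding absorbing state and in $x_2$ no action does, with $\Ponline$ uniform and $\Poffline=(1/4,3/4)$, the expert playing $g$ in $x_1$ and $\ell$ in $x_2$: the \emph{stochastic} policy that plays $g$ with probability $1/2$ in $x_1$ and $\ell$ in $x_2$ matches $\doffline^{\pi^*}$ exactly, yet earns half the online optimum. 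Hence any correct completion must exploit the restriction of $\equivset{\pi^*}$ to deterministic policies in \Cref{def: ambiguity set} (or some substitute for it), which your sketch never uses — and, it is worth noting, the paper's own bridging of this same step is quite terse, so your instinct about where the real difficulty sits is sound even though you did not resolve it.
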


Theorems~\ref{thm: impossibility} and~\ref{thm: context free reward} suggest that the hardness of the imitation problem under covariate shift lies on a wide spectrum (as depicted in \Cref{fig: spectrum}). While dependence of the transition $P(s'|s,a,x)$ on $x$ provides us with information to identify $x$ in the expert data, the dependence of the reward $r(s,a,x)$ on $x$ increases the degree of confounding in the imitation problem. Both of these results are concerned with arbitrary confounding. For the interested reader, we further analyze the case of bounded confounding in \Cref{appendix: bounded confounding}. We also demonstrate the effect of bounded confounding in \Cref{section: experiments}. 

In the following section, we show that, while arbitrary confounding may result in catastrophic results for the imitation learning problem, when coupled with reward, one can still make use of the expert data.

\begin{algorithm}[t!]
\caption{{RL using Expert Data with Unobserved Confounders (Follow the Leader)}}
\label{algo: ftrl}
\begin{algorithmic}[1]
\STATE {\bf input:} Expert data with missing context $\D^*$, ${\lambda > 0}$, policy optimization algorithm \texttt{ALG-RL}
\STATE {\bf init:} Policy $\pi^0$
\FOR{$k = 1, \hdots$}
    \STATE $\rho_s \gets \arg\min_\rho D_{KL}(\donline^{\pi^{k-1}}(s,a) || d_\rho^{\pi^*}(s,a))$
    \STATE $g^k \gets \frac{1}{k}\brk*{g^{k-1} + \expect*{s,a \sim \donline^{\pi^{k-1}}}{\frac{1}{d_{\rho_s}^{\pi^*}(s,a)}}}$ (FTL Cost Player)
    \STATE $\pi^k \gets \text{\texttt{ALG-RL}}(r(s,a,x) - \lambda g^k(s,a))$
\ENDFOR
\end{algorithmic}
\end{algorithm}

\section{Using Expert Data with Unobserved Confounders for RL}
\label{section: rl}

In the previous section we showed sufficient conditions under which imitation is possible, with and without covariate shift. When covariate shift is present, but unknown, the imitation learning problem may be hard, or even impossible (see \Cref{thm: impossibility}, catastrophic imitation). We ask, had we had access to the reward function, would the expert data be useful under arbitrary covariate shift? In this section we show that expert data with unobserved confounders can be used to converge to an expert policy, even when arbitary covariate shift is present. In our experiments (\Cref{section: experiments}) we empirically show that using our method can also improve overall performance.

We view the confounded expert data as side information to the RL problem. Specifically, we assume access to the true reward signal in the online environment and wish to leverage the offline expert data to aid the agent in converging to an optimal policy. To do this, we define an optimization problem that maximizes the cumulative reward, while minimizing an $f$-divergence (e.g., KL-divergence, TV-distance, $\chi^2$-divergence) of stationary distributions in $\equivset{\pi^*}$,
\begin{align}
    \max_{\pi \in \Pi} \expect*{x \sim \Ponline, s,a \sim d^\pi(s,a | x)}{r(s,a,x)} - \lambda D_f(\donline^{\pi}(s,a) || \doffline^{\pi^*}(s,a)).
    \label{eq: orig problem}
    \tag{P1}
\end{align}
Here, $\lambda > 0$ and $D_f$ is the $f$-divergence, where $f$ is a convex function $f: (0, \infty) \mapsto \R$. The solution to Problem~(\ref{eq: orig problem}) is an optimal policy ${\pi^* \in \Pi^*_\M}$ as long as $\Ponline \equiv \Poffline$. Rewriting $D_f$ using its variational form (see \Cref{appendix: distribution matching} for background on the variational form of $f$-divergences), we get the following equivalent optimization problem, motivated by \citet{nachum2019algaedice}:
\begin{align}
    \max_{\pi\in \Pi} \min_{g: \s \times \A \mapsto \R} 
     ~~\expect*{x \sim \Ponline, s,a \sim d^\pi(s,a | x)}{r(s,a,x) + \lambda g(s,a)} 
    - \lambda \expect*{s,a \sim \doffline^{\pi^*}(s,a)}{f^*(g(s,a))},
    \label{eq: max min problem}
    \tag{P1b}
\end{align}
where $f^*$ is the convex conjugate of $f$, i.e., ${f^*(y) = \sup_x xy - f(y)}$. 


Unfortunately, when covariate shift exists (i.e., $\Ponline \neq \Poffline$), Problems~(\ref{eq: orig problem}) and~(\ref{eq: max min problem}) are not ensured to converge to an optimal policy (\Cref{thm: impossibility}). Instead, we propose to reformulate Problem~(\ref{eq: max min problem}) using a distribution $\rho_s$ which minimizes the $f$-divergence, as follows,
\begin{align}
    \max_{\pi\in \Pi} 
    \min_{\substack{g: \s \times \A \mapsto \R \\ \rho_s \in \mathcal{B}\brk*{\X}}} 
     ~~\expect*{x \sim \Ponline, s,a \sim d^\pi(s,a | x)}{r(s,a,x) + \lambda g(s,a)} 
    - \lambda \expect*{s,a \sim d_{\rho_s}^{\pi^*}(s,a)}{f^*(g(s,a))}.
    \label{eq: max min min problem}
    \tag{P2}
\end{align}
Here, $\mathcal{B}\brk*{\X}$ denotes the set of probability measures on the Borel sets of $\X$, and
${
    d_{\rho_s}^{\pi^*}(s,a) = \expect*{x \sim \rho_s}{d^{\pi^*}(s,a \mid x)}.}
$
Indeed, whenever ${\text{Supp}(\Ponline) \subseteq \text{Supp}(\Poffline)}$, we have that ${\brk*{\pi, \rho_s} = \brk*{\pi^*, \Ponline}}$ is a solution to Problem~(\ref{eq: max min min problem}). That is, unlike Problems~(\ref{eq: orig problem}) and~(\ref{eq: max min problem}), Problem~(\ref{eq: max min min problem}) can achieve an optimal solution to the RL problem which still uses the expert data.

\begin{algorithm}[t!]
\caption{{RL using Expert Data with Unobserved Confounders (Online Gradient Descent)}}
\label{algo: ogd}
\begin{algorithmic}[1]
\STATE {\bf input:} Expert data with missing context, ${\lambda, B, N > 0}$, policy optimization algorithm \texttt{ALG-RL}
\STATE {\bf init:} Policy $\pi^0$, bonus reward network $g_\theta$ 
\FOR{$k = 1, \hdots$}
    \STATE $\rho_s \gets \arg\min_\rho D_f(\donline^{\pi_{k-1}}(s,a) || d_\rho^{\pi^*}(s,a))$ 
    \FOR{$e = 1, \hdots N$}
        \STATE Sample batch $\brk[c]*{s_i, a_i}_{i=1}^B \sim \donline^{\pi_{k-1}}(s,a)$
        \STATE Sample batch $\brk[c]*{s_i^e, a_i^e}_{i=1}^B \sim d_{\rho_s}^{\pi^*}(s,a)$
        \STATE Update $g_\theta$ according to
        $~
            \nabla_\theta L(\theta) 
            =
            \frac{1}{B}\sum_{i=1}^B\nabla_\theta \brk[s]*{f^*(g_\theta(s_i^e, a_i^e)) - g_\theta(s_i, a_i)}
       $
    \ENDFOR
    \STATE $\pi^k \gets \text{\texttt{ALG-RL}}(r(s,a,x) - \lambda g_\theta(s,a))$
\ENDFOR
\end{algorithmic}
\end{algorithm}

\paragraph{Corrective Trajectory Sampling (CTS).}

Solving Problem~(\ref{eq: max min min problem}) involves an expectation over an unknown distribution, $d^{\pi^*}_{\rho_s}(s,a)$. Fortunately, $d^{\pi^*}_{\rho_s}(s,a)$ can be equivalently written as an expectation over trajectories in $\D^*$, rather than expectation over unobserved contexts, as shown by the following proposition (see \Cref{appendix: missing proofs} for proof):

\begin{restatable}{proposition}{samplingprop}[Trajectory Sampling Equivalence]
\label{prop: sampling equivalence}
Let $\rho_s^*$ which minimizes Problem~(\ref{eq: max min min problem}) for some $\pi \in \Pi, g: \s \times \A \mapsto \R$, and assume ${\text{Supp}(\Ponline) \subseteq \text{Supp}(\Poffline)}$. Then, there exists $p^n \in \Delta_n$ such that
    $
        d^{\pi^*}_{\rho_s^*}(s,a) = \lim\limits_{n\to \infty} \expect*{i \sim p^n}{(1-\gamma) \sum_{t=0}^\infty \gamma^t \mathbf{1}\brk[c]*{(s_t^i, a_t^i) = (s, a)}}.
    $
\end{restatable}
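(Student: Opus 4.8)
The plan is to exploit the fact that, although the context of each expert trajectory is unobserved, it nonetheless exists: trajectory $i$ in $\D^*$ was generated by first drawing a latent context $x^i \sim \Poffline$ and then rolling out $\pi^*$ from $s_0^i \sim \nu(\cdot\mid x^i)$ under the $x^i$-transitions, independently across $i$. Writing $O^i(s,a) = (1-\gamma)\sum_{t=0}^\infty \gamma^t \mathbf{1}\brk[c]*{(s_t^i,a_t^i)=(s,a)}$ for the discounted empirical occupancy of trajectory $i$, the very definition of the conditional stationary distribution yields the unbiasedness identity $\expect*{}{O^i(s,a)\mid x^i} = d^{\pi^*}(s,a\mid x^i)$, and the pairs $(x^i,O^i)$ are i.i.d. The target $d^{\pi^*}_{\rho_s^*}(s,a) = \expect*{x\sim\rho_s^*}{d^{\pi^*}(s,a\mid x)}$ is therefore an average of $O$ over contexts drawn from $\rho_s^*$, whereas the data only supplies contexts drawn from $\Poffline$. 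The whole statement thus reduces to a change of measure from $\Poffline$ to $\rho_s^*$, realised by a suitable reweighting $p^n$ of the trajectories.

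Concretely, I would first argue that the minimiser can be taken to satisfy $\rho_s^*\ll\Poffline$, so that the Radon--Nikodym derivative $w(x)=\tfrac{d\rho_s^*}{d\Poffline}(x)$ exists; this is where $\text{Supp}(\Ponline)\subseteq\text{Supp}(\Poffline)$ enters, since contexts outside $\text{Supp}(\Poffline)$ are neither present in the data nor needed to represent any mixture matching $\donline^\pi$. I then define the self-normalised importance weights
\begin{align*}
    p^n_i = \frac{w(x^i)}{\sum_{j=1}^n w(x^j)} \in \Delta_n ,
\end{align*}
so that $\expect*{i\sim p^n}{O^i(s,a)} = \big(\tfrac1n\sum_i w(x^i)O^i(s,a)\big)\big/\big(\tfrac1n\sum_j w(x^j)\big)$. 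Applying the strong law of large numbers to numerator and denominator separately (both integrands are integrable, since $O^i\in[0,1]$ and $\expect*{x\sim\Poffline}{w(x)}=1$), the denominator tends a.s.\ to $1$, while the numerator, after conditioning each summand on its context and invoking the unbiasedness identity, tends a.s.\ to $\expect*{x\sim\Poffline}{w(x)\,d^{\pi^*}(s,a\mid x)}$. By the change of measure $w\,d\Poffline=d\rho_s^*$ this equals $\expect*{x\sim\rho_s^*}{d^{\pi^*}(s,a\mid x)} = d^{\pi^*}_{\rho_s^*}(s,a)$, and dividing gives the claimed limit.

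The main obstacle is the absolute-continuity step: one must rule out that $\rho_s^*$ places mass on contexts absent from $\Poffline$, since such contexts cannot be represented by any reweighting of $\D^*$. I would handle this by showing that the optimal value of the inner problem in Problem~(\ref{eq: max min min problem}) is unchanged when $\rho_s$ is constrained to $\rho_s\ll\Poffline$, and selecting $\rho_s^*$ within this class. Two minor points then remain to be tidied: the empirical occupancies are formed from finite, length-$H$ trajectories, so the identity $\expect*{}{O^i(s,a)\mid x^i}=d^{\pi^*}(s,a\mid x^i)$ carries a truncation error of order $\gamma^{H}$ that vanishes as $H\to\infty$ (taken jointly with $n\to\infty$); and the convergence delivered by the law of large numbers is almost sure, so the equality in the statement should be read as this limit.
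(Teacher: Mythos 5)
Your proof is correct and takes essentially the same route as the paper's: both express $d^{\pi^*}_{\rho_s^*}(s,a)$ as the expectation of the discounted empirical occupancy under a reweighted trajectory distribution, justify the reweighting by the support condition, and conclude via the law of large numbers. The only difference is explicitness --- where the paper merely asserts that some $p^n \in \Delta_n$ yields an unbiased estimator, you construct it concretely as self-normalized importance weights $w(x^i)/\sum_j w(x^j)$ with $w = d\rho_s^*/d\Poffline$ over the latent contexts, and you flag (and propose to patch) the absolute-continuity requirement $\rho_s^* \ll \Poffline$ that the paper's argument leaves implicit.
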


\Cref{prop: sampling equivalence} allows us to estimate the inner minimization problem over $\rho_s$ in Problem~(\ref{eq: max min min problem}) using trajectory samples. Particularly, we uniformly sample $k$ distributions $p^n_1, \hdots p^n_k$, where $p^n_j \in \Delta_n$, and then estimate
\begin{align}
\min_{\rho_s} D_f(\donline^{\pi} || d_{\rho_s}^{\pi^*})
\approx
\min_{j \in \brk[c]*{1, \hdots, k}}\brk[c]*{
    D_f
    \brk*{
        \donline^\pi(s,a) ~\Big|\Big|~
        \expect*{i \sim p^n_j}{\sum_{t=0}^\infty \gamma^t \mathbf{1}\brk[c]*{(s_t^i, a_t^i) = (s, a)}}
    }},
    \label{eq: sampling equivalence}
\end{align}
which can be estimated by using the variational form of $D_f$ (see \Cref{appendix: distribution matching}). We call this procedure Corrective Trajectory Sampling (CTS), as it uses complete trajectory samples to account for the unknown context distribution $\Poffline$.

\paragraph{Solving Problem~(\ref{eq: max min min problem}).} \Cref{algo: ftrl} provides an iterative procedure for solving the optimization problem in Problem~(\ref{eq: max min min problem}). It uses alternative updates of a cost player (line 5) and policy player (line 6). In line~5 the gradient of $D_{KL}$ w.r.t. $d^\pi$ is taken using a Follow the Leader (FTL) cost player to estimate the next bonus iterate. Finally, in line~6, an efficient, approximate policy optimization algorithm \texttt{ALG-RL} is executed using an augmented reward. The following theorem, provides convergence guarantees for \Cref{algo: ftrl} with an approximate best response RL-algorithm (see \Cref{appendix: missing proofs} for proof based on \citet{zahavy2021reward}).
\begin{restatable}{theorem}{samplecomplexity}
\label{thm: rl convergence}
    Let \texttt{ALG-RL} be an approximate best response player that solves the RL problem in iteration $k$ to accuracy $\epsilon_k = \frac{1}{\sqrt{k}}$. Then, \Cref{algo: ftrl} will converge to an $\epsilon$-optimal solution to Problem~(\ref{eq: max min min problem}) in  $\mathcal{O}\brk*{\frac{1}{\epsilon^4}}$ samples.
\end{restatable}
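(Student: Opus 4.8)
The plan is to read \Cref{algo: ftrl} as a no-regret dynamic for the convex--concave saddle-point problem~(\ref{eq: max min min problem}) and to invoke the reward-shaping game framework of \citet{zahavy2021reward}. I would view~(\ref{eq: max min min problem}) as a two-player zero-sum game: the \emph{policy player} controls $\pi$ --- equivalently its marginalized occupancy measure $\donline^\pi$, in which the objective is linear, hence concave --- while the \emph{cost player} controls $(g,\rho_s)$, and the inner optimization over the cost variable $g$ is convex via convexity of $f^*$ in the variational form of $D_f$. The support condition $\mathrm{Supp}(\Ponline)\subseteq\mathrm{Supp}(\Poffline)$ keeps $d_{\rho_s}^{\pi^*}$ bounded away from zero, so the cost iterates stay bounded. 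First I would fix, for the current policy, the minimizing $\rho_s$ (line~4) and fold it into the cost-player loss, reducing each round to the standard $\max_\pi\min_g$ structure in which the line~5 Follow-the-Leader update is exactly online play of the cost player and \texttt{ALG-RL} is the (approximate) best response of the policy player.

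Next I would bound the regret of each player over $K$ rounds. Follow-the-Leader on the bounded, convex cost losses incurs sublinear regret $\mathrm{Reg}_{\mathrm{cost}}(K)=\mathcal{O}(\sqrt K)$, while the approximate best response of accuracy $\epsilon_k=1/\sqrt k$ contributes cumulative suboptimality $\sum_{k=1}^K \epsilon_k = \mathcal{O}(\sqrt K)$. The standard saddle-point argument --- averaged best-response values upper bound the game value while the no-regret sequence lower bounds it --- then shows the duality gap of the averaged iterate is at most $\bigl(\mathrm{Reg}_{\mathrm{cost}}(K)+\sum_k \epsilon_k\bigr)/K=\mathcal{O}(1/\sqrt K)$. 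Equating this with $\epsilon$ gives $K=\mathcal{O}(1/\epsilon^2)$ iterations to reach an $\epsilon$-optimal solution of~(\ref{eq: max min min problem}).

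To turn iterations into samples, note that each round replaces the expectations under $\donline^{\pi^{k-1}}$ and $d_{\rho_s}^{\pi^*}$ by empirical averages; by \Cref{prop: sampling equivalence} the latter is realized as an average over trajectories in $\D^*$, so both are ordinary Monte-Carlo estimates. A Hoeffding/Bernstein bound yields estimation error $\delta$ from $\mathcal{O}(1/\delta^2)$ samples, and this error enters the regret additively; choosing $\delta=\Theta(\epsilon)$ keeps the accumulated statistical error inside the $\epsilon$ budget using $\mathcal{O}(1/\epsilon^2)$ samples per round, so multiplying by $K=\mathcal{O}(1/\epsilon^2)$ rounds gives the claimed $\mathcal{O}(1/\epsilon^4)$ total.

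The main obstacle, and the point where the argument departs from \citet{zahavy2021reward}, is the extra inner minimization over $\rho_s$: the joint $\min_{g,\rho_s}$ is not jointly convex, so I cannot package $(g,\rho_s)$ as a single convex cost player. The fix I would pursue is to show that the separate step in line~4 amounts to evaluating the cost loss at its pointwise-optimal distribution, leaving the reduced loss convex in $g$ and hence amenable to the FTL regret bound; the delicate part is verifying that this pointwise minimization does not destabilize the FTL analysis --- concretely, that $\rho_s \mapsto d_{\rho_s}^{\pi^*}$ varies smoothly enough across consecutive policy iterates that the per-round losses remain stable and the regret stays sublinear.
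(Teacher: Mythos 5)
Your skeleton --- casting \Cref{algo: ftrl} as the convex-MDP game of \citet{zahavy2021reward} with an FTL cost player and an approximate-best-response policy player, composing the $\mathcal{O}(\sqrt{K})$ FTL regret with $\sum_{k=1}^K \epsilon_k = \mathcal{O}(\sqrt{K})$ to get $K=\mathcal{O}(1/\epsilon^2)$ iterations, and paying $\mathcal{O}(1/\epsilon^2)$ samples per iteration --- is the same route the paper takes, and your regret arithmetic is consistent with the claimed bound. But the obstacle you name in your final paragraph and explicitly leave unresolved is not a peripheral ``delicate part'': it is the only substantive step in the paper's proof, and the repair you sketch points at the wrong variable. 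What the framework of \citet{zahavy2021reward} (their Lemma~2) actually requires is that the objective of Problem~(\ref{eq: max min min problem}) be a \emph{convex function of the occupancy measure} $\donline^{\pi}$. The danger is that $h(P) := \min_{\rho_s} D_f(P \| d_{\rho_s}^{\pi^*})$ is a pointwise minimum of convex functions of $P$, which in general is \emph{not} convex; without restoring this convexity the game is not convex--concave, the averaged-iterate duality-gap bound you invoke has no foundation, and no argument about ``stability of the per-round FTL losses'' or smoothness of $\rho_s \mapsto d_{\rho_s}^{\pi^*}$ across iterates can substitute for it. Note also that your stated concern (joint convexity in $(g,\rho_s)$, convexity of the reduced loss in $g$) is about the cost player's variables, whereas the convexity that the meta-algorithm needs lives in the policy player's variable.

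The paper closes exactly this gap with a short strong-duality argument. Writing $h$ in variational form, $h(P) = \min_{\rho_s}\max_{g}\; \mathbb{E}_{P}[g] - \mathbb{E}_{x\sim\rho_s}\mathbb{E}_{d^{\pi^*}(\cdot\,|x)}[f^*(g)]$, the inner objective is concave in $g$, affine in $\rho_s$, and $\rho_s$ ranges over a compact convex set (the simplex), so the $\min$ and $\max$ may be exchanged. After the swap, $h(P) = \max_{g}\bigl\{\mathbb{E}_{P}[g] - \max_{\rho_s}\mathbb{E}_{x\sim\rho_s}\mathbb{E}_{d^{\pi^*}(\cdot\,|x)}[f^*(g)]\bigr\}$ is a pointwise supremum of functions affine in $P$, hence convex in $P$. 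This single observation turns Problem~(\ref{eq: max min min problem}) into a bona fide convex MDP: line~4 of \Cref{algo: ftrl} becomes evaluation of the (sub)gradient of $h$ at the minimizing $\rho_s$ (the paper computes this gradient explicitly for the KL case, matching line~5), and Lemma~2 of \citet{zahavy2021reward} with the FTL cost player and $\epsilon_k = 1/\sqrt{k}$ best responses then yields the $\mathcal{O}(1/\epsilon^4)$ sample complexity directly. So your proposal is incomplete precisely where you said it was; the missing idea is the minimax swap establishing convexity of the $\rho_s$-minimized divergence in the occupancy measure, after which no additional stability analysis is required.
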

Notice that, while \Cref{thm: rl convergence} shows \Cref{algo: ftrl} converges to an optimal policy, it does not determine whether the expert data improves overall learning efficiency. We leave this theoretical question for future work. Nevertheless, in the following section we conduct extensive experiments to show that such data can indeed improve overall performance on various tasks. 

A drawback of \Cref{algo: ftrl} is that it needs to estimate the stationary distributions. A practical implementation of \Cref{algo: ftrl} using online gradient descent (OGD) is provided in \Cref{algo: ogd} -- the algorithm does not require approximate estimates of the stationary distributions, but rather, only the ability to sample from them. Similar to \Cref{algo: ftrl}, we use CTS (see \Cref{eq: sampling equivalence}) to estimate $\rho_s$ in line~4 according to some $f$-divergence. Here, samples are drawn from the current policy as well as samples from $\D^*$ (with CTS). We write $D_f$ in its variational form, and use a neural network representation for $g_\theta$. We then use the aforementioned samples to minimize the $f$-divergence using OGD. Finally, the policy is updated using \texttt{ALG-RL} and an augmented reward.

\begin{figure}[t]
\centering
\includegraphics[width=0.44\linewidth]{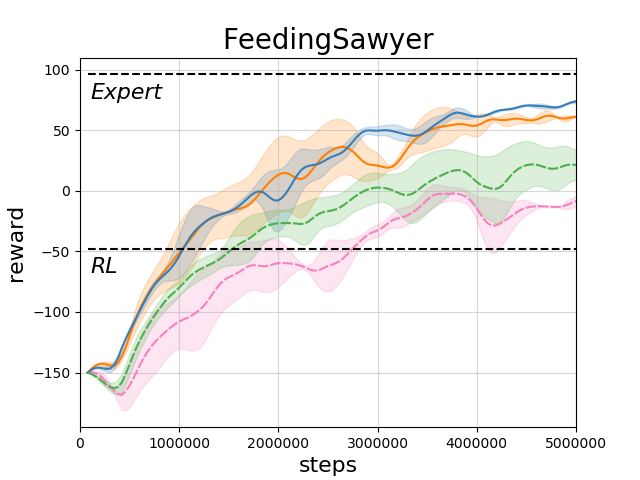}
\includegraphics[width=0.44\linewidth]{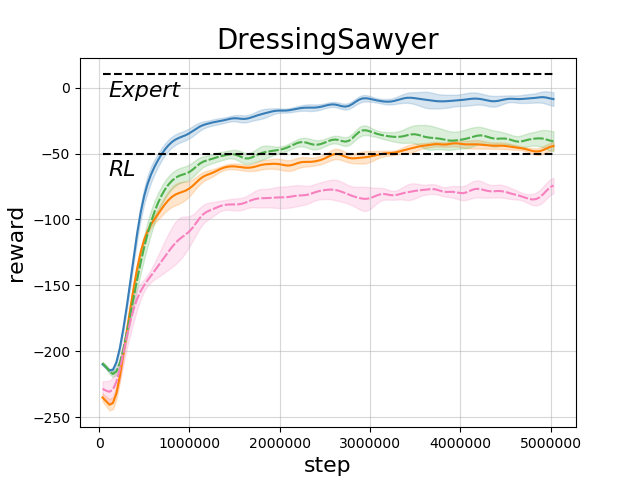}
\includegraphics[width=0.44\linewidth]{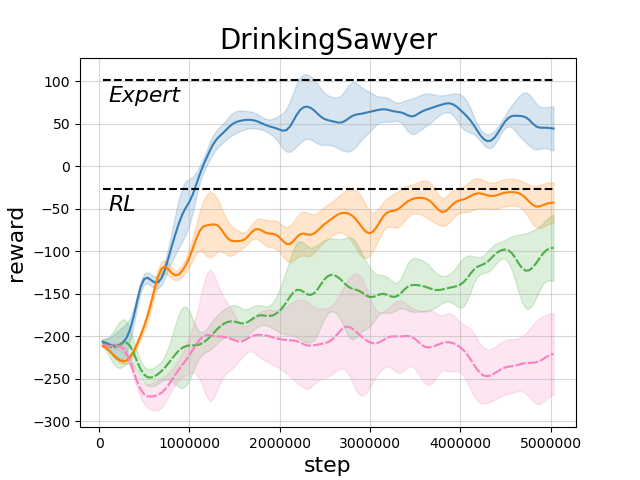}
\includegraphics[width=0.44\linewidth]{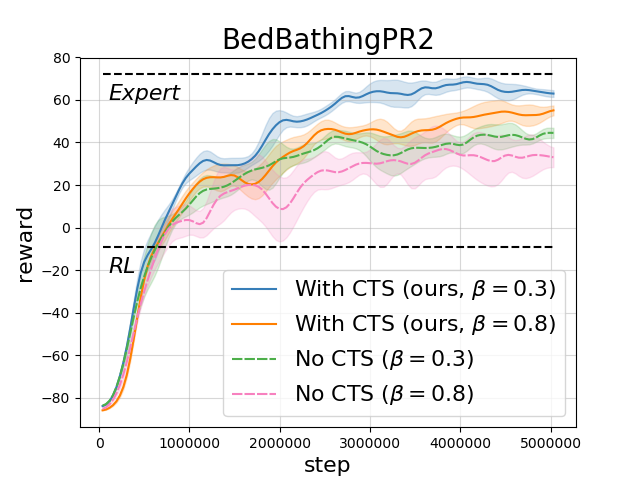}
\caption{\small Plots compare training curves of using CTS vs. normal sampling of expert data for small ($\beta = 0.3$) and large ($\beta = 0.8$) covariate shift bias in four assistive-healthcare tasks. Dashed black lines show expert and RL (without data) scores. Runs were averaged over 5 seeds. Legend is shared across all plots.}
\label{fig: assistive}
\end{figure}

\section{Experiments}
\label{section: experiments}

We tested our proposed approach for using expert data with hidden confounding in recommender-system and assistive-healthcare environments. For all our experiments we used $\chi^2$-divergence as our choice of $f$-divergence, as we found it to work best. Comparison to other divergences is provided in \Cref{fig: recsim} (left). We used PPO \citep{schulman2017proximal} implemented in RLlib \citep{liang2018rllib} for both the imitation as well as RL settings. We include specific choice of hyperparameters and an exhaustive overview of further implementation details in \Cref{appendix: implementation details}.

\textbf{Assistive Healthcare.} 
Consider the challenge of providing physical assistance to disabled persons. A recently proposed set of tasks for assistive-healthcare, simulating autonomous robots as versatile caregivers \citep{erickson2020assistive}. Each task has a unique goal, affected by both the physical world as well as the patient specific preferences and disabilities. 

We tested our algorithm on four tasks: feeding, dressing, bathing, and drinking. In these, we used the following features to define user context: gender, mass, radius, height, patient impairment, and patient preferences. The patient's mass, radius, and height distributions were dependent on gender. The patient's impairment was given by either limited movement, weakness, or tremor (with sporadic movement). Finally, the patient's preferences were affected by the velocity and pressure of touch forces applied by the robot. For the context distribution $\Ponline$ we used the default values as provided by the original environment. To enforce a distribution shift in the expert data, we shifted each distribution randomly with an additive factor $\beta \cdot \tilde{d_x}$, where $\beta \in [0, 1]$, and $ \tilde{d_x}$ was a random distribution chosen from a set of shifting distributions (see \Cref{appendix: implementation details}). Here $\beta$ corresponds to the covariate shift strength. The expert data was generated by a fixed policy trained using a dense reward function. A sparse reward signal was used for executing our experiments with the confounded expert data. For further details, we refer the reader to \Cref{appendix: implementation details}. 

\Cref{fig: assistive} depicts results for executing \Cref{algo: ogd} on four assistive-gym environments with various covariate shift strengths. As evident in most of the enviornments, covariate shift strongly affected overall performance. Particularly in the feeding, drinking, and dressing environments, the success of reaching the goal (i.e., spoon to mouth, cup to mouth, and sleeve to hand) was highly affected by the degree of covariate shift. This is due to the changing distribution of size, movement, and preferences of the patient, and thus of the goal. Nevertheless, in all environments, using the expert data (with and without CTS) was found to help induce better policies than executing the same RL algorithm without expert data. This suggests that expert data can assist in improving overall RL performance, yet correcting for covariate shift may significantly improve it in these domains.

\begin{figure}[t!]
\centering
\includegraphics[width=0.325\linewidth]{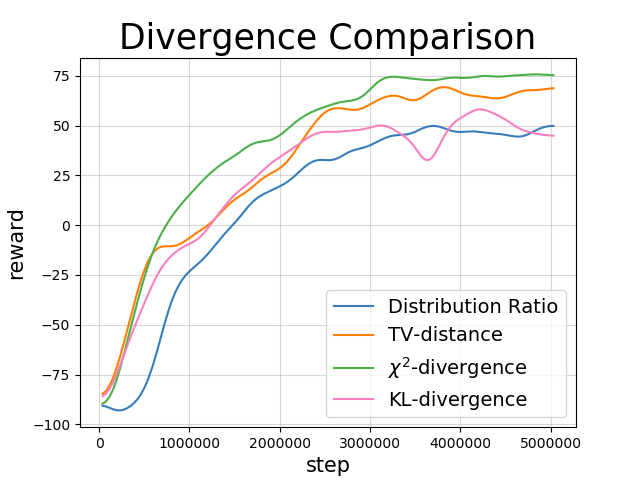}
\includegraphics[width=0.325\linewidth]{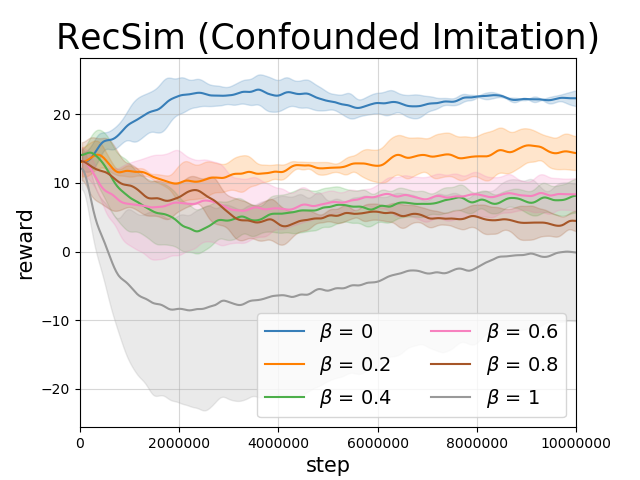}
\includegraphics[width=0.325\linewidth]{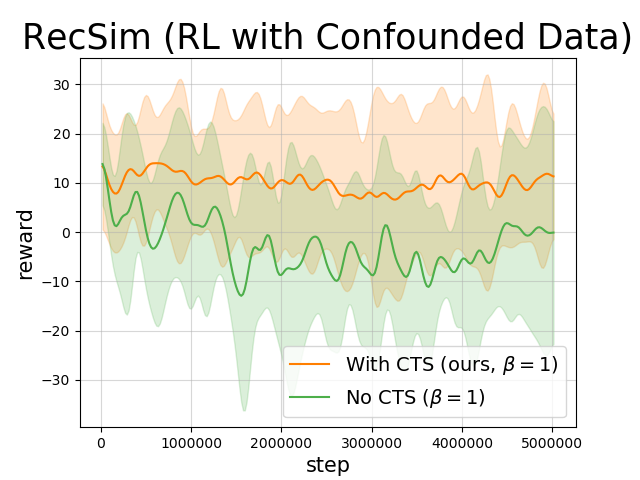}
\caption{\small Left plot shows comparison of different choices of $f$-divergences for pure imitation (without reward and without covariate shift) on the BedBathingPR2 environment. Middle plot depicts execution of imitation with hidden confounding (without reward) for different levels of covariate shift. Right plot compares our CTS correction on the RecSim environment with strong covariate shift bias. All runs were averaged over 5 seeds. }
\label{fig: recsim}
\end{figure}

\textbf{Recommender Systems.} 
In practical recommender systems, sequential interaction with users presents a great challenge for optimizing user long-term engagement and overall satisfaction \citep{ie2019recsim}. Leveraging expert data collected using, e.g., surveys to users, may greatly benefit future solutions. Because features are repeatedly added to these systems, full information in the data is rarely available. Here, we use the recently proposed RecSim Interest Evolution environment \citep{ie2019recsim}, simulating sequential user interaction with a slate-based recommender system. The environment consists of a document model for sampling documents, a user model for defining a distribution over user context features, and a user choice model, which defines the intent of the user based on observable document features and the user's sampled context (e.g., personality, satisfaction, interests, demographics, and other behavioral features such as session length or visit frequency). 

We used a slate of 10 documents and a user context of dimension 20. To test the severity of the implications of \Cref{thm: impossibility} in the confounded imitation setting, we used a user-model sampled from a Beta-distribution. Specifically, for the expert data the user context features $x = (x_0, \hdots, x_{19})$ were sampled from a Beta-distribution, where $x_i \sim Beta(\alpha_i, 4)$, and $\alpha_i = 1.5 + \frac{8.5}{19}i$. In contrast, the online environment features were sampled from a shifted Beta-distribution with ${\alpha_i = (1-\beta)\brk*{1.5 + \frac{8.5}{19}i} + \beta\brk*{10 - \frac{8.5}{19}i}}$, where $\beta \in [0,1]$ defined the shift strength. While the original environment used a uniform distribution to generate user contexts, the Beta-distribution let us analyze severe forms of covariate shifts, testing the limits of our results in \Cref{section: imitation,section: rl}. Still, we emphasize that both the original (uniform) distribution and the Beta distribution are not necessarily realistic user distributions. 

\Cref{fig: recsim}(a) depicts the effect of increased covariate shift on imitation in the RecSim environment with a dataset of 100 expert trajectories (generated by an optimal policy that had access to the full context). Without covariate shift ($\beta = 0$) an optimal score is achieved, and as $\beta$ increases, performance decreases. Particularly, as the mirrored distribution is reached ($\beta = 1$), a catastrophic policy is reached. While the imitator ``believes" to have reached an optimal policy, it has in fact reached a catastrophic one, as shown by the orange plot. Conversely, \Cref{fig: recsim}(b) depicts the benefit of using confounded expert data in the RL setting, i.e. when an online reward signal is available. Though strong confounding is present, the agent is capable of leveraging the data to improve overall learning performance.

\section{Related Work}
\label{section: related work}

\textbf{Imitation Learning.} The imitation learning problem has been extensively studied in both the fully offline \citep{pomerleau1989alvinn,bratko1995behavioural} as well as online setting \citep{ho2016generative,fu2018learning,kim2018imitation,brantley2019disagreement}. Specific to our work are GAIL \citep{ho2016generative}, AIRL \citep{fu2017learning}, and DICE \citep{kostrikov2019imitation}, which use distribution matching methods. Our work generalizes these settings to imitation with hidden confounders.

\textbf{Reinforcement Learning with Expert Data.} Much work has revolved on leveraging offline data for RL. Recently, offline RL \citep{levine2020offline} has shown great improvement over regular offline imitation techniques \citep{kumar2020conservative,kostrikov2021offline,tennenholtz2021gelato,fujimoto2021minimalist}. In the online RL setting, the combination of offline data to improve RL efficiency has shown great success \citep{nair2020accelerating}. KL-regularized techniques \citep{peng2019advantage,siegel2019keep} as well as DICE-based algorithms \citep{nachum2019algaedice} have also shown efficient utilization of offline data. Our work generalizes the latter to the confounded setting.

\textbf{Intersection of Causal Inference and Imitation Learning.} Closely related to our work is that of \citet{zhang2020causal}. There, the authors suggest a notion of imitability, showing when observational data can help identify a policy under some partially observed structural causal model. Our work provides an alternative perspective on the problem. In contrast to their work, we rely on concurrent imitation approaches (i.e. stationary distribution matching techniques) and importantly, allow access to the online environment. Furthermore, we provide guarantees and practical algorithms for both the imitation and RL settings. We refer the reader to \Cref{appendix: relation to causal inference} for a specific interpretation of our framework in CI terminology, from a perspective of stochastic interventions in a Structural Causal Model.

Another intersection with causal inference discusses the problem of causal confusion in imitation \citep{de2019causal}. Causal confusion is concerned with the problem of nuisances in \emph{observed} confounded data due to an unknown causal structure. These  ``causal misidentifications'' can lead to spurious correlations and catastrophic failures in generalization. In contrast, our work discusses the orthogonal problem of hidden confounders with possible covariate shift.

\textbf{Intersection of Causal Inference and Reinforcement Learning.} Previous work has analyzed the problem of optimal control from logged data with unboserved confounders \citep{lattimore2016causal}, as well as utilizing (non-expert) confounded data for online interactions \citep{tennenholtzbandits}. Much work has revolved around the reinforcement learning setup with access to (non-expert) confounded data \citep{zhang2019near,wang2020provably}. Other work has considered the problem of off-policy evaluation from confounded data \citep{tennenholtz2020off,oberst2019counterfactual,kallus2020confounding}. Our work is focused on leveraging \emph{expert} data with hidden confounders and possible covariate shift in both the imitation and the RL settings.

\section{Conclusion}

This work presented and analyzed the problem of using expert data with hidden confounders for both the imitation and RL settings. We showed that covariate shift of hidden confounders between the expert data and the online environment can result in learning catastrophic policies, rendering imitation learning hard, or even impossible (\Cref{thm: impossibility}). In addition, we showed that when a reward is provided, using the expert data is still possible under arbitrary hidden covariate shift (\Cref{thm: rl convergence}). We proposed new algorithms for tackling this problem using corrective trajectory sampling (CTS). Our empirical evaluation demonstrates our results and suggests that taking the possibility of unknown covariate shift into account may significantly improve overall performance.

\bibliography{main}
\bibliographystyle{unsrtnat}

\newpage
\onecolumn
\appendix
\section*{Appendix}

\begin{table*}[t!]
  \small\centering
  \hspace*{-1cm}
  \begin{tabular}{lcc}

    \toprule

    {\bfseries Distribution Matching} & {\bfseries Equivalent Representation} & {\bfseries Comments}\\
    \midrule\midrule[.1em]

    Distribution Ratio
      & $
        \sup_{g: \s \times \A \mapsto (0,1)}
        \expect*{s, a \sim \doffline^{\pi'}}{\log\brk*{g(s,a)}}
        +
        \expect*{s, a \sim \donline^\pi}{{\log\brk*{1 - g(s,a)}}}
        $
      & GAIL \\
      ~ & ~ & \citep{ho2016generative}  \\
    \midrule[.1em] 
    
    KL-divergence \qquad \qquad \qquad    
      & $
        \sup_{g: \s \times \A \mapsto \R}
        \expect*{s, a \sim \donline^\pi}{g(s, a)}
        -
        \log \expect*{s, a \sim \doffline^{\pi'}}{e^{g(s, a)}}
        $
      &  Donsker-Varadhan Representation \\
      ~ & ~ & \citep{kostrikov2019imitation}  \\
    \midrule[.1em] 

    $\chi^2$-divergence
      & $
        \sup_{g: \s \times \A \mapsto \R}
        2\expect*{s, a \sim \donline^\pi}{g(s, a)}
        -
        \expect*{s, a \sim \doffline^{\pi'}}{g^2(s, a)}
        $
      & Variational Representation \\
    ~ & ~ & of $f$-Divergence  \\
    \midrule[.1em] 
     
     TV Distance
      & $
        \sup_{|g| \leq \frac{1}{2}}
        \expect*{s, a \sim \donline^\pi}{g(s, a)}
        -
        \expect*{s, a \sim \doffline^{\pi'}}{g(s, a)}
        $
      & Variational Representation \\
      ~ & ~ & of $f$-Divergence  \\
    \midrule[.1em] 
    \bottomrule
  \end{tabular}
  \caption{ \label{table: imitation methods} Different distribution matching techniques and their equivalent representations. }
\end{table*}

\section{Background: Distribution Matching for Imitation Learning}
\label{appendix: distribution matching}

A common approach used in (non-confounded) imitation learning is matching the policy's stationary distribution $\donline^\pi$ to the offline target distribution $\doffline^{\pi^*}$. Consider a source distribution ${p \in \Delta_N}$ and target distribution ${q \in \Delta_N}$. GAIL \citep{ho2016generative} uses the distribution ratio objective $log\brk*{p / q}$, which can estimated using a GAN-like objective 
$
    D_{R}(p || q) 
    =
    \sup_{g: \Z \mapsto (0,1)}
        \expect*{p}{\log\brk*{g(z)}} 
        +
        \expect*{q}{{\log\brk*{1 - g(z)}}}
$,
to match the distribution $p$ to $q$.

This technique can be generalized to $f$-divergences \citep{csiszar2004information,liese2006divergences,kostrikov2019imitation,ke2020imitation}. Specifically, we wish to minimize a discrepancy measure from $p$ to $q$, namely $\min_{p \in \mathcal{K}} D(p || q)$. For a convex function ${f: [0: \infty) \mapsto \R}$, the $f$-divergence of $p$ from $q$ is defined by ${D_f(p || q) = \expect*{q}{f\brk*{\frac{p}{q}}}}$. DICE \citep{kostrikov2019imitation} uses the variational representation of the $f$-divergence,
\begin{align*}
    D_f(p || q)
    =
    \sup_{g: \Z \mapsto \R}
        \expect*{p}{g(z)}
        -
        \expect*{q}{f^*(g(z))},
\end{align*}
where $f^*$ is the Fenchel conjugate of $f$ defined by ${f^*(y) = \sup_x xy - f(y)}$. The convex conjugate has closed form solutions for the total variation distance, KL-divergence, $\chi^2$-divergence, Squared Hellinger distance, Le Cam distance, and Jensen-Shannon divergence. Using the variational representation of the $f$-divergence we can estimate $D_f$ using samples from $p$ and $q$. \Cref{table: imitation methods} presents examples of various $f$-divergences and their respective dual formulation. We also add the distribution ratio for comparison to the table, though it is not an $f$-divergence.

\section{Confounded Imitation - Algorithm and Convergence Guarantees}
\label{appendix: confounded imitation}

\begin{figure}[t!]
\centering
\includegraphics[width=0.9\linewidth]{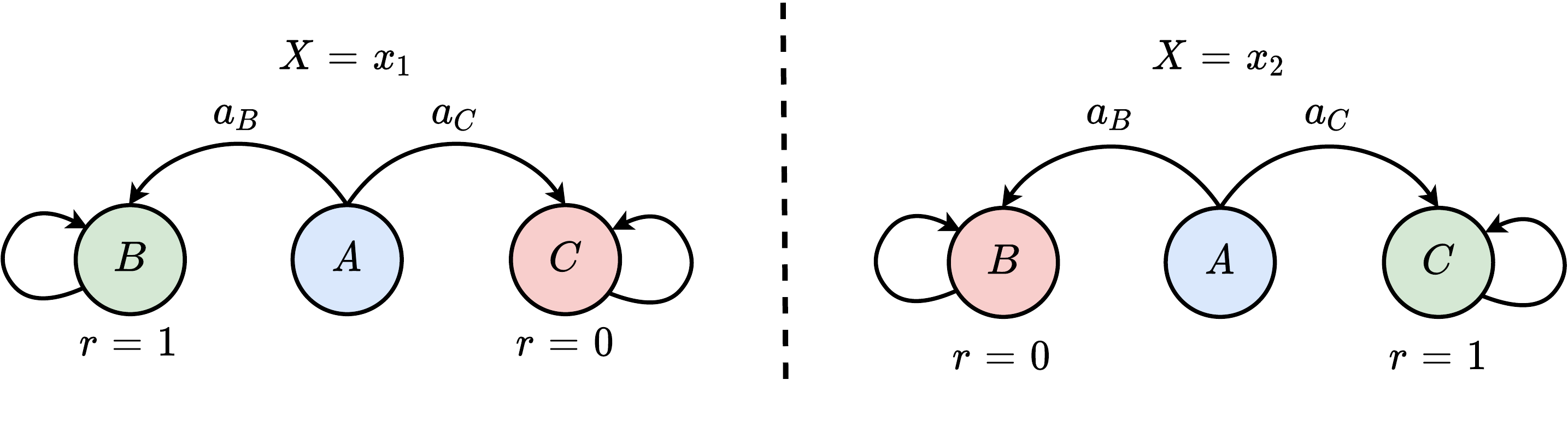}
\caption{A contextual MDP with state space $\s = \brk[c]*{A, B, C}$, action space ${\A = \brk[c]*{a_B, a_C}}$ and context space $\X = \brk[c]*{x_1, x_2}$. We assume $\nu(A | x) = 1$ for all $x \in \X$.  The actions $a_B, a_C$ transition the agent to states $B, C$, respectively, after which the agent receives a reward $r \in \brk[c]*{0, 1}$ depending on the context. We assume $B, C$ are sink states.
}
\label{fig:3_states}
\end{figure}

\subsection{A Toy Example}

To gain intuition, we start with a simple toy example. Consider the three-state example depicted in \Cref{fig:3_states}. Here, the environment initiates at state $A$ w.p.~1, after which the agent can choose to (deterministicaly) transition to state $B$ or $C$. The agent then receives a reward depending on the context. The optimal policy is given by
${
    \pi^*(a | s, x)
    =
    \mathbf{1}\brk[c]*{a = a_B, x = x_1} 
    +
    \mathbf{1}\brk[c]*{a = a_C, x = x_2}}
$
for ${s = A}$, and any action is optimal for ${s \neq A}$. Without loss of generality we assume $\pi^*(a_B | B, x) = \pi^*(a_C | C, x) = 1$. We turn to analyze the marginalized stationary distribution, which uniquely defines the set of optimal policies \citep{puterman2014markov}. Denoting $\Poffline(x_1) = \rho$, we have that
${
    \doffline^{\pi^*}(s,a)
    =
    \rho d^{\pi^*}(s,a | x_1) + (1-\rho)d^{\pi^*}(s, a | x_2).}
$
Then,
${
\doffline^{\pi^*}(s,a)
=
\brk*{1- \gamma}\mathbf{1}\brk[c]*{s = A} +
\rho \gamma\mathbf{1}\brk[c]*{s = B, a=a_B} +
(1-\rho) \gamma \mathbf{1}\brk[c]*{s = C, a=a_C}.}
$

\paragraph{No Covariate Shift.}  
Suppose ${\Ponline =  \Poffline}$, and $\rho = \frac{1}{2}$. Trivially ${\doffline^{\pi^*}(s,a) = \donline^{\pi^*}(s,a)}$. We define the (suboptimal) policy
\begin{align}
\label{eq: suboptimal policy}
    \pi_0(a | A, x)
    &=
    1 - \pi^*(a | A, x) \quad, a \in \A, x \in \X.
\end{align}
It can be verified that $\doffline^{\pi^*}(s,a) = \donline^{\pi_0}(s,a)$ still holds, yet $\pi_0$ is catastrophic (\Cref{eq: catastrophic policy}) with value zero. A question arises: can we show that $\pi_0$ is a suboptimal policy given access to the expert data (i.e., access to $\doffline^{\pi^*}(s,a)$) and a forward model $P(s' | s, a, x)$? 

Unfortunately, one cannot prove that $\pi_0$ is suboptimal. Informally, notice that $\pi_0$ is an optimal policy for an alternative reward function, 
${
    r_0(s, a, x) = 1 - r(s,a,x),
}$
yet is catastrophic w.r.t. the true reward $r$. Indeed, since $r$ is unknown and $\donline^{\pi_0}(s,a) = \donline^{\pi^*}(s,a)$, we cannot reject $r_0$ (i.e., we cannot conclude that $r_0$ is not the true reward). In other words, one cannot use the data to differentiate which of $\brk[c]*{\pi_0, \pi^*}$ is the optimal policy.

\paragraph{With Covariate Shift.} Next, assume $\Ponline \neq \Poffline$, and define $\pi_0$ as in \Cref{eq: suboptimal policy}. Let $\widetilde{\Poffline} = 1 - \Poffline$ and recall that $\Poffline(x_1) = \rho$. Then, we have that
\begin{align*}
    d^{\pi_0}_{\widetilde{\Poffline}}(s,a)
    &=
    (1-\rho) d^{\pi_0}(s,a | x_1) + \rho d^{\pi_0}(s, a | x_2)
    =
    (1-\rho)d^{\pi^*}(s, a | x_2) + \rho d^{\pi^*}(s,a | x_1)
    =
    \doffline^{\pi^*}(s,a).
\end{align*}
Indeed, the expert data is incapable of distinguishing $\pi_0$ and $\pi^*$, since $d^{\pi_0}_{\widetilde{\Poffline}} = \doffline^{\pi^*}$, and $\Poffline$ is unknown. Unfortunately, as we've shown previously, $\pi_0$ achieves value zero. Notice that, unlike the previous section, one cannot distinguish $\pi^*$ from the catastrophic policy $\pi_0$ for \emph{any choice} of $\Ponline$.

\begin{algorithm}[t!]
\caption{{Confounded Imitation}}
\label{algo: partial imitation no shift}
\begin{algorithmic}[1]
\STATE {\bf input:} Expert data with missing context $\D^* \sim \doffline^{\pi^*}$, $\lambda > 0$, sensitivity bound $\delta \geq 0$.
\STATE {\bf init:} $\Upsilon = \emptyset$
\FOR{$n = 1, \hdots$}
    \STATE Sample $u(s,a) \sim U[0, \delta], \forall s,a$
    \STATE $L^*(\pi; g_0) := \expect*{s, a \sim \donline^{\pi}(s,a)}{
         g_0(s,a)}
        -
        \expect*{s, a \sim \doffline^{\pi^*}(s,a) + u(s,a)}{ g_0(s,a)}$
    \STATE $L_i(\pi ; g_i) := \expect*{x \sim \Ponline, s, a \sim d^{\pi}(s, a | x)}{
         g_i(s,a,x)}
        -
        \expect*{x \sim \Ponline, s, a \sim d^{\pi_i}(s, a | x)}{ g_i(s,a,x)}\quad, i \geq 1$
    \STATE Compute $\pi_n$ by solving
    \begin{align}
        \min_{\pi \in \Pi_{\text{det}}}
        \max_{\abs{g_0} \leq \frac{1}{2}, \abs{g_i} \leq \frac{1}{2}}
        \brk[c]*{
        L^*(\pi; g_0(s,a)) 
        - 
        \lambda
        \min_i L_i(\pi; g_i(s,a,x))
        }
        \label{eq: confounded imitation}
    \end{align}
    \IF{$\pi_n \in \Upsilon$}
        \STATE Terminate and return 
        $
        \bar{\pi}(a|s,x) = 
        \frac{
            \sum_{i=1}^{n-1} d^{\pi_i}(s,a,x)
        }
        {
            \sum_{i=1}^{n-1} \sum_{a'} d^{\pi_i}(s,a',x)
        }
        $
    \ELSE
        \STATE $\Upsilon = \Upsilon \cup \brk[c]*{\pi_n}$
    \ENDIF
\ENDFOR
\end{algorithmic}
\end{algorithm}

\subsection{A Practical Algorithm} 

\Cref{algo: partial imitation no shift} describes our method for calculating the ambiguity set of \Cref{thm: ambiguity uniqueness}, and returns $\bar{\pi}$ of \Cref{thm: ambiguity policy selection}. At every iteration of the algorithm, we find a new policy in the set by minimizing the total variation distance (written in variational form) between $\donline^{\pi^*}(s,a)$ and $\donline^{\pi}(s,a)$, while regularizing it with the distance between $\pi$ and all previously collected $\pi_i \in \Upsilon$. \Cref{algo: partial imitation no shift} also uses a sensitivity parameter $\delta \geq 0$ (defined formally in \Cref{appendix: bounded confounding}) whenever bounded covariate shift is present. For this section we assume $\delta = 0$.

In practice, the functions $L^*$ and $L_i$ in lines~4 and~5 are estimated using samples from trajectories of $\pi, \pi_i$, and $\D^*$. We then solve the min-max problem of \Cref{eq: confounded imitation} using a parametric representations of $g_i$ and online gradient decent. The following proposition states that \Cref{algo: partial imitation no shift} indeed retrieves the set $\equivset{\pi^*}$.

\begin{restatable}{proposition}{algooneconvergence}
    Assume $\Poffline \equiv \Ponline$ and $\abs{\equivset{\pi^*}} < \infty$. Then there exists $\lambda^* > 0$ such that for any ${\lambda \in (0, \lambda^*)}$, \Cref{algo: partial imitation no shift} (with $\delta = 0$ sensitivity) will return $\bar{\pi}$ of \Cref{thm: ambiguity policy selection} after exactly $\abs{\equivset{\pi^*}}$ iterations.
\end{restatable}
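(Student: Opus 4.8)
The plan is to reduce the saddle problem in \Cref{eq: confounded imitation} to a transparent scalar minimization, and then run a gap argument that forces each iterate into the ambiguity set $\equivset{\pi^*}$. First I would evaluate the inner optimizations. With $\delta = 0$ the perturbation $u(s,a)$ is identically zero, so $\sup_{\abs{g_0}\le 1/2} L^*(\pi;g_0)$ is the variational (dual) form of the total-variation distance and equals $d_{\mathrm{TV}}\!\left(\donline^{\pi},\doffline^{\pi^*}\right)$; by \Cref{def: ambiguity set} this is zero exactly when $\pi \in \equivset{\pi^*}$ (note that since $\Poffline \equiv \Ponline$ and an optimal policy may be taken deterministic, $\pi^* \in \equivset{\pi^*}$, so the set is nonempty). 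Similarly, each $\sup_{\abs{g_i}\le 1/2} L_i(\pi;g_i)$ equals the total-variation distance $D(\pi,\pi_i)$ between the joint laws $\Ponline(x)\,d^{\pi}(s,a\mid x)$ and $\Ponline(x)\,d^{\pi_i}(s,a\mid x)$, which vanishes iff $\pi$ and $\pi_i$ induce the same context-conditional stationary distribution for $\Ponline$-a.e.\ $x$. Consequently \Cref{eq: confounded imitation} is equivalent, at iteration $n$, to minimizing over $\pi \in \Pi_{\text{det}}$ the scalar objective $F_n(\pi) := d_{\mathrm{TV}}\!\left(\donline^{\pi},\doffline^{\pi^*}\right) - \lambda \min_{i<n} D(\pi,\pi_i)$.

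Next I would pin down $\lambda^*$ through a separation argument. Working in the finite setting so that $\Pi_{\text{det}}$ is finite, define the expert-matching gap $\Delta := \min_{\pi \in \Pi_{\text{det}}\setminus\equivset{\pi^*}} d_{\mathrm{TV}}\!\left(\donline^{\pi},\doffline^{\pi^*}\right)$, which is strictly positive as a minimum of finitely many positive numbers, and the in-set separation $\kappa := \min_{\pi\neq\pi',\ \pi,\pi'\in\equivset{\pi^*}} D(\pi,\pi') > 0$ (here policies are identified through their context-conditional stationary distributions, which are all that enter $\bar{\pi}$). Setting $\lambda^* := \Delta$ and using $D(\cdot,\cdot)\le 1$, the regularizer can never exceed $\lambda < \Delta$, so every $\pi \notin \equivset{\pi^*}$ has $F_n(\pi) \ge \Delta - \lambda > 0$. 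By contrast, any uncollected $\pi \in \equivset{\pi^*}$ satisfies $F_n(\pi) = -\lambda\min_{i<n} D(\pi,\pi_i) \le -\lambda\kappa < 0$, while an already-collected policy gives $F_n = 0$. Hence whenever $\Upsilon \subsetneq \equivset{\pi^*}$ the minimizer $\pi_n$ is a \emph{new} element of $\equivset{\pi^*}$.

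I would then close by induction and termination. Each of the first $\abs{\equivset{\pi^*}}$ iterations adds a distinct member of $\equivset{\pi^*}$ to $\Upsilon$, so after $\abs{\equivset{\pi^*}}$ iterations $\Upsilon = \equivset{\pi^*}$. On the following pass no uncollected member remains: every $\pi \in \equivset{\pi^*}$ now has $\min_{i} D(\pi,\pi_i) = 0$ and hence $F_n(\pi) = 0$, strictly below the value $\ge \Delta-\lambda>0$ attained outside $\equivset{\pi^*}$, so the minimizer satisfies $\pi_n \in \equivset{\pi^*} = \Upsilon$ and the termination branch fires. Since the returned policy averages $d^{\pi_i}$ over $i = 1,\dots,\abs{\equivset{\pi^*}}$, i.e.\ over all of $\equivset{\pi^*}$, it coincides exactly with the mean policy $\bar{\pi}$ of \Cref{thm: ambiguity policy selection}, and precisely $\abs{\equivset{\pi^*}}$ policies are collected before termination.

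The main obstacle is the gap argument itself: I must secure a single threshold $\lambda^*$ that simultaneously (i) keeps the regularizer from ever overcoming the expert-matching error of out-of-set policies and (ii) remains large enough to distinguish uncollected in-set policies from collected ones. This is immediate when $\Pi_{\text{det}}$ is finite but would require an explicit separation/compactness hypothesis in the continuous case. A secondary delicate point is the treatment of deterministic policies that share a context-conditional stationary distribution (differing only on states unreachable under $\Ponline$); these must be identified consistently between the membership test ``$\pi_n \in \Upsilon$'' and the regularizer $D$, otherwise the exact count $\abs{\equivset{\pi^*}}$ could be off.
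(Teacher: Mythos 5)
Your proof is correct and follows essentially the same route as the paper's: reduce the min--max objective via its variational form to the scalar criterion $d_{TV}\brk*{\donline^{\pi},\doffline^{\pi^*}} - \lambda\min_i d_{TV}\brk*{\cdot,\cdot}$, pick $\lambda^*$ by a separation argument (your $\lambda^*=\Delta$ versus the paper's $\lambda^*_2/\lambda^*_1$, an immaterial difference since $d_{TV}\leq 1$), and induct to show each iteration produces a new member of $\equivset{\pi^*}$ until the set is exhausted and the termination test fires. Your explicit in-set separation constant $\kappa$ and the convention of identifying deterministic policies that share a joint stationary distribution make rigorous a step the paper states loosely (its final inequality ``$\leq \lambda_1^* < 0$'' is evidently a typo for a bound of the form $-\lambda\kappa<0$), but this is a refinement of the same argument rather than a different approach.
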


\begin{figure}[t]
\centering
\includegraphics[width=0.49\linewidth]{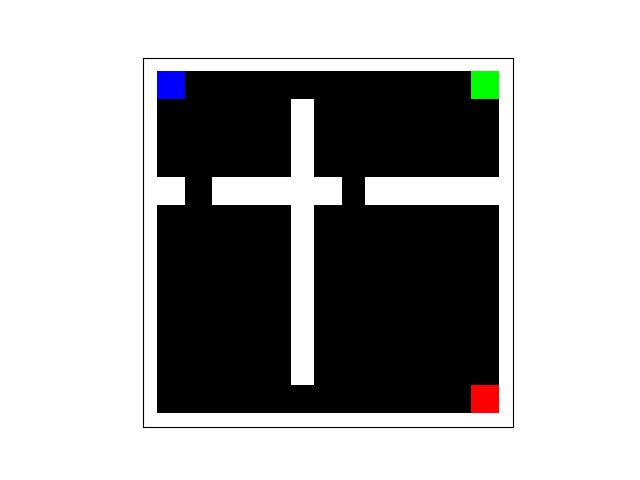}
\includegraphics[width=0.49\linewidth]{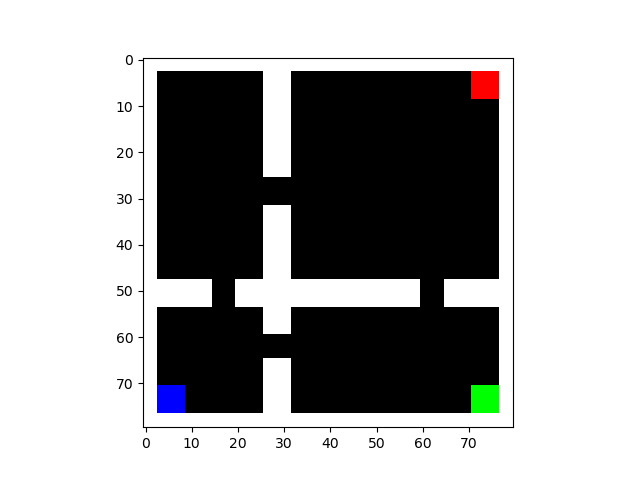}
\includegraphics[width=0.5\linewidth]{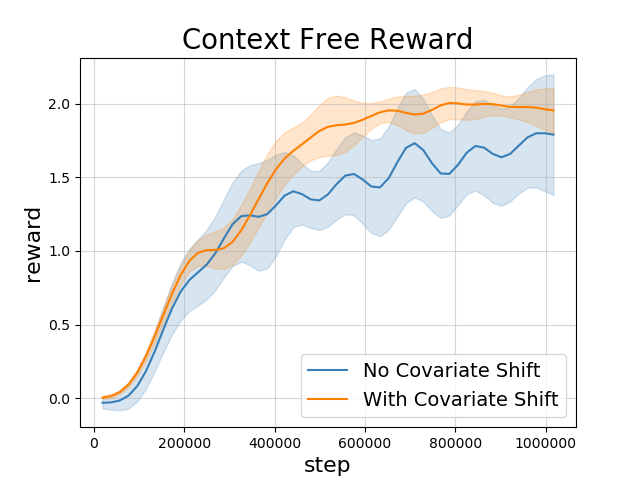}
\caption{Results for the rooms environment with covariate shift affecting only the distribution of walls. It is evident that whenever the reward is context-free comparable performance is obtained. Runs averaged over 5 seeds. }
\label{fig: rooms}
\end{figure}

\subsection{Imitation with Context-Free Reward}

We tested \Cref{algo: partial imitation no shift} on both the RecSim environment as well as a four-rooms environment with random instantiations of walls. Experiments for the RecSim environment are readily provided in \Cref{section: experiments}. Here we describe our simple four-rooms environment and show experiments w.r.t. \Cref{thm: context free reward}.

The four-rooms environment, as depicted in \Cref{fig: rooms}, is a $15\times 15$ grid-world in which an agent can take one of four actions: LEFT, RIGHT, UP, or DOWN. Each action moves the agent in the specified direction whenever no obstacle is present. The agent (shown in blue) must reach the (green) goal while avoiding the (red) mine. When the goal is reached the agent receives a reward of $+1$ and the episode terminates. In contrast, if the agent reaches the mine, she receives a reward of $-1$ and the episode terminates. The state space of the environment consists of the agent's $(\text{row}, \text{col})$ position in the world. The rest of the information in the environment is defined by the context $x$. Particularly, the context is defined by the position of the green goal, the position of the red mine, and the specific instantiation of walls (two instantiations are depicted in \Cref{fig: rooms}).

We trained an agent with full information (i.e., observed context, including goal location, mine location, and walls). We generated expert data w.r.t. the trained agent. To demonstrate the result of \Cref{thm: context free reward} we executed \Cref{algo: partial imitation no shift} with both a shifted distribution and the default distribution of walls. We did not change the distribution of goal and mine. Note that since the distribution of walls only affects the transition function and not the reward, we expect, by \Cref{thm: context free reward}, the optimal solution to remain the same. Indeed, as shown in \Cref{fig: rooms} after training an agent with no access to the contextual information of the walls in the expert data, the agent achieved comparable results both with and without covariate shift on the distribution of walls. 

This result seem surprising at first, as the walls are essential for solving the task at hand. Nevertheless, since the distribution of wall \emph{is observed} in the online environment, the partially observed expert data suffices to obtain an optimal policy. This settles with \Cref{thm: context free reward} which indeed states that this information is not needed in the expert data in order to obtain an optimal policy.

\section{Bounded Hidden Confounding}
\label{appendix: bounded confounding}

In this section we discuss the imitation learning problem under bounded hidden confounders. There are several ways to define boundness of unobserved confounders. In \Cref{section: imitation} we showed that, under \emph{arbitrary} covariate shift and context-free transitions, the imitation learning problem is impossible, i.e., one cannot rule out a catastrophic policy. We begin by considering the effect of bounded covariate shift, i.e., $\frac{\Ponline}{\Poffline} \leq C$. We then consider almost-context-free rewards, showing a tradeoff w.r.t. the hardness of the imitation problem.

\paragraph{A Sensitivity Perspective.}
A common approach in causal inference is to bound the bias of unobserved confounding through sensitivity analysis \citep{hsu2013calibrating,namkoong2020off,kallus2021minimax}. In our setting, this confounding bias occurs due to a covariate shift of the unobserved covariates. As we've shown in \Cref{thm: impossibility}, though these covariates are observed in the online environment, their shifted and unobserved distribution in the offline data can render catastrophic results. Therefore, we consider the odds-ratio bounds of the sensitivity in distribution between the online environment and the expert data, as stated formally below.
\begin{assumption}[Bounded Sensitivity]
\label{assumption: sensitivity}
    We assume that $\text{Supp}\brk*{\Poffline} \subseteq \text{Supp}\brk*{\Ponline}$ and that there exists some $\Gamma \geq 1$ such that for all $x \in \text{Supp}\brk*{\Poffline}$
    \begin{align*}
        \Gamma^{-1} 
        \leq \frac{\Ponline(x)(1-\Poffline(x))}{\Poffline(x)(1-\Ponline(x))}
        \leq
        \Gamma.
    \end{align*}
\end{assumption}

Next, we define the notion of $\delta$-ambiguity, a generalization of the ambiguity set in \Cref{def: ambiguity set}.
\begin{definition}[$\delta$-Ambiguity Set]
    For a policy $\pi \in \Pi$, we define the set of all deterministic policies that are $\delta$-close to $\pi$ by
    \begin{align*}
        \equivset{\pi}^{\delta} = \brk[c]*{\pi' \in \Pi_{\text{det}} : \abs{\donline^{\pi'}(s, a) - \doffline^{\pi}(s, a)} < \delta, s \in \s, a \in \A}.
    \end{align*}
\end{definition}
Similar to \Cref{def: ambiguity set}, the $\delta$-ambiguity set considers all deterministic policies with a marginalized stationary distribution of distance at most $\delta$ from $\pi$. The following results shows that $\equivset{\pi^*}^{\Gamma - 1}$ is a sufficient set of candidate optimal policies, as long as \Cref{assumption: sensitivity} holds for some $\Gamma \geq 1$.

\begin{restatable}{theorem}{deltasufficiency}[Sufficiency of $\equivset{\pi^*}^{\Gamma - 1}$]
    Let \Cref{assumption: sensitivity} hold for some $\Gamma \geq 1$. Then $\pi^* \in \equivset{\pi^*}^{\Gamma - 1}$.
\end{restatable}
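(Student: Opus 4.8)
The plan is to reduce the membership claim $\pi^* \in \equivset{\pi^*}^{\Gamma-1}$ to a single scalar inequality. Since the $\delta$-ambiguity set consists of deterministic policies, I first fix a deterministic optimal policy $\pi^* \in \Pi^*_\M \cap \Pi_{\text{det}}$, which exists for contextual MDPs. It then suffices to verify the defining condition of $\equivset{\pi^*}^{\Gamma-1}$ with $\pi' = \pi^*$, namely that for every $s \in \s$ and $a \in \A$,
$$ \abs{\donline^{\pi^*}(s,a) - \doffline^{\pi^*}(s,a)} \le \Gamma - 1. $$

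Next I would expand both marginalized distributions via their definitions $\donline^{\pi^*}(s,a) = \E_{x\sim\Ponline}[d^{\pi^*}(s,a\mid x)]$ and $\doffline^{\pi^*}(s,a) = \E_{x\sim\Poffline}[d^{\pi^*}(s,a\mid x)]$, so the difference becomes $\sum_x (\Ponline(x)-\Poffline(x))\, d^{\pi^*}(s,a\mid x)$. The crucial observation is that the weights $w(x) := d^{\pi^*}(s,a\mid x)$ lie in $[0,1]$ while $\sum_x(\Ponline(x)-\Poffline(x)) = 0$; the extremal choice of such a $w$ is the indicator of $\{\Ponline > \Poffline\}$, which shows that $\abs{\sum_x(\Ponline(x)-\Poffline(x))w(x)} \le \norm{\Ponline-\Poffline}_{\mathrm{TV}}$ for any $w$ valued in $[0,1]$. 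This collapses the per-$(s,a)$ requirement to a single bound on the total-variation distance between the context distributions, uniformly over all $(s,a)$.

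The remaining and main step is to bound $\norm{\Ponline-\Poffline}_{\mathrm{TV}}$ by $\Gamma-1$ using \Cref{assumption: sensitivity}. Writing $A = \brk[c]*{x : \Ponline(x) > \Poffline(x)}$ so that $\norm{\Ponline-\Poffline}_{\mathrm{TV}} = \sum_{x\in A}(\Ponline(x)-\Poffline(x))$, I would convert the odds-ratio bound into a likelihood-ratio bound on $A$. For $x \in A$ we have $\frac{1-\Ponline(x)}{1-\Poffline(x)} < 1$ (and $\Poffline(x) > 0$ by the standing support condition $\text{Supp}(\Ponline) \subseteq \text{Supp}(\Poffline)$ of \Cref{section: covariate shift}, so all ratios are well-defined and the sensitivity bound applies), hence
$$ \frac{\Ponline(x)}{\Poffline(x)} = \frac{\Ponline(x)(1-\Poffline(x))}{\Poffline(x)(1-\Ponline(x))}\cdot\frac{1-\Ponline(x)}{1-\Poffline(x)} \le \Gamma. $$
This yields $\Ponline(x)-\Poffline(x) \le (\Gamma-1)\Poffline(x)$ on $A$, and summing gives $\norm{\Ponline-\Poffline}_{\mathrm{TV}} \le (\Gamma-1)\sum_{x\in A}\Poffline(x) \le \Gamma-1$, completing the chain.

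I expect the main obstacle to be precisely this last conversion: the assumption controls the nonlinear odds ratio rather than the likelihood ratio or the raw probability differences, and a naive per-coordinate bound summed over contexts would introduce a spurious $\abs{\X}$ factor. The total-variation reduction is what avoids that, and the observation that on $\brk[c]*{\Ponline>\Poffline}$ the odds-ratio bound upgrades to the likelihood-ratio bound $\Ponline(x)\le\Gamma\Poffline(x)$ is the key that lets the sum telescope to $\Gamma-1$. A minor point to flag is the strict-versus-nonstrict inequality in the definition of $\equivset{\pi^*}^{\delta}$ together with the degenerate case $\Gamma=1$ (where $\Ponline=\Poffline$): the argument gives $\le \Gamma-1$, and strictness follows whenever $\Gamma > 1$, since a proper pair of distributions cannot concentrate all offline mass on $A$ while keeping the difference positive there.
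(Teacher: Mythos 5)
Your proof is correct and lands on the same bound, but it is organized along a genuinely different route than the paper's. The paper never passes through the total-variation distance: it performs a change of measure, $\donline^{\pi}(s,a) = \expect*{x\sim\Poffline}{\frac{\Ponline(x)}{\Poffline(x)}d^{\pi}(s,a\mid x)}$, converts the odds-ratio assumption into the pointwise likelihood-ratio bounds $\Ponline(x)(1-\Gamma^{-1})+\Gamma^{-1} \le \frac{\Ponline(x)}{\Poffline(x)} \le \Ponline(x)(1-\Gamma)+\Gamma$, and then subtracts $\doffline^{\pi}(s,a)$, keeping the weights $d^{\pi}(s,a\mid x)\in[0,1]$ inside the expectation throughout; your proof instead isolates the purely distributional statement $\norm{\Ponline-\Poffline}_{\mathrm{TV}}\le \Gamma-1$ as the core lemma and feeds it through the generic fact that a zero-mass signed measure integrated against $[0,1]$-valued weights is bounded by its TV norm. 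Your version is more modular and makes the uniformity over $(s,a)$ transparent; the paper's version avoids introducing TV but is essentially the same algebra (its upper bound $\Ponline(x)(1-\Gamma)+\Gamma \le \Gamma$ is exactly your odds-to-likelihood upgrade). One point deserves emphasis: both proofs share the same support subtlety. \Cref{assumption: sensitivity} only posits $\text{Supp}(\Poffline)\subseteq\text{Supp}(\Ponline)$ and only constrains $x\in\text{Supp}(\Poffline)$, whereas your bound on $A=\brk[c]*{x:\Ponline(x)>\Poffline(x)}$ needs $\Poffline(x)>0$ there (and the paper's change of measure needs the same reverse inclusion); your patch of invoking the standing assumption of \Cref{section: covariate shift} is precisely the fix the paper implicitly relies on, so this is not a gap relative to the paper. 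In fact, your TV route can do strictly better and dispense with that extra condition: bound the deficit side $\sum_{x:\Poffline(x)>\Ponline(x)}\brk*{\Poffline(x)-\Ponline(x)}$, which also equals the TV distance, involves only points of $\text{Supp}(\Poffline)$, and satisfies $\Poffline(x)\le\Gamma\,\Ponline(x)$ there by the symmetric conversion — making the argument valid under \Cref{assumption: sensitivity} alone. Finally, your flag about strictness is apt: both arguments yield only $\le\Gamma-1$ while the definition of $\equivset{\pi^*}^{\delta}$ demands a strict inequality, so the degenerate case $\Gamma=1$ (which forces $\Ponline=\Poffline$) is a defect of the paper's definition, not of either proof, and your observation that strictness is recoverable for $\Gamma>1$ is correct.
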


The above result suggests that \Cref{algo: partial imitation no shift} can be executed over $\equivset{\pi^*}^{\Gamma - 1}$ by adding $\delta=\Gamma-1$ additive uniform noise to $\doffline^{\pi^*}(s,a)$ (see Line 4 of \Cref{algo: partial imitation no shift}), and executing the algorithm for a finite number of iterations, finally selecting a robust policy from the approximate set. 

\paragraph{Context Reconstruction.} When bounded covariate shift is present, one might attempt to learn an inverse mapping of contexts from observed trajectories in the data.

We denote by $P_\rho^\pi$ the probability measure over contexts $x \in \X$ and trajectories ${\tau = (s_0, a_0, s_1, a_1, \hdots s_H)}$ as induced by the policy $\pi$ and context distribution $\rho$. That is,
\begin{align*}
    P_\rho^\pi(x, \tau)
    =
    \rho(x)
    \nu(s_0|x)
    \prod_{t=0}^{H-1} 
    P(s_{t+1} | s_t, a_t, x)
    \pi(a_t | s_t, x).
\end{align*}
As the true context is observed in the online environment, we can calculate for any $\pi$ the quantity $P_{\Ponline}^\pi(x, \tau)$. As the expert data distribution was generated by the marginalized distribution ${P_{\Ponline}^{\pi^*}(\tau) = \sum_{x \in \X} P_{\Ponline}^{\pi^*}(x, \tau)}$, it is unclear if knowledge of $P_{\Ponline}^\pi(x, \tau)$ is beneficial.

Fortunately, whenever \Cref{assumption: sensitivity} holds, a high probability of reconstructing a context in the online environment induces a high probability of reconstructing it in the expert data. To see this, assume that there exists $\delta \in [0,1]$ such that for all ${\pi \in \equivset{\pi^*}^{\Gamma - 1}}$, $\tau \in \text{Supp}(P^\pi_{\Poffline}(\tau))$, there exists $x \in \X$ such that
\begin{align}
\label{assumption: delta identifiability}
    P^\pi_{\Ponline}(x | \tau) \geq \min\brk[c]*{(1-\delta)\brk*{\Ponline(x) + \Gamma\brk*{1-\Ponline(x)}}, 1}.
\end{align}
That is, we assume that for any policy that $\delta$-ambiguous to $\pi^*$, and any induced trajectory of $x \in \X$, one can with high probability identify $x$ in the online environment. Importantly, this property can be verified in the online environment. When Assumption~\ref{assumption: sensitivity} and~\ref{assumption: delta identifiability} hold, we get that
\begin{align*}
    P^\pi_{\Poffline}(x | \tau)
    =
    \frac{P^\pi_{\Poffline}(\tau|x)\Poffline(x)}{P^\pi_{\Poffline}(\tau)} 
    \geq
    \frac{P^\pi_{\Poffline}(\tau|x)}{P^\pi_{\Poffline}(\tau)}
    \frac{\Ponline(x)}{\Ponline(x) + \Gamma\brk*{1-\Ponline(x)}} 
    =
    \frac{P^\pi_{\Ponline}(x | \tau)}{\Ponline(x) + \Gamma\brk*{1-\Ponline(x)}} 
    \geq
    1 - \delta.
\end{align*}
In other words, we can reconstruct $x$ with probability $1-\delta$ for any trajectory $\tau$ which satisfies the above. This allows us to deconfound essential parts of the expert data, rendering it useful for the imitation problem, even when reward is not provided. We leave further analysis of this direction for future work.

\paragraph{Context-Dependent Reward.}

In \Cref{thm: context free reward} we showed that whenever the reward is independent of the context then the imitation problem is easy, in the sense that any policy $\pi_0 \in \equivset{\pi^*}$ is also an optimal policy. Here, we relax the assumption on the reward, and instead assume bounded dependence of the reward on the context. The following definition upper bounds the confounding effect of the reward w.r.t. the context.
\begin{definition}
\label{definition: reward dependence}
    Let $\epsilon: \X \mapsto \R$ such that
    \begin{align*}
        \min_{r_0: \s \times \A \mapsto \R} \abs{r(s,a,x) - r_0(s,a)} \leq \epsilon(x) \quad,s \in \s, a \in \A, x \in \X
    \end{align*}
\end{definition}
Using the above definition, we can now show that any policy in $\equivset{\pi^*}$ is still approximately optimal, as shown by the following result.
\begin{restatable}{theorem}{contextdependentreward}[Context Dependent Reward]
\label{thm: context dependent reward}
Let $\epsilon: \X \mapsto \R$ of \Cref{definition: reward dependence}. Denote ${\epsilon_{oe} = \expect*{x \sim \Ponline(x)}{\epsilon(x)} + \expect*{x \sim \Poffline(x)}{\epsilon(x)}}$. Then for any $\pi^* \in \Pi^*_\M$, $\pi_0 \in \equivset{\pi^*}$
\begin{align*}
    v(\pi_0)
    \geq
    v(\pi^*) 
    -
    \epsilon_{oe}.
\end{align*}
\end{restatable}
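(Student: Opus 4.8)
The plan is to pass through the best \emph{context-free} surrogate reward and let the ambiguity-set identity absorb the covariate shift. Let $r_0\colon \s\times\A\mapsto\R$ be the context-free reward of \Cref{definition: reward dependence}, so that $\abs{r(s,a,x)-r_0(s,a)}\le\epsilon(x)$ for all $s,a,x$. The reason $r_0$ is useful is that, being context-free, its expected return under any policy depends on that policy \emph{only through the marginalized stationary distribution} — which is exactly the object on which $\pi_0$ and $\pi^*$ are tied together by $\pi_0\in\equivset{\pi^*}$ (\Cref{def: ambiguity set}), namely $\donline^{\pi_0}(s,a)=\doffline^{\pi^*}(s,a)$ for all $s,a$. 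The error of replacing $r$ by $r_0$ is controlled pointwise by $\epsilon(x)$, and I would pay it once while evaluating $\pi_0$ in the online environment (contributing $\expect*{x\sim\Ponline}{\epsilon(x)}$) and once while re-expanding $\pi^*$'s offline return (contributing $\expect*{x\sim\Poffline}{\epsilon(x)}$); together these are precisely $\epsilon_{oe}$.

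Concretely, I would chain the following estimates, using $\abs{r-r_0}\le\epsilon(x)$ uniformly in $(s,a)$ and then the definition of the marginalized stationary distribution (legitimate because $r_0$ has no context argument):
\begin{align*}
    v(\pi_0)
    &= \expect*{x\sim\Ponline,\; s,a\sim d^{\pi_0}(s,a|x)}{r(s,a,x)} \\
    &\ge \sum_{s,a}\donline^{\pi_0}(s,a)\,r_0(s,a) - \expect*{x\sim\Ponline}{\epsilon(x)} \\
    &= \sum_{s,a}\doffline^{\pi^*}(s,a)\,r_0(s,a) - \expect*{x\sim\Ponline}{\epsilon(x)} \\
    &\ge \expect*{x\sim\Poffline,\; s,a\sim d^{\pi^*}(s,a|x)}{r(s,a,x)} - \epsilon_{oe},
\end{align*}
where the third line is the ambiguity-set identity and the last line re-invokes $\abs{r-r_0}\le\epsilon(x)$, now averaged over $\Poffline$. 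This already yields $v(\pi_0)\ge v_{\text{off}}(\pi^*)-\epsilon_{oe}$, with $v_{\text{off}}(\pi^*)$ denoting the expert's return evaluated under the \emph{offline} context distribution $\Poffline$.

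The remaining step — and, in my view, the delicate one — is to identify $v_{\text{off}}(\pi^*)$ with $v(\pi^*)$, the expert's return under $\Ponline$. Here I would use that $\pi^*\in\Pi^*_\M$ is optimal \emph{per context}: since the context is drawn once per episode and the policy may depend on it, maximizing $v_\M$ decouples across contexts, so (taking $\pi^*$ per-context optimal throughout $\mathrm{Supp}(\Poffline)\supseteq\mathrm{Supp}(\Ponline)$, the natural meaning of an optimal expert) $\pi^*$ attains the per-context optimal value $V^*(x):=\max_{\pi}\expect*{s,a\sim d^{\pi}(s,a|x)}{r(s,a,x)}$ for every $x$. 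Then $v(\pi^*)=\expect*{x\sim\Ponline}{V^*(x)}$ and $v_{\text{off}}(\pi^*)=\expect*{x\sim\Poffline}{V^*(x)}$ are two averages of the \emph{same} optimal value function, and the desired bound follows once $\expect*{x\sim\Poffline}{V^*(x)}\ge\expect*{x\sim\Ponline}{V^*(x)}$. This is exactly the mechanism already certified by \Cref{thm: context free reward}: in its $\epsilon\equiv0$ specialization the chain above collapses to $v(\pi_0)=\expect*{x\sim\Poffline}{V^*(x)}$ while that theorem asserts $v(\pi_0)=v^*=\expect*{x\sim\Ponline}{V^*(x)}$, forcing the two averages to coincide whenever $\equivset{\pi^*}$ is non-empty (i.e. whenever the offline marginal is realizable as an online stationary distribution). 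I expect this identification to carry over verbatim to the present setting, now with the additive $\epsilon_{oe}$ slack; checking that it survives the context-dependent reward, rather than only the context-free case, is the principal obstacle, whereas the two-sided surrogate estimate itself is routine.
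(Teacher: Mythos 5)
Your main chain of inequalities \emph{is} the paper's proof: the paper introduces the same context-free surrogate $r_0$, splits $r=(r-r_0)+r_0$, cancels the $r_0$ terms via the ambiguity identity $\donline^{\pi_0}(s,a)=\doffline^{\pi^*}(s,a)$, and pays $\expect*{x\sim\Ponline}{\epsilon(x)}$ and $\expect*{x\sim\Poffline}{\epsilon(x)}$ for the two residuals, arriving at exactly your intermediate conclusion $v(\pi_0)\ge v_{\text{off}}(\pi^*)-\epsilon_{oe}$, where $v_{\text{off}}(\pi^*)=\expect*{x \sim \Poffline, s,a \sim d^{\pi^*}(s,a|x)}{r(s,a,x)}$. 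The step you single out as delicate --- identifying $v_{\text{off}}(\pi^*)$ with $v(\pi^*)$ --- is one the paper never performs: in its displayed computation of $v(\pi^*)-v(\pi_0)$, the quantity $v(\pi^*)$ is expanded as $\expect*{x \sim \Poffline, s,a \sim d^{\pi^*}(s,a|x)}{r(s,a,x)}$, i.e.\ the theorem's $v(\pi^*)$ is implicitly the expert's value under the \emph{offline} context distribution, not the online value $v_\M(\pi^*)$ of the preliminaries. Under that reading your first chain is the entire proof and your second part is superfluous; under the literal reading you have correctly detected a gap that the paper's own write-up shares.

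However, your proposed bridge for the literal reading would not go through. First, the inequality you need points the wrong way: writing $V^*(x):=\max_{\pi}\expect*{s,a\sim d^{\pi}(s,a|x)}{r(s,a,x)}$ and granting your (extra) assumption that $\pi^*$ is per-context optimal on $\mathrm{Supp}(\Poffline)$ --- note $\pi^*\in\Pi^*_\M$ only constrains behavior on $\mathrm{Supp}(\Ponline)$ --- your chain combined with $v(\pi_0)\le v(\pi^*)$ yields only $\expect*{x\sim\Ponline}{V^*(x)}\ge\expect*{x\sim\Poffline}{V^*(x)}-\epsilon_{oe}$, whereas the theorem requires $\expect*{x\sim\Poffline}{V^*(x)}\ge\expect*{x\sim\Ponline}{V^*(x)}$, which follows from nothing you (or the paper) establish. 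Second, the appeal to \Cref{thm: context free reward} does not transfer: for $\epsilon\equiv 0$ the coincidence of the two averages is a \emph{consequence} of that theorem, whose proof routes $\pi_0$ through \Cref{lemma: optimal policy for every x} and uses context-freeness of $r$ in an essential way; once $r$ depends on $x$, the data-generating expert is only $2\epsilon(x)$-optimal per context for $r_0$, so you would need an approximate-expert version of the context-free theorem that the paper does not supply --- this is precisely the unclosed obstacle you acknowledge. The clean resolution is not to prove the identification but to read the statement as the paper's computation does, with $v(\pi^*)$ denoting the expert's offline value; with that convention your first chain is a complete and correct proof, identical in substance to the paper's.
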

A direct corollary for the above result states that for $\epsilon: \X \mapsto \R$ of \Cref{definition: reward dependence}, if $\epsilon(x) = \epsilon$, for all $x \in \X$, then for $\pi_0, \pi^*$ of \Cref{thm: context dependent reward}, it holds that $v(\pi_0) \geq v(\pi^*) - 2\epsilon$. That is, $\pi_0$ is an approximately optimal policy.

\section{Relation to Causal Inference}
\label{appendix: relation to causal inference}

\begin{figure}[t]
\centering
\includegraphics[width=0.5\linewidth]{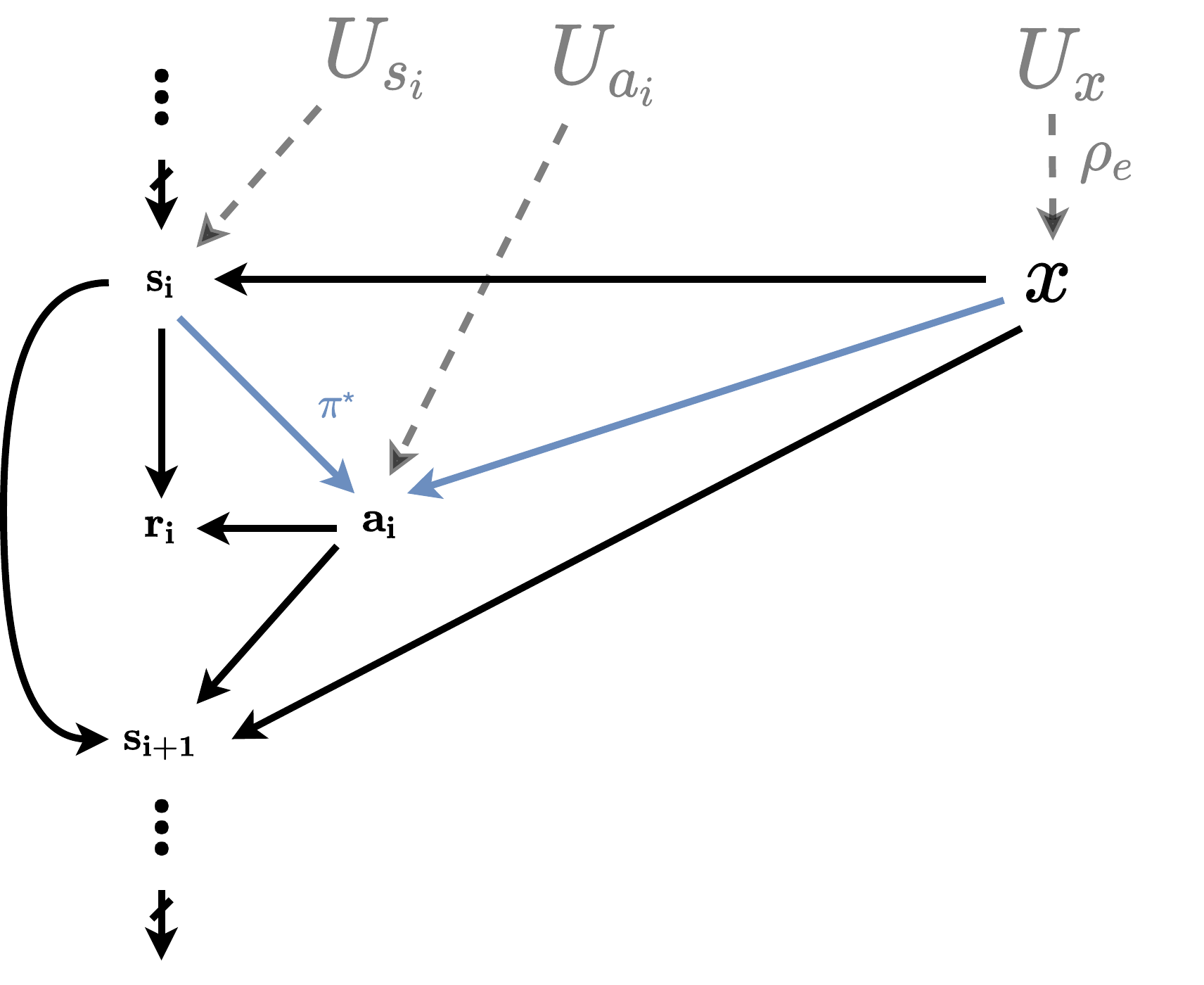}
\caption{\textbf{Contextual MDP Causal Diagram}.}
\label{fig: causal diagram}
\end{figure}

Our work is focused on the problem of hidden confounders in expert data for imitation and reinforcement learning. We have chosen to write the paper in terminology famililar to the RL community. In this section we address and formalize the problem in Causal Inference (CI) terminology. We begin by defining Structural Causal Models (SCM, \citet{Pearl:2009:CMR:1642718}) -- a basic building block of our framework. We then show how the confounded imitation problem can be formalized as an intervention over a specific SCM. Generally speaking, the causal view casts the environment, namely the expert environment generating the offline data, and the online environment, as confounders. 

\begin{definition}[Structural Causal Models]
    A Structural Causal Model (SCM) is a tuple ${\M = (U, V, \mathcal{F}, P(U))}$ where $U$ is a set of exogenous variables and $V$ is a set of endogenous variables. $\mathcal{F}$ is a set of functions such that $f_i \in \mathcal{F}$ are functions mapping a set of endogenous variables $Pa_i \subseteq V \backslash \brk[c]*{V_i}$ and a set of exogenous variables $U_i \subseteq U$ to the domain of $V_i$, i.e., $V_i = f_i(Pa_i, U_i)$. Finally, $P(u)$ is a probability distribution over the set of exogenous variables $U$.  We assume that the SCM is recursive, i.e., that the causal diagram associated with it is acyclic.
\end{definition}

Every SCM $\M$ is associated with a causal diagram $\G$, as depicted in \Cref{fig: causal diagram}. Our framework relies largely on the formulation of stochastic interventions, as proposed in \citet{correa2020calculus}. We consider stochastic, conditional (non-atomic) interventions, defined by regime indicators $\sigma_Z$ \citep{pearl2000causality,correa2020calculus}, defined formally below.

\begin{definition}[Non-Atomic Interventions]
    Given a SCM $\M = (U, V, \mathcal{F}, P(U))$ and a subset ${Z \subseteq V}$, an intervention ${\sigma_Z = \brk[c]*{\sigma_{Z_1}, \hdots, \sigma_{Z_n}}}$ defines a new SCM $\M_{\sigma_Z} =  (U, V, \mathcal{F}^*, P(U))$ in which the set of functions $\mathcal{F}$ is changed to ${\mathcal{F}^* = \brk[c]*{f^*_i}_{i : V_i \in \brk[c]*{Z_j}_{j=1}^n} \bigcup \brk[c]*{f_i}_{i : V_i \in V \backslash \brk[c]*{Z_j}_{j=1}^n}}$.
\end{definition}

Non-atomic interventions are a generalization of the classic atomic $\textbf{do}(X=x)$ interventions, defined by the SCM $\M_z$ and causal diagram $\G_{\overline{Z}}$ in which all edges incoming into $Z$ are removed. We have that
\begin{align*}
    P(y | \textbf{do}(Z=z)) = P(y | z ; \sigma_Z = \textbf{do}(Z=z)).
\end{align*}
Atomic interventions replace function in $\F$ by constant functions, whereas non-atomic interventions use general functions. For notational simplicity, when a single intervention is applied to some $f_i \in \F$, we denote it by $\sigma_Z = \textbf{d}(f_i \gets f_i^*)$, indicating that in the interventional distribution $f_i^*$ is used instead of $f_i$. Next we define the identifiability of a causal effect under an intervention, as follows.
\begin{definition}[Identifiability]
    Let $X, Y, Z \subseteq V$ with $Y \cap Z = \emptyset$ in some SCM with causal diagram $\G$. Given an intervention $\sigma_Z = \brk[c]*{\sigma_{Z_1}, \hdots, \sigma_{Z_n}}$, the causal effect $P(y | x, \sigma_Z)$ is said to be identifiable from $V' \subseteq V$ if it can be uniquely computed from $P(V')$ for every assignment $(y,x)$ in every model that induces $\G$ and $P(V')$.
\end{definition}
Following the model definitions of \Cref{section: preliminaries}, we define the contextual MDP SCM as follows.
\begin{definition}
A contextual MDP SCM is defined by the causal diagram of \Cref{fig: causal diagram}. For some horizon $H > 0$, the SCM is defined by the set of endogenous variables ${V = \brk[c]*{s_i}_{i=0}^H \cup \brk[c]*{a_i}_{i=0}^H \cup \brk[c]*{x} \cup \brk[c]*{r_i}_{i=0}^H}$ (denoting the states, actions, context and rewards, respectively), a set of exogenous variables $U$, and functions $\mathcal{F}=\brk[c]*{f_{s_i},f_{a_i},f_{r_i}, f_{\Poffline}, f_{\nu_0}}$, where $f_{s_i}$ correspond to the transition function, $f_{a_i}$ the expert policy, $f_{r_i}$ the reward function, $f_{\Poffline}$ the context expert distribution, and $f_{\nu_0}$ the initial context-dependent state distribution.
\end{definition}
Relating to our formal definition of our model in \Cref{section: preliminaries}, with slight abuse of notations, the functions $\brk[c]*{f_s,f_g,f_a, f_{\Poffline}, f_{\nu_0}}$ adhere to the following relations
\begin{align*}
    &P(s_{i+1}=s'|s_i=s,a_i=a,x) = P(f_{s_i}(s,a,x,U)) \\
    &P(r_i=r | s_i=s,a_i=a,x) = \delta(f_{r_i}(s,a,x) = r) \\
    &\pi^*(a_i=a|s_i=s,x) = P(f_{a_i}(a,s,x,U) \\
    &\rho_e(x) = P(f_{\Poffline}(x,U)) \\
    &\nu_0(s_0 | x) = P(f_{\nu_0}(s_0, x, U)),
\end{align*}
where $\delta(\cdot)$ indicates the Dirac delta distribution. 

We are now ready to define the confounded imitation problem. We define the (non-atomic) intervention $\sigma_x = \textbf{do}\brk*{f_{\Poffline} \gets f_{\Ponline}}$ which replaces $f_{\Poffline}$ with $f_{\Ponline}$ in the contextual MDP SCM defined above. The goal of imitation learning is then to identify the quantities
\begin{align}
\label{eq: identifiability}
    P(a_i | s_i, x, \sigma_x = \textbf{do}\brk*{f_{\Poffline} \gets f_{\Ponline}}) \quad, 0 \leq i \leq H-1,
\end{align}
where, importantly, we assume we \emph{only} have access to $P(s_i,a_i)$, $P(s_{i+1}|s_i,a_i,x)$, $P(s_0 | x)$, and ${P(x | \sigma_x = \textbf{do}\brk*{f_{\Poffline} \gets f_{\Ponline}})}$. Notice that in our setting, $P(s_{i+1}|s_i,a_i,x)$, $P(s_0 | x)$, and ${P(x | \sigma_x = \textbf{do}\brk*{f_{\Poffline} \gets f_{\Ponline}})}$ correspond to known quantities of the online environment, whereas $P(s_i,a_i)$ corresponds to the (partially observed) offline expert data. We also emphasize that $P(s_{i+1}|s_i,a_i,x)$ is not dependent on the intervention $\sigma_Z$. That is,
\begin{align*}
    P(s_{i+1}|s_i,a_i,x, \sigma_x = \textbf{do}\brk*{f_{\Poffline} \gets f_{\Ponline}}) = P(s_{i+1}|s_i,a_i,x).
\end{align*}

\begin{remark*}
    Our work studies a slightly different version of the identifiability problem in \Cref{eq: identifiability}, as we only wish to identify an optimal policy from the set $\Pi^*_\M$, as opposed to the single specific policy $\pi^*$. This requirement can be formalized by defining an extended SCM which includes all optimal policies in $\Pi^*_\M$, with the assumption that only one is observed (corresponding to the expert data).
\end{remark*}

\begin{algorithm}[t!]
\caption{{RL using Expert Data with Unobserved Confounders (Complete Algorithm)}}
\label{algo: complete ogd}
\begin{algorithmic}[1]
\STATE {\bf input:} Expert data with missing context $\D^*$, ${\lambda,\alpha, B, N, M > 0}$, policy optimization algorithm \texttt{ALG-RL}
\STATE {\bf init:} Policy $\pi^0$, global bonus reward network $g^*_\theta$ 
\FOR{$k = 1, \hdots$}
    \STATE Generate dataset of rollouts $\mathcal{R}_k \sim \donline^{\pi_{k-1}}(s,a)$
    \STATE Initialize local networks $g^m_{\theta_m} \gets g_\theta, m \in [M]$
    \FOR{$m = 1, \hdots M$}
        \STATE Sample weight vector $w_m$ uniformly from $\Delta_n$
        \FOR{$e =1 \hdots N$}
            \STATE Sample batch uniformly from $\mathcal{R}_k$, i.e., $\brk[c]*{s_i, a_i}_{i=1}^B \overset{U}{\sim} \mathcal{R}_k$
            \STATE Sample batch according to weights $w_m$ from $\D^*$, i.e., $\brk[c]*{s_i^e, a_i^e}_{i=1}^B \overset{w_m}{\sim} \D^*$ 
            \STATE Update $g^m_{\theta_m}$ according to 
            \begin{align*}
                \nabla_{\theta_m} L_m(\theta_m) 
                =
                \frac{1}{B}\sum_{i=1}^B\nabla_{\theta_m} \brk[s]*{
                (1-\alpha)f^*(g^m_{\theta_m}(s_i^e, a_i^e)) 
                + \alpha f^*(g^m_{\theta_m}(s_i, a_i)) 
                - g^m_{\theta_m}(s_i, a_i)}
            \end{align*}
        \ENDFOR
    \STATE $m^* \in \arg\min_{m \in [M]} L_m(\theta_m)$
    \STATE Update global parameters from the selected local network $g^*_\theta \gets g^{m^*}_{\theta_{m^*}}$
    \STATE $\pi^k \gets \text{\texttt{ALG-RL}}(r(s,a,x) - \lambda g^*_\theta(s,a))$
    \ENDFOR
\ENDFOR
\end{algorithmic}
\end{algorithm}

\section{Implementation Details}
\label{appendix: implementation details}

Our experiments were based off of the recently proposed assistive-gym \citep{erickson2020assistive} and recsim \citep{ie2019recsim} environments. In this section we discuss further implementation details, hyperparameters, context distributions, and generation of the expert data.

\begin{table*}[t!]
  \small\centering
  \begin{tabular}{lcc}
    \toprule
    {\bfseries Name} & {\bfseries Value}  & {\bfseries Comments}\\
    \midrule\midrule[.1em]
    Batch size & $128$ &  \\
    \midrule[.1em] 
    Learning rate & $\num{5e-5}$ &  \\
    \midrule[.1em] 
    Rollout size & $19,200$ &  \\
    \midrule[.1em] 
    Total timesteps & $\num{5e6}$ &  \\
    \midrule[.1em] 
    Num epochs & 50 & How many training epochs to do after each rollout \\
    \midrule[.1em] 
    $\gamma$ & $0.95$ & Discount factor \\
    \midrule[.1em] 
    kl coef & $0.2$ & Initial coefficient for KL divergence \\
    \midrule[.1em] 
    kl target & $0.01$ & Target value for KL divergence \\
    \midrule[.1em] 
    GAE $\lambda$ & $1$ & The GAE (lambda) parameter \\
    \midrule[.1em] 
    Num workers & $40$ & \\
    \midrule[.1em] 
    \bottomrule
  \end{tabular}
  \caption{Hyper-parameters used to train the PPO agent. \label{table:ppo-params}}
\end{table*}

\begin{table*}[t!]
  \small\centering
  \begin{tabular}{lcc}
    \toprule
    {\bfseries Name} & {\bfseries Value}  & {\bfseries Comments}\\
    \midrule\midrule[.1em]
    Batch size & $128$ &  \\
    \midrule[.1em] 
    Learning rate & $\num{1e-4}$ &  \\
    \midrule[.1em] 
    Imitation Method & $\chi$-divergence &  \\
    \midrule[.1em] 
    Num epochs & 50 & How many training epochs to do after each rollout \\
    \midrule[.1em] 
    $\alpha$ & $0.9$ &  $D_f$ regularization coefficient \\
    \midrule[.1em] 
    $M$ & $10000$ & Budget for CTS optimizer \\
    \midrule[.1em] 
    \bottomrule
  \end{tabular}
  \caption{Hyper-parameters used for imitation and CTS. \label{table:cts-params}}
\end{table*}

\paragraph{Algorithm Details.} A complete description of \Cref{algo: ogd} is presented in \Cref{algo: complete ogd}. Specific hyperparameters used are shown in \Cref{table:ppo-params,table:cts-params}. We implemented the algorithm using the RLlib framework \citep{liang2018rllib}. We used PPO \citep{schulman2017proximal} as our policy-optimization algorithm. All neural networks consisted of two-layer fully connected MLPs with 100 parameters in each layer. We used the same rollout buffer (of size 19200 samples) for both our PPO agent as well as our imitation module, which estimated the augmented reward. 

Motivated by \citet{kostrikov2019imitation}, we regularized the expert demonstrations with samples from $d^\pi$. Particularly, we let $\alpha \in (0,1]$, such that $1-\alpha$ corresponds to the probabiilty of sampling an expert example and $\alpha$ corresponds to the probability of sampling from the replay. This leads to minimizing an augmented version of the $f$-divergence which can be written as
\begin{align*}
    \min_{g: \s \times \A \mapsto \R} 
     \expect*{s,a \sim \donline^\pi(s,a | x)}{g(s,a) - \alpha f^*(g(s,a)) } 
     - 
     (1-\alpha)\expect*{s,a \sim \doffline^{\pi^*}(s,a)}{f^*(g(s,a))}.
\end{align*}

Our imitation module consisted of two networks $g_\theta$ and $h_\theta$ as proposed in \citet{fu2018learning}. The ``done" signal was also added to the state for training the imitation module. For training CTS we used the Nevergrad optimization platform \citep{nevergrad} with a budget of 10000 and one worker. Here a copied version of the networks $g_\theta$ and $h_\theta$ were used to for initialization and then approximate the minimum $D_f$.

For choosing $\lambda$ we used an adaptive strategy which ensured $\lambda$ balanced the RL objective with the imitation objective. Specifically, we used the following tradeoff between reward $r$ and bonus $g$
\begin{align*}
    (1-\lambda_{\text{adap}})r(s,a) + \lambda_{\text{adap}} g(s,a),
\end{align*}
where $\lambda_{\text{adap}} = \frac{r_{\text{mean}}}{r_{\text{mean}} + g_{\text{mean}}}$. Here $r_{\text{mean}}$ corresponds to the average reward in the replay buffer and $g_{\text{mean}}$ to the average bonus in the replay buffer. By averaging the two, we maintained a similar scale to effectively use the expert data in all the evaluated environments without optimizing for $\lambda$.

\paragraph{Context Distribution.}
For each environment we used a varying context distribution in the expert data, with increasing distance to that of the online environment. The context distribution for the RecSim environment is formally described in \Cref{section: experiments}. For the assistive-gym environment the context was defined by the following features: gender, mass, radius, height, patient impairment, and patient preferences. The patient's mass, radius, and height distributions were dependent on gender. The patient's impairment was given by either limited movement, weakness, or tremor (with sporadic movement). Finally, the patient's preferences were affected by the velocity and pressure of touch forces applied by the robot. We used default average values that were provided with the simulator. Particularly, we used the following distributions for each feature
\begin{align*}
    &\text{gender} \sim \text{Bern}(p_{\text{male}}) \\
    &\text{mass}(\text{gender}) 
    \sim \mathcal{N}(\mu_{\text{mass}}(\text{gender}) , \sigma^2_{\text{mass}}) \\
    & \text{radius}(\text{gender}) 
    \sim \mathcal{N}(\mu_{\text{radius}}(\text{gender}) , \sigma^2_{\text{radius}}) \\
    & \text{height}(\text{gender}) 
    \sim \mathcal{N}(\mu_{\text{height}}(\text{gender}) , \sigma^2_{\text{height}}) \\
    & \text{velocity weight}
    \sim \text{Unif}([\ell_{\text{vel}}, u_{\text{vel}}]) \\
    & \text{force nontarget weight}
    \sim \text{Unif}([\ell_{\text{target}}, u_{\text{target}}]) \\
    & \text{high forces}
    \sim \text{Unif}([\ell_{\text{high forces}}, u_{\text{high forces}}]) \\
    & \text{food hit weight}
    \sim \text{Unif}([\ell_{\text{hit}}, u_{\text{hit}}]) \\
    & \text{food velocity weight}
    \sim \text{Unif}([\ell_{\text{food vel}}, u_{\text{food vel}}]) \\
    & \text{high pressures weight}
    \sim \text{Unif}([\ell_{\text{high pressure}}, u_{\text{high pressure}}]) \\
    & \text{impairment} \sim \text{Multinomial}(p_{\text{none}}, p_{\text{limits}}, p_{\text{weakness}}, p_{\text{tremor}}).
\end{align*}
The values for each distribution are provided in \Cref{table: assistive params}. For setting covariate shift, we used a a set of distributions that were shifted w.r.t. the default context distribution. We then sampled a shifted distribution w.p. $\beta$ and the default distribution w.p. $1-\beta$. That is, when $\beta = 1$, the user sampled a context only from the shifted distribution. \Cref{table: shifted assistive params} shows an example of one of the shifted distribution that were used.

\begin{table*}[t!]
  \small\centering
  \begin{tabular}{|lc|lc|lc|}
    \toprule
    {\bfseries Name} & {\bfseries Value} & {\bfseries Name} & {\bfseries Value} & {\bfseries Name} & {\bfseries Value} \\
    \midrule\midrule[.1em]
    $p_{\text{male}}$ & $0.3$ & $\ell_{\text{vel}}$ & $0.225$ & $\ell_{\text{high pressure}}$ & $0.009$\\
    \midrule[.1em] 
    $\mu_{\text{mass}}(\text{male})$ & $78.4$  & $u_{\text{vel}}$ & $0.275$  & $u_{\text{high pressure}}$ & $0.011$ \\
    \midrule[.1em] 
    $\mu_{\text{mass}}(\text{female})$ & $62.5$ & $\ell_{\text{target}}$ & $0.009$ & $p_{\text{none}}$ & $0.1$\\
    \midrule[.1em] 
    $\sigma^2_{\text{mass}}$ & $10$ & $u_{\text{target}}$ & $0.011$ & $p_{\text{limits}}$ & $0.4$\\
    \midrule[.1em] 
    $\mu_{\text{radius}}(\text{male})$ & $1$ & $\ell_{\text{high forces}}$ & $0.045$ & $p_{\text{weakness}}$ & $0.3$\\
    \midrule[.1em] 
    $\mu_{\text{radius}}(\text{female})$ & $1$ & $u_{\text{high forces}}$ & $0.055$ & $p_{\text{tremor}}$ & $0.2$ \\
    \midrule[.1em] 
    $\sigma^2_{\text{radius}}$ & $0.1$ & $\ell_{\text{hit}}$ & $0.9$ &&\\
    \midrule[.1em] 
    $\mu_{\text{height}}(\text{male})$ & $1$ & $u_{\text{hit}}$ & $1.1$ &&\\
    \midrule[.1em] 
    $\mu_{\text{height}}(\text{female})$ & $1$ & $\ell_{\text{food vel}}$ & $0.9$ && \\
    \midrule[.1em] 
    $\sigma^2_{\text{height}}$ & $0.1$ & $\ell_{\text{food vel}}$ & $1.1$ && \\
    \midrule[.1em] 
    \bottomrule
  \end{tabular}
  \caption{Parameters for context distribution used in assistive-gym \label{table: assistive params}}
\end{table*}

\begin{table*}[t!]
  \small\centering
  \begin{tabular}{|lc|lc|lc|}
    \toprule
    {\bfseries Name} & {\bfseries Value} & {\bfseries Name} & {\bfseries Value} & {\bfseries Name} & {\bfseries Value} \\
    \midrule\midrule[.1em]
    $p_{\text{male}}$ & $0.8$ & $\ell_{\text{vel}}$ & $0.225$ & $\ell_{\text{high pressure}}$ & $0.009$\\
    \midrule[.1em] 
    $\mu_{\text{mass}}(\text{male})$ & $8 8.4$  & $u_{\text{vel}}$ & $0.275$  & $u_{\text{high pressure}}$ & $0.111$ \\
    \midrule[.1em] 
    $\mu_{\text{mass}}(\text{female})$ & $72.5$ & $\ell_{\text{target}}$ & $0.007$ & $p_{\text{none}}$ & $0.1$\\
    \midrule[.1em] 
    $\sigma^2_{\text{mass}}$ & $20$ & $u_{\text{target}}$ & $0.016$ & $p_{\text{limits}}$ & $0.1$\\
    \midrule[.1em] 
    $\mu_{\text{radius}}(\text{male})$ & $0.9$ & $\ell_{\text{high forces}}$ & $0.035$ & $p_{\text{weakness}}$ & $0.1$\\
    \midrule[.1em] 
    $\mu_{\text{radius}}(\text{female})$ & $0.9$ & $u_{\text{high forces}}$ & $0.06$ & $p_{\text{tremor}}$ & $0.7$ \\
    \midrule[.1em] 
    $\sigma^2_{\text{radius}}$ & $0.2$ & $\ell_{\text{hit}}$ & $0.4$ &&\\
    \midrule[.1em] 
    $\mu_{\text{height}}(\text{male})$ & $1.1$ & $u_{\text{hit}}$ & $2.1$ &&\\
    \midrule[.1em] 
    $\mu_{\text{height}}(\text{female})$ & $1.1$ & $\ell_{\text{food vel}}$ & $0.4$ && \\
    \midrule[.1em] 
    $\sigma^2_{\text{height}}$ & $0.2$ & $\ell_{\text{food vel}}$ & $2.1$ && \\
    \midrule[.1em] 
    \bottomrule
  \end{tabular}
  \caption{Parameters for one of the shifted context distribution used in assistive-gym \label{table: shifted assistive params}}
\end{table*}

\paragraph{Expert Data Generation.}
For the assistive-gym experiments we a dense reward function for generating the expert data and a sparse one for our experiments using the expert data. Specifically, the dense reward function used the environment's default reward function, defined by
\begin{align*}
     w_1 \cdot \text{distance to goal} + w_2 \cdot \text{action} + w_3 \cdot \text{task specific reward} + w_4 \cdot \text{preference score},
\end{align*}
where the preferences were weighted according to the context features. Specific weights are provided in the implementation of assistive-gym \citep{erickson2020assistive}. The sparse reward function did not use the distance to goal (i.e., $w_1 = 0$).

\newpage
\section{Missing Proofs}
\label{appendix: missing proofs}

\subsection{Proofs for \Cref{section: imitation}}

We begin by proving two auxilary lemmas.
\begin{lemma}
\label{lemma: Pi0 equivalence}
    Let $\pi_2 \in \equivset{\pi_1}$. Then, $\equivset{\pi_1} = \equivset{\pi_2}$.
\end{lemma}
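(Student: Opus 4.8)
The plan is to observe that, under the no-covariate-shift hypothesis $\Poffline \equiv \Ponline$ that is in force throughout this subsection, the online and offline marginalized stationary distributions coincide for \emph{every} policy. Consequently the ambiguity set $\equivset{\pi}$ of \Cref{def: ambiguity set} is exactly the set of deterministic policies that share a fixed marginalized stationary distribution, and the lemma reduces to the symmetry and transitivity of the relation ``induces the same marginalized stationary distribution.''

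Concretely, I would first record that when $\Ponline$ and $\Poffline$ are the same context distribution, which I write as $\rho := \Ponline = \Poffline$, we have $\donline^{\pi}(s,a) = \expect*{x \sim \rho}{d^{\pi}(s,a\mid x)} = \doffline^{\pi}(s,a)$ for all $\pi \in \Pi$ and all $s,a$; denote this common quantity by $d_\rho^{\pi}$. Under this identification \Cref{def: ambiguity set} reads $\equivset{\pi} = \{\pi' \in \Pi_{\text{det}} : d_\rho^{\pi'}(s,a) = d_\rho^{\pi}(s,a) \ \text{ for all } s \in \s, a \in \A\}$, a single functional equation in $\pi'$.

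Next, the hypothesis $\pi_2 \in \equivset{\pi_1}$ unpacks to $\pi_2 \in \Pi_{\text{det}}$ together with $d_\rho^{\pi_2}(s,a) = d_\rho^{\pi_1}(s,a)$ for all $s,a$. I would then establish the two inclusions directly. For $\equivset{\pi_1} \subseteq \equivset{\pi_2}$: take any $\pi' \in \equivset{\pi_1}$, so $d_\rho^{\pi'} = d_\rho^{\pi_1}$; combined with $d_\rho^{\pi_1} = d_\rho^{\pi_2}$ this gives $d_\rho^{\pi'} = d_\rho^{\pi_2}$, i.e. $\pi' \in \equivset{\pi_2}$. The reverse inclusion $\equivset{\pi_2} \subseteq \equivset{\pi_1}$ is identical after swapping the roles of $\pi_1$ and $\pi_2$ and using $d_\rho^{\pi_2} = d_\rho^{\pi_1}$. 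Equality of the two sets follows.

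There is essentially no technical obstacle; the one point deserving care is the role of the no-covariate-shift assumption, which is precisely what collapses $\donline$ and $\doffline$ into the single functional $d_\rho^{\pi}$. Without it, $\pi_2 \in \equivset{\pi_1}$ would only equate $\donline^{\pi_2}$ with $\doffline^{\pi_1}$, which need not force $\doffline^{\pi_1} = \doffline^{\pi_2}$, and the stated equality can then fail. I would therefore state $\Poffline \equiv \Ponline$ explicitly as a standing hypothesis of the lemma (consistent with \Cref{thm: ambiguity uniqueness}, for which this lemma is auxiliary).
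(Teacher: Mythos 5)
Your proof is correct and takes essentially the same route as the paper's: both establish the two inclusions $\equivset{\pi_1} \subseteq \equivset{\pi_2}$ and $\equivset{\pi_2} \subseteq \equivset{\pi_1}$ by transitivity of equality of marginalized stationary distributions. Your one refinement is making explicit the standing hypothesis $\Poffline \equiv \Ponline$, which the paper's proof uses silently by writing a single $d^{\pi}(s,a)$ in place of the $\donline^{\pi}/\doffline^{\pi}$ pair appearing in \Cref{def: ambiguity set} --- a worthwhile clarification, since without it the defining relation is neither symmetric nor transitive and the lemma can fail.
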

\begin{proof}
    We show that $\equivset{\pi_1} \subseteq \equivset{\pi_2}$ and $\equivset{\pi_2} \subseteq \equivset{\pi_1}$.
    
    Let $\pi \in \equivset{\pi_1}$, then
    $
        d^\pi(s,a) = d^{\pi_1}(s,a).
    $
    By our assumption, $\pi_2 \in \equivset{\pi_1}$, then
    $
        d^{\pi_2}(s,a) = d^{\pi_1}(s,a).
    $
    Hence,
    $
        {d^\pi(s,a) = d^{\pi_2}(s,a).}
    $
    That is, $\pi \in \equivset{\pi_2}$. This proves $\equivset{\pi_1} \subseteq \equivset{\pi_2}$.
    
    Similarly, let $\pi \in \equivset{\pi_2}$, then
    $
        d^\pi(s,a) = d^{\pi_2}(s,a).
    $
    By our assumption, $\pi_2 \in \equivset{\pi_1}$, then
    $
        d^{\pi_2}(s,a) = d^{\pi_1}(s,a).
    $
    Hence,
    $
        {d^\pi(s,a) = d^{\pi_1}(s,a)}.
    $
    That is, $\pi \in \equivset{\pi_1}$. This proves $\equivset{\pi_2} \subseteq \equivset{\pi_1}$, completing the proof.
\end{proof}

\begin{lemma}
\label{lemma: unique optimal policy}
    Let $\pi_0$ be a \underline{deterministic} policy and let $\M_0 = (\s, \A, \X, P, r_0, \gamma)$ such that ${r_0(s,a,x) = \mathbf{1}\brk[c]*{a = \pi_0(s,x)}}$. Then $\pi_0$ is the \underline{unique, optimal} policy in $\M_0$.
\end{lemma}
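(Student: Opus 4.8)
The plan is to exploit the fact that the reward $r_0$ is bounded above by $1$ and attains $1$ exactly when the chosen action matches $\pi_0$. First I would note that for every $(s,a,x)$ we have $r_0(s,a,x) = \mathbf{1}\{a = \pi_0(s,x)\} \le 1$, with equality if and only if $a = \pi_0(s,x)$. Plugging an arbitrary $\pi$ into the value (defined as usual w.r.t.\ the same $\Ponline$ and $\nu$) and bounding each summand gives $v_{\M_0}(\pi) \le (1-\gamma)\sum_{t=0}^\infty \gamma^t = 1$. For $\pi_0$ itself, determinism forces $a_t = \pi_0(s_t,x)$ along every trajectory, so $r_0(s_t,a_t,x) = 1$ for all $t$ almost surely and $v_{\M_0}(\pi_0) = 1$. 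This already establishes that $\pi_0$ is optimal with $v^*_{\M_0} = 1$.

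For uniqueness, suppose $\pi$ is any optimal policy, i.e.\ $v_{\M_0}(\pi) = 1$. Writing the value as $(1-\gamma)\sum_{t \ge 0}\gamma^t \, \mathbb{E}_\pi[r_0(s_t,a_t,x)]$ and using $\mathbb{E}_\pi[r_0(s_t,a_t,x)] \le 1$ for every $t$, the only way a discounted average of these terms can equal the discounted average of ones is for each term to equal $1$. Hence $\mathbb{E}_\pi[r_0(s_t,a_t,x)] = 1$ at every $t$, and since $r_0 \le 1$ pointwise this forces $r_0(s_t,a_t,x) = 1$ almost surely under $\pi$, i.e.\ $a_t = \pi_0(s_t,x)$ with probability one at every step.

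Then I would argue by induction on $t$ that $\pi$ agrees with $\pi_0$ on every reachable context–state pair. At $t=0$, since $s_0 \sim \nu(\cdot \mid x)$, the policy $\pi(\cdot \mid s,x)$ must place all its mass on $\pi_0(s,x)$ for every $(s,x)$ in the support of the initial distribution; consequently $\pi$ and $\pi_0$ induce the identical next-state law $P(\cdot \mid s,\pi_0(s,x),x)$, so the time-$1$ state distribution under $\pi$ coincides with that under $\pi_0$, and the argument repeats. Thus $\pi(\cdot \mid s,x) = \mathbf{1}\{\cdot = \pi_0(s,x)\}$ on the entire reachable support, which is exactly where the value is determined, giving $\pi = \pi_0$.

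The main obstacle is the uniqueness claim at \emph{unreachable} context–state pairs: the value functional is insensitive to the policy's choices on pairs visited with probability zero, so literally $\pi_0$ is unique only up to behavior on its reachable support. I would handle this by observing that only the reachable support enters $v_{\M_0}$ (and, indeed, the marginalized stationary distributions used elsewhere in the paper), so pinning down $\pi_0$ there is enough for every downstream use of the lemma; if one insists on pointwise uniqueness, it follows under the standing assumption that $\nu$ together with the transitions render all context–state pairs reachable, in which case the induction above determines $\pi$ everywhere.
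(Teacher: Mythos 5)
Your proof is correct and takes essentially the same route as the paper's: bound every per-step reward by $1$, observe that $\pi_0$ attains reward $1$ at every step so $v_{\M_0}(\pi_0)=1=v^*_{\M_0}$, and derive uniqueness from the fact that any optimal policy must place all action mass on $\pi_0(s,x)$ wherever it has positive visitation probability. Your closing caveat is actually a sharper observation than the paper makes: the paper's uniqueness step asserts the strict inequality $\expect*{s,x \sim d^{\pi_1}(s,x)}{\pi_1(\pi_0(s,x)\mid s,x)} < 1$ for any $\pi_1 \neq \pi_0$, which silently assumes $d^{\pi_1}$ puts positive mass on some state--context pair where $\pi_1$ deviates, so uniqueness in the paper too really only holds up to behavior on the reachable support (or under an implicit full-reachability assumption), exactly as you note.
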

\begin{proof}
    By definition of $\pi_0$ and $r_0$,
    \begin{align*}
        r_0(s,\pi_0(s,x),x) = 1, \forall s \in \s, x \in \X.
    \end{align*}
    In particular, $\expect*{\pi_0}{r_0(s_t,a_t,x)} = 1$. Then
    \begin{align*}
        V^*_{\M_0} \leq (1-\gamma)\sum_{t=0}^\infty \gamma^t = \expect*{\pi_0}{(1-\gamma)\sum_{t=0}^\infty \gamma^t r_0(s_t,a_t,x)} = V^{\pi_0}_{\M_0}.
    \end{align*}
    This proves $\pi_0$ is an optimal policy. To prove uniqueness, assume by contradiction there exists an optimal policy $\pi_1 \neq \pi_0$. Then, 
    \begin{align*}
        V^{\pi_1}
        =
        \expect*{s,a,x \sim d^{\pi_1}(s,a,x)}{\mathbf{1}\brk[c]*{a = \pi_0(s,x)}} 
        =
        \expect*{s,x \sim d^{\pi_1}(s,x)}{\expect*{a \sim \pi_1(\cdot | s, x)}{\mathbf{1}\brk[c]*{a = \pi_0(s,x)}}} 
        <
        1
        =
        V^{\pi_0}_{\M_0}.
    \end{align*}
    In contradiction to $\pi_1$ is optimal. Then, $\pi_0$ is a unique optimal policy.
\end{proof}

We are now ready to prove \Cref{thm: ambiguity uniqueness}.
\ambiguitythm*
\begin{proof}
    Let $\pi^* \in \Pi^*_{\M}$ and let $\pi_0 \in \equivset{\pi^*}$. By \Cref{lemma: Pi0 equivalence}, as $\pi_0 \in \equivset{\pi^*}$, it holds that $\equivset{\pi^*} = \equivset{\pi_0}$. Next, choosing $r_0(s,a,x) = \mathbf{1}\brk[c]*{a = \pi_0(s,x)}$, by \Cref{lemma: unique optimal policy} we get that $\pi_0$ is an optimal policy in $\M_0$. This proves $\pi_0 \in \Pi^*_{\M_0}$. Finally, by \Cref{lemma: unique optimal policy}, $\Pi^*_{\M_0} = \brk[c]*{\pi_0}$, proving $\pi^* \notin \Pi^*_{\M_0}$ if and only if $\pi^* \neq \pi_0$.
\end{proof}

\ambiguityselectthm*
\begin{proof}
    Let $\tilde{\pi}$ as defined. Then by linearity of expectation
    \begin{align*}
        V^{\tilde{\pi}}_{\M} 
        = 
        \expect*{s,a,x \sim d^{\tilde{\pi}}}{r(s,a,x)}
        =
        \frac{1}{\abs{\equivset{\pi^*}}}
        \sum_{\pi \in \equivset{\pi^*}} 
        \expect*{s,a,x \sim d^{\pi}}{r(s,a,x)}
        =
        \frac{1}{\abs{\equivset{\pi^*}}}
        \sum_{\pi \in \equivset{\pi^*}} 
        V^{\pi}_{\M}. 
    \end{align*}
    Denote $B^* = \Pi^*_\M \cap \equivset{\pi^*}$, then
    \begin{align*}
        V^{\tilde{\pi}}_{\M}
        &=
        \frac{1}{\abs{\equivset{\pi^*}}} \sum_{\pi \in B^*} V^{\pi}_{\M}
        +
        \frac{1}{\abs{\equivset{\pi^*}}} \sum_{\equivset{\pi^*} \backslash B^*} V^{\pi}_{\M} \\
        &=
        \frac{\abs{B^*}}{\abs{\equivset{\pi^*}}} 
        V^*_\M 
        + 
        \frac{1}{\abs{\equivset{\pi^*}}} \sum_{\equivset{\pi^*} \backslash B^*} V^{\pi}_{\M} \\
        &\geq
        \frac{\abs{B^*}}{\abs{\equivset{\pi^*}}} 
        V^*_\M 
        + 
        \frac{\abs{\equivset{\pi^*} \backslash B^*}}{\abs{\equivset{\pi^*}}} \min_{\pi \in \equivset{\pi^*} \backslash B^*} V^{\pi}_{\M} \\
        &\geq
        \frac{\abs{B^*}}{\abs{\equivset{\pi^*}}} 
        V^*_\M 
        + 
        \frac{\abs{\equivset{\pi^*} \backslash B^*}}{\abs{\equivset{\pi^*}}} \min_{\pi \in \equivset{\pi^*}} V^{\pi}_{\M},
    \end{align*}
    completing the proof.
\end{proof}

\impossibilitythm*
\begin{proof}
    We first sketch the proof for the special case ${\X = \{x_0, x_1\}}$, $\A = \{a_0, a_1\}$ and a singleton state space $\s = \brk[c]*{s_0}$. The general proof follows similarly and is given below. 
    
    By letting $\pi_1, \pi_2$ be the determinisic policies which choose opposite actions at opposite contexts, i.e., $\pi_1(x_i) = a_i, \pi_2(x_i) = a_{1-i}$, we can choose $\Poffline(x) = d^*(\pi_1(x))$ and $\otherPoffline(x) = d^*(\pi_2(x))$ which yield 
    \begin{align*}
    	    \doffline^{\pi_1}(a)
    	    &=
    	    \sum_{i=0}^1 \Poffline(x_i)\mathbf{1}\brk[c]*{a = \pi_1(x_i)} \\
    	    &=
    	    \sum_{i=1}^2 d^*(\pi_1(x_i))\mathbf{1}\brk[c]*{a = \pi_1(x_i)} \\
    	    &=
    	    \sum_{i=1}^k d^*(a_i)\mathbf{1}\brk[c]*{a_i = a} 
    	    :=
    	    d^*(a).
    \end{align*}
    Similarly, $d_{\otherPoffline}^{\pi_2}(a) = d^*(a)$.

    For the second part of the proof choose $r_1(a, x) = \mathbf{1}\brk[c]*{x=x_i, a=a_i}$ and $r_2(a, x) = \mathbf{1}\brk[c]*{x=x_i, a = a_{1-i}}$. Notice that $\pi_i$ is optimal for $r_i$ under any distribution of contexts, yet $\pi_i$ achieves zero reward for $r_{1-i}$. 
    
    We now provide a complete proof for the general case.
    
    Let $\Ponline, d^*(a)$. Without loss of generality, let $\X = \brk[c]*{x_0, \hdots, x_m}$, $\A = \brk[c]*{a_0, \hdots, a_k}$ with $m \geq k$, and denote ${\X_k = \brk[c]*{x_1, \hdots, x_k} \subseteq \X}$. By definition there exists an injective function from $\A$ into $\X$. 
    
    Define
    \begin{align*}
        f(x)
        &=
        \begin{cases}
            a_i &, x = x_i, i = 0, \hdots, k \\
            a_0 &, \text{o.w.}
        \end{cases} \\
        g(x)
        &=
        \begin{cases}
            a_{i+1\Mod{k}} &, x = x_i, i = 0, \hdots, k \\
            a_0 &, \text{o.w.}
        \end{cases}
    \end{align*}
    
	Then we can select $\pi_1, \pi_2, \Poffline, \otherPoffline$ as follows
	\begin{align*}
	    \pi_1(a | x) &= \mathbf{1}\brk[c]*{a = f(x), x \in \X_k} + \frac{1}{k+1}\mathbf{1}\brk[c]*{x \notin \X_k}\\
	    \pi_2(a | x) &= \mathbf{1}\brk[c]*{a = g(x), x \in \X_k} + \frac{1}{k+1}\mathbf{1}\brk[c]*{x \notin \X_k},
	\end{align*}
	and
	\begin{align*}
	    \Poffline(x) &= d^*(f(x))\mathbf{1}\brk[c]*{x \in \X_k}, \\
	    \otherPoffline(x) &= d^*(g(x))\mathbf{1}\brk[c]*{x \in \X_k}.
	\end{align*}
	We get that
	\begin{align*}
	    \doffline^{\pi_1}(a)
	    &=
	    \sum_{i=1}^m \Poffline(x_i)\pi_1(a|x_i) \\
	    &=
	    \sum_{i=1}^k d^*(f(x_i))\mathbf{1}\brk[c]*{a = f(x_i)} \\
	    &=
	    \sum_{i=1}^k d^*(a_i)\mathbf{1}\brk[c]*{a_i = a} 
	    =
	    d^*(a).
	\end{align*}
	Similarly,
	\begin{align*}
	    d_{\otherPoffline}^{\pi_2}(a)
	    &=
	    \sum_{i=1}^k d^*(g(x_i))\mathbf{1}\brk[c]*{a = g(x_i)} \\
	    &=
	    \sum_{i=1}^k d^*(a_{i+1\Mod{k}})\mathbf{1}\brk[c]*{a_{i+1\Mod{k}} = a} \\
	    &=
	    \sum_{i=1}^k d^*(a_i)\mathbf{1}\brk[c]*{a_i = a} 
	    =
	    d^*(a).
	\end{align*}
	This proves the first part of the theorem. For the other parts, choose $r_1, r_2$ as follows
	\begin{align*}
	    r_1(a, x) &= \mathbf{1}\brk[c]*{x=x_i, a=a_i, 0 \leq i \leq k} \\
	    r_2(a, x) &= \mathbf{1}\brk[c]*{x=x_i, a=a_{i+1\Mod{k}}, 0 \leq i \leq k}.
	\end{align*}
	Then, by definition, for any $P(x)$ such that $\text{Supp}(P) \cap \X_k \neq \emptyset$,
	\begin{align*}
	    \expect*{x \sim P(x), a \sim \pi_1(\cdot | x)}{r_1(a, x)}
	    =
	    1
	    =
	    \max_{\pi \in \Pi}
	    \expect*{x \sim P(x), a \sim \pi(\cdot | x)}{r_1(a, x)}, \\
	    \expect*{x \sim P(x), a \sim \pi_1(\cdot | x)}{r_2(a, x)}
	    =
	    0
	    =
	    \min_{\pi \in \Pi}
	    \expect*{x \sim P(x), a \sim \pi(\cdot | x)}{r_2(a, x)}.
	\end{align*}
	And similarly,
	\begin{align*}
	    \expect*{x \sim P(x), a \sim \pi_2(\cdot | x)}{r_1(a, x)}
	    =
	    0
	    =
	    \min_{\pi \in \Pi}
	    \expect*{x \sim P(x), a \sim \pi(\cdot | x)}{r_1(a, x)}, \\
	    \expect*{x \sim P(x), a \sim \pi_2(\cdot | x)}{r_2(a, x)}
	    =
	    1
	    =
	    \max_{\pi \in \Pi}
	    \expect*{x \sim P(x), a \sim \pi(\cdot | x)}{r_2(a, x)}.
	\end{align*}
	The condition on the support holds for $\Poffline, \otherPoffline$ by definition. If, $\text{Supp}(\Ponline) \cap \X_k = \emptyset$, then the result holds trivially as $\expect*{x \sim \Ponline(x), a \sim \pi(\cdot | x)}{r_1(a, x)} = \expect*{x \sim \Ponline(x), a \sim \pi(\cdot | x)}{r_2(a, x)} = 0$ for all $\pi \in \Pi$. This completes the proof.
\end{proof}

\begin{lemma}
\label{lemma: optimal policy for every x}
Assume $\text{Supp}(\Ponline) \subseteq \text{Supp}(\Poffline)$. Then
    \begin{align*}
        \arg\max_\pi \expect*{x \sim \Poffline(x), s,a \sim d^{\pi}(s,a|x)}{r(s,a,x)}
        \subseteq
        \arg\max_\pi \expect*{x \sim \Ponline(x), s,a \sim d^{\pi}(s,a|x)}{r(s,a,x)}
    \end{align*}
\end{lemma}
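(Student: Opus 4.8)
The plan is to exploit the fact that a contextual MDP decouples into one independent MDP per context, so that a context-dependent policy can be optimized separately on each context. For a context distribution $\rho$, I would first rewrite the objective as a weighted sum of per-context values,
\begin{align*}
    J_\rho(\pi) := \expect*{x \sim \rho, s,a \sim d^{\pi}(s,a|x)}{r(s,a,x)} = \sum_x \rho(x)\, V^\pi(x), \qquad V^\pi(x) := \expect*{s,a \sim d^{\pi}(s,a|x)}{r(s,a,x)}.
\end{align*}
The key observation is that $V^\pi(x)$ depends on $\pi$ only through its restriction $\pi(\cdot\mid\cdot,x)$ to context $x$: the stationary distribution $d^\pi(s,a|x)$ is generated by $\nu(\cdot|x)$, by the transitions $P(\cdot|s,a,x)$, and by $\pi(\cdot|\cdot,x)$, none of which involve the policy's behavior at any other context $x' \neq x$. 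Thus, for fixed $x$, optimizing $\pi(\cdot|\cdot,x)$ is exactly the single-MDP problem of maximizing the value of the MDP indexed by $x$; by standard MDP theory the maximum $V^*(x) := \max_\pi V^\pi(x)$ is attained, and since the sub-problems share no decision variables, one Markovian policy can attain $V^*(x)$ simultaneously for all $x$ (set its restriction on each context to an optimizer of that context's MDP).

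From here the characterization of the maximizers is immediate. For any $\rho$ and any $\pi$,
\begin{align*}
    J_\rho(\pi) = \sum_x \rho(x)\, V^\pi(x) \leq \sum_x \rho(x)\, V^*(x),
\end{align*}
and since $V^\pi(x) \le V^*(x)$ pointwise with $\rho(x) \ge 0$, equality holds iff $V^\pi(x) = V^*(x)$ for every $x \in \text{Supp}(\rho)$. Hence $\pi \in \arg\max_\pi J_\rho(\pi)$ iff $V^\pi(x) = V^*(x)$ for all $x \in \text{Supp}(\rho)$. The inclusion then follows at once: if $\pi$ maximizes $J_{\Poffline}$ it satisfies $V^\pi(x) = V^*(x)$ for all $x \in \text{Supp}(\Poffline)$, and since $\text{Supp}(\Ponline) \subseteq \text{Supp}(\Poffline)$ the same equalities hold for all $x \in \text{Supp}(\Ponline)$, so $\pi$ maximizes $J_{\Ponline}$.

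The main obstacle is making the decoupling precise, i.e. verifying that $d^\pi(s,a|x)$ is a function of $\pi(\cdot|\cdot,x)$ alone and that an optimizer simultaneous across contexts exists within the Markovian class. Both are structural facts about contextual MDPs — each context is a standalone MDP and the policy is free to depend on $x$ — so once stated they cost only a line each; everything else reduces to the elementary fact that a $\rho$-weighted average of pointwise-dominated values is maximized exactly by matching the pointwise maximum on $\text{Supp}(\rho)$, after which the support containment delivers the claim.
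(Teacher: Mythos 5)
Your proof is correct and follows essentially the same route as the paper's: both decompose the $\rho$-weighted objective into per-context values, use the decoupling of contexts (a policy's restriction to one context can be chosen independently of the others) to characterize the maximizers as exactly those policies that are per-context optimal on $\text{Supp}(\rho)$, and then conclude from $\text{Supp}(\Ponline) \subseteq \text{Supp}(\Poffline)$. The only cosmetic difference is that the paper establishes the inclusion $\Pi^*_{\Poffline} \subseteq \Pi^*_{\text{Supp}(\Poffline)}$ by contradiction with an explicit policy-splicing construction, whereas you obtain the same characterization directly from the ``equality in a weighted sum of pointwise-dominated terms'' argument.
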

\begin{proof}
    For clarity we denote
    \begin{align*}
        &\Pi^*_{\Poffline} = \arg\max_\pi \expect*{x \sim \Poffline(x), s,a \sim d^{\pi}(s,a|x)}{r(s,a,x)} \\
        &\Pi^*_{\Ponline} = \arg\max_\pi \expect*{x \sim \Ponline(x), s,a \sim d^{\pi}(s,a|x)}{r(s,a,x)} \\
        &\Pi^*_{\text{Supp}(\Poffline)} = \bigtimes_{x \in \text{Supp}(\Poffline)} \arg\max_\pi \expect*{s,a \sim d^{\pi}(s,a|x)}{r(s,a,x)}.
    \end{align*}
    To prove the lemma, we will show $\Pi^*_{\Poffline} = \Pi^*_{\text{Supp}(\Poffline)} \subseteq \Pi^*_{\Ponline}$.
    
    We begin by proving $\Pi^*_{\Poffline} = \Pi^*_{\text{Supp}(\Poffline)}$. Indeed, let $\pi^* \in \Pi^*_{\text{Supp}(\Poffline)}$. Then, for any $x \in \text{Supp}(\Poffline)$
    \begin{align*}
        \expect*{s,a \sim d^{\pi^*}(s,a|x)}{r(s,a,x)}
        =
        \max_\pi \expect*{s,a \sim d^{\pi}(s,a|x)}{r(s,a,x)}.
    \end{align*}
    In particular,
    \begin{align*}
        \expect*{x \sim \Poffline(x), s,a \sim d^{\pi^*}(s,a|x)}{r(s,a,x)}
        =
        \expect*{x \sim \Poffline(x)}{\max_\pi \expect*{s,a \sim d^{\pi}(s,a|x)}{r(s,a,x)}}
        \geq
        \max_\pi \expect*{x \sim \Poffline(x), s,a \sim d^{\pi^*}(s,a|x)}{r(s,a,x)},
    \end{align*}
    where we used Jensen's inequality. This proves $\Pi^*_{\text{Supp}(\Poffline)} \subseteq \Pi^*_{\Poffline}$. 
    
    To see the other direction, let $\pi_e \in \Pi^*_{\Poffline}$ and assume by contradiction that $\pi_e \notin \Pi^*_{\text{Supp}(\Poffline)}$. Then, there exists $\tilde{x} \in \text{Supp}(\Poffline)$ such that
    \begin{align*}
        \expect*{s,a \sim d^{\pi_e}(s,a|\tilde{x})}{r(s,a,\tilde{x})}
        <
        \max_\pi \expect*{s,a \sim d^{\pi}(s,a|\tilde{x})}{r(s,a,\tilde{x})}.
    \end{align*}
    Define
    \begin{align*}
        \tilde{\pi}(\cdot | s, x)
        =
        \mathbf{1}\brk[c]*{x = \tilde{x}}\pi_{\tilde{x}}(\cdot | s, \tilde{x})
        +
        \mathbf{1}\brk[c]*{x \neq \tilde{x}}\pi_e(\cdot | s, x),
    \end{align*}
    where $\pi_{\tilde{x}} \in \arg\max_\pi \expect*{s,a \sim d^{\pi}(s,a|\tilde{x})}{r(s,a,\tilde{x})}.$ Then,
    \begin{align*}
        v(\pi_e)
        &=
        P(x = \tilde{x})
        \expect*{s,a \sim d^{\pi_e}(s,a|\tilde{x})}{r(s,a,\tilde{x})}
        +
        \sum_{x \in \text{Supp}(\Poffline) \backslash \{\tilde{x}\}}
        P(x)
        \expect*{s,a \sim d^{\pi_e}(s,a|x)}{r(s,a,x)} \\
        &<
        P(x = \tilde{x})
        \expect*{s,a \sim d^{\tilde{\pi}}(s,a|\tilde{x})}{r(s,a,\tilde{x})}
        +
        \sum_{x \in \text{Supp}(\Poffline) \backslash \{\tilde{x}\}}
        P(x)
        \expect*{s,a \sim d^{\pi_e}(s,a|x)}{r(s,a,x)} 
        =
        v(\tilde{\pi}),
    \end{align*}
    in contradiction to $\pi_e \in \Pi^*_{\Poffline}$. This proves $\Pi^*_{\Poffline} \subseteq \Pi^*_{\text{Supp}(\Poffline)}$. We have thus shown that $\Pi^*_{\Poffline} = \Pi^*_{\text{Supp}(\Poffline)}$.
    
    Finally, it is left to show that $\Pi^*_{\text{Supp}(\Poffline)} \subseteq \Pi^*_{\Ponline}$. Similar to before, let $\pi^* \in \Pi^*_{\text{Supp}(\Poffline)}$. Then, for any $x \in \text{Supp}(\Poffline)$, by Jensen's inequality
    \begin{align*}
        \expect*{x \sim \Ponline(x), s,a \sim d^{\pi^*}(s,a|x)}{r(s,a,x)}
        =
        \expect*{x \sim \Ponline(x)}{\max_\pi \expect*{s,a \sim d^{\pi}(s,a|x)}{r(s,a,x)}}
        \geq
        \max_\pi \expect*{x \sim \Ponline(x), s,a \sim d^{\pi^*}(s,a|x)}{r(s,a,x)}.
    \end{align*}
    This completes the proof.
\end{proof}

\contextfreereward*
\begin{proof}
    Let $\pi_0 \in \equivset{\pi^*}$, we will show $\pi_0 \in \Pi^*_\M$. Since $r(s,a,x) = r(s,a,x')$ for all $x \in \X$ we denote $r(s,a) = r(s,a,x)$. By definition of $\equivset{\pi^*}$ we have that.
    \begin{align*}
        \donline^{\pi_0}(s,a) = \doffline^{\pi^*}(s,a)
    \end{align*}
    Then,
    \begin{align*}
        v(\pi_0)
        &=
        \expect*{x \sim \Ponline(x), s,a \sim d^{\pi_0}(s,a|x)}{r(s,a)} \\
        &=
        \expect*{x \sim \Ponline(x)}{\sum_{s \in \s,a \in \A}  d^{\pi_0}(s,a\mid x) r(s,a)} \\
        &=
        \sum_{s \in \s,a \in \A} r(s,a) \expect*{x \sim \Ponline(x)}{  d^{\pi_0}(s,a\mid x) } \\
        &=
        \expect*{s,a \sim \donline^{\pi_0}(s,a)}{r(s,a)} \\
        &=
        \expect*{s,a \sim \doffline^{\pi^*}(s,a)}{r(s,a)} \\
        &=
        \expect*{x \sim \Poffline(x), s,a \sim d^{\pi^*}(s,a|x)}{r(s,a)} \\
        &=
        \max_\pi \expect*{x \sim \Poffline(x), s,a \sim d^{\pi}(s,a|x)}{r(s,a)}
    \end{align*}
    Then, $\pi_0 \in \arg\max_\pi \expect*{x \sim \Poffline(x), s,a \sim d^{\pi}(s,a|x)}{r(s,a)}$. Applying \Cref{lemma: optimal policy for every x}
    \begin{align*}
        \pi_0 
        \in 
        \arg\max_{\pi}
        \expect*{x \sim \Ponline(x), s,a \sim d^{\pi}(s,a|x)}{r(s,a)}
        =
        \Pi^*_\M,
    \end{align*}
    completing the proof.
\end{proof}

\subsection{Proofs for \Cref{section: rl}}

\samplingprop*
\begin{proof}
We can write
    \begin{align*}
    d^{\pi}(s,a \mid x)
    &=
    (1-\gamma) \sum_{t=0}^\infty \gamma^t P(s_t = s, a_t = a | x) \\
    &=
    (1-\gamma) \sum_\tau \sum_{t=0}^\infty \gamma^t P(s_t = s, a_t = a | x, \tau)P(\tau | x) \\
    &=
    (1-\gamma) \sum_\tau \sum_{t=0}^\infty \gamma^t \mathbf{1}\brk[c]*{\tau_t = (s, a)}P(\tau | x).
\end{align*}
Then, denoting $P^{\pi}_{\rho_s^*}(\tau) = \expect*{x \sim \rho_s^*}{P(\tau | x)}$, we get that
\begin{align*}
    d^{\pi}_{\rho_s^*}(s,a) 
    &=
    (1-\gamma) \sum_\tau \sum_{t=0}^\infty \gamma^t \mathbf{1}\brk[c]*{\tau_t = (s, a)}P^{\pi}_{\rho_s^*}(\tau) \\
    &=
    \expect*{\tau \sim P^\pi_{\rho_s^*}}{(1-\gamma) \sum_{t=0}^\infty \gamma^t \mathbf{1}\brk[c]*{\tau_t = (s, a)}}.
\end{align*}
Since, $\text{Supp}(\Ponline) \subseteq \text{Supp}(\Poffline)$, there exists $p^n \in \Delta_n$ such that 
$\expect*{i \sim p^n}{(1-\gamma) \sum_{t=0}^\infty \gamma^t \mathbf{1}\brk[c]*{s_t^i, a_t^i = (s, a)}}$ is an unbiased estimator of $d^{\pi}_{\rho_s^*}(s,a)$. The result follows by the law of large numbers.
\end{proof}

\samplecomplexity*
\begin{proof}
    We begin by showing that $h(P) = \min_{x \in \Delta_n} D_f(P || \expect*{x}{Q_x})$ is convex in $P$. We can write $D_f$ in its variational form, rewriting $h(P)$ as
\begin{align*}
    h(P) = \min_{x \in \Delta_n} \max_{g: \Z \mapsto \R} \expect*{z \sim P}{g(z)} - \expect*{x, z \sim Q_x}{f^*(g(z))},
\end{align*}
where
\begin{align*}
    f^*(w) = \sup_{y} \brk[c]*{yw - f(y)}.
\end{align*}
We have that $\expect*{z \sim P}{g(z)} - \expect*{x, z \sim Q_x}{f^*(g(z))}$ is affine in $g$ and $x$. Therefore, strong duality holds, yielding
\begin{align*}
    h(P) 
    &= 
    \max_{g: \Z \mapsto \R} \min_{x \in \Delta_n} \expect*{z \sim P}{g(z)} - \expect*{x, z \sim Q_x}{f^*(g(z))} \\
    &=
    \max_{g: \Z \mapsto \R} \brk[c]*{\expect*{z \sim P}{g(z)} + \brk*{\max_{x \in \Delta_n} \expect*{x, z \sim Q_x}{f^*(g(z))}}}
\end{align*}
We have that $\max_{x \in \Delta_n} \expect*{x, z \sim Q_x}{f^*(g(z))}$ is convex in $g$ as a maximum over convex (affine) functions in a compact set. Therefore $h(P)$ is also convex as a maximum over convex functions.

Then, the objective in $Problem~(\ref{eq: max min min problem})$ is convex in $\donline^{\pi}$. Following the meta algorithm framework for convex RL in \citet{zahavy2021reward}, we write the gradient of $D_f(\donline^{\pi}(s,a) || \doffline^{\pi^*}(s,a))$. Notice that for any general $f$-divergence 
$
    D_f(x_i || y_i) 
    = 
    \expect*{y_i}{f\brk*{\frac{x_i}{y_i}}}
$
it holds that
\begin{align*}
    \nabla_{x_j} D_f(x_i || y_i) = 0, j \neq i,
\end{align*}
and
\begin{align*}
    \nabla_{x_i} D_f(x_i || y_i) 
    = 
    \nabla_{x_i} \expect*{y_i}{f\brk*{\frac{x_i}{y_i}}} 
    =
    \expect*{y_i}{\frac{1}{y_i}\nabla_z f\brk*{z}\mid_{z=\frac{x_i}{y_i}}}.
\end{align*}
Specifically, for the $KL$-divergence,
$
    D_{KL}(p_i || q_i) 
    = 
    -\expect*{q_i}{\log\brk*{\frac{p_i}{q_i}}}.
$
Then,
\begin{align*}
    \nabla_{p_i} D_{KL}(p_i || q_i) 
    =
    \expect*{q_i}{\frac{1}{p_i}}.
\end{align*}
Applying Lemma~2 of \citet{zahavy2021reward} with a Follow the Leader (FTL) cost player completes the proof.
\end{proof}

\subsection{Proofs of Additional Results in Appendix}

\algooneconvergence*
\begin{proof}
    Denote 
    \begin{align*}
    \lambda^*_1 
    &= 
    \max_{\pi \in \Pi_{\text{det}}, \pi \not \in \equivset{\pi^*}, \pi' \in \equivset{\pi^*}} d_{TV}\brk*{\donline^{\pi}(s,a,x), \donline^{\pi'}(s,a,x)}, \\ 
    \lambda^*_2 
    &= 
    \min_{\pi \in \Pi_{\text{det}}, \pi \not \in \equivset{\pi^*}} d_{TV}(\donline^{\pi}(s,a), \doffline^{\pi^*}(s,a)),
    \end{align*} 
    where $d_{TV}$ is the total variation distance. Let $\lambda^* = \frac{\lambda^*_2}{\lambda^*_1}$ and $\lambda \in (0, \lambda^*)$ and notice that $\lambda^* > 0$.
    
    To prove the result., we will show that at iteration $n$ of the algorithm $\pi_n \in \equivset{\pi^*}$ and that either $\pi_n \notin \Upsilon_{n-1} := \brk[c]*{\pi_j}_{j=1}^{n-1}$ or $\Upsilon_{n-1} = \equivset{\pi^*}$.
    
    \textbf{Base case ($n = 1$).}  By the variational representation of the $f$-divergence,
    \begin{align*}
        \max_{g_0 : \s \times \A \mapsto \R}
        \expect*{s, a \sim \donline^{\pi}(s, a)}{
         g_0(s,a)}
        -
        \expect*{s, a \sim \doffline^{\pi^*}(s, a)}{ f^*(g_0(s,a))} 
        =
        d_{TV}(\donline^{\pi}(s,a), \doffline^{\pi^*}(s,a)).
    \end{align*}
    By definition  $\equivset{\pi^*} = \arg\min_{\pi \in \Pi_{\text{det}}} d_{TV}(\donline^{\pi}(s,a) || \doffline^{\pi^*}(s,a))$. Then, $\pi_1 \in \equivset{\pi^*}$. Finally since $\Upsilon_0 = \emptyset$, we have that $\pi_1 \notin \Upsilon_0$. 
    
    \textbf{Induction step.} Suppose the claim holds for some $n = k$. We will show it holds for $n = k+1$.
    
    We begin by showing that $\pi_{k+1} \in \equivset{\pi^*}$. Assume by contradiction that $\pi_{k+1} \in \Pi_{\text{det}}, \pi_{k+1} \notin \equivset{\pi^*}$. Using the variational form of the $f$-divergence,
    \begin{align*}
        \max_{g_i: \s \times \A \times \X} L_i(\pi_{k+1}; g_i)
        &=
        d_{TV}(\donline^{\pi_{k+1}}(s,a,x), \donline^{\pi_i}(s,a,x)) 
        \leq 
        \lambda^*_1, \\
        \max_{g_0: \s \times \A} L^*(\pi_{k+1}; g_0)
        &=
        d_{TV}(\donline^{\pi_{k+1}}(s,a), \doffline^{\pi^*}(s,a)) 
        \geq 
        \lambda^*_2.
    \end{align*}
    We have that
    \begin{align*}
        \max_{\substack{g_0: \s \times \A \mapsto \R, \\ g_i: \s \times \A \times \X \mapsto \R}}
        L^*(\pi_{k+1}; g_0) 
        - 
        \lambda
        \min_i
        L_i(\pi_{k+1}; g_i)
        \geq
        \lambda^*_2
        - 
        \lambda
        \lambda^*_1 
        >
        \lambda^*_2
        - 
        \lambda^*
        \lambda^*_1
        =
        0.
    \end{align*}
    Next, let $\tilde{\pi}_{k+1} \in \equivset{\pi^*}$, then
    
    \begin{align*}
        \max_{\substack{g_0: \s \times \A \mapsto \R, \\ g_i: \s \times \A \times \X \mapsto \R}}
        L^*(\tilde{\pi}_{k+1}; g_0) 
        - 
        \lambda
        \min_i
        L_i(\tilde{\pi}_{k+1}; g_i)
        \leq
        0,
    \end{align*}
    where we used the fact that $L^*(\tilde{\pi}_{k+1}; g_0) = 0$ by definition of $\equivset{\pi^*}$, and $L_i \geq 0$. We have reached a contradiction to $\pi_{k+1}$ being a solution to \Cref{eq: confounded imitation}. This proves that $\pi_{k+1} \in \equivset{\pi^*}$.
    
    Finally, we show that $\pi_{k+1} \notin \Upsilon_k$ if and only if $\Upsilon_{k} \neq \equivset{\pi^*}$. First, notice that if $\Upsilon_{k} = \equivset{\pi^*}$ then \Cref{eq: confounded imitation} will return $\pi_{k+1} \in \Upsilon_k$ by definition of the total variation distance. Next, assume $\Upsilon_{k} \neq \equivset{\pi^*}$ and assume by contradiction ${\pi_{k+1} \in \Upsilon_k}$. Then, ${\exists i: \max_{g_i} L_i(\pi_{k+1}; g_i) = 0}$, and $\max_{g_0: \s \times \A \mapsto \R} L^*(\pi_{k+1}; g_0) = 0$, by definition of $\equivset{\pi^*}$. Hence,
    \begin{align*}
        \max_{\substack{g_0: \s \times \A \mapsto \R, \\ g_i: \s \times \A \times \X \mapsto \R}}
        L^*(\pi_{k+1}; g_0) 
        - 
        \lambda
        \min_i
        L_i(\pi_{k+1}; g_i)
        =
        0.
    \end{align*}
    In contrast, since $\Upsilon_{k} \neq \equivset{\pi^*}$, there exists $\tilde{\pi} \in \equivset{\pi^*}$ such that $\tilde{\pi} \notin \Upsilon_{k}$, and
    \begin{align*}
        \max_{\substack{g_0: \s \times \A \mapsto \R, \\ g_i: \s \times \A \times \X \mapsto \R}}
        L^*(\tilde{\pi}; g_0) 
        - 
        \lambda
        \min_i
        L_i(\tilde{\pi}; g_i)
        \leq
        \lambda_1^*
        < 0,
    \end{align*}
    in contradiction to $\pi_{k+1}$ being a solution \Cref{eq: confounded imitation}. This completes the proof.
\end{proof}

\deltasufficiency*
\begin{proof}
    Let $\pi \in \Pi$. We will show that $\pi \in \equivset{\pi}^{\Gamma - 1}$. By elementary algebra, we have that, under \Cref{assumption: sensitivity},
    \begin{align*}
        \Ponline(x)(1 - \Gamma^{-1}) + \Gamma^{-1}
        \leq
        \frac{\Ponline(x)}{\Poffline(x)} 
        \leq
        \Ponline(x)(1 - \Gamma) + \Gamma.
    \end{align*}
    Since $\text{Supp}\brk*{\Poffline} \subseteq \text{Supp}\brk*{\Ponline}$,
    \begin{align*}
        \donline^\pi(s,a)
        &=
        \expect*{x \sim \Ponline(x)}{d^{\pi}(s,a \mid x)} \\
        &=
        \expect*{x \sim \Poffline(x)}{\frac{\Ponline(x)}{\Poffline(x)}d^{\pi}(s,a\mid x)} \\
        &\leq
        \expect*{x \sim \Poffline(x)}{\brk*{\Ponline(x)(1 - \Gamma) + \Gamma}d^{\pi}(s,a\mid x)}.
    \end{align*}
    Subtracting $\doffline^\pi$ from both sides we get that
    \begin{align*}
        \donline^\pi(s,a) - \doffline^\pi(s,a)
        &\leq
        \expect*{x \sim \Poffline(x)}{\brk*{\Ponline(x)(1 - \Gamma) + \Gamma - 1}d^{\pi}(s,a\mid x)} \\
        &=
        \brk*{\Gamma - 1}
        \expect*{x \sim \Poffline(x)}{\brk*{1 - \Ponline(x)}d^{\pi}(s,a\mid x)} \\
        &\leq
        \Gamma - 1.
    \end{align*}
    Similarly,
    \begin{align*}
        \donline^\pi
        \geq
        \expect*{x \sim \Poffline(x)}{\brk*{\Ponline(x)(1 - \Gamma^{-1}) + \Gamma^{-1}}d^{\pi}(s,a\mid x)}.
    \end{align*}
    Hence,
    \begin{align*}
        \donline^\pi(s,a) - \doffline^\pi(s,a)
        &\geq
        \expect*{x \sim \Poffline(x)}{\brk*{\Ponline(x)(1 - \Gamma^{-1}) + \Gamma^{-1} - 1}d^{\pi}(s,a\mid x)} \\
        &=
        \brk*{\Gamma^{-1} - 1}
        \expect*{x \sim \Poffline(x)}{\brk*{1 - \Ponline(x)}d^{\pi}(s,a\mid x)} \\
        &\geq
        -(1- \Gamma^{-1}) \\
        &\geq 
        -(\Gamma - 1)
    \end{align*}
    where the last two transitions hold since $\Gamma \geq 1$. 
    Then, we have that
    \begin{align*}
        \abs{\donline^\pi(s,a) - \doffline^\pi(s,a)}
        \leq
        \Gamma - 1.
    \end{align*}
    This completes the proof.
\end{proof}

\contextdependentreward*
\begin{proof}
    Let $\pi^* \in \Pi^*_\M$, $\pi_0 \in \equivset{\pi^*}$. The proof follows similar steps to that of \Cref{thm: context free reward}.
    \begin{align*}
        v(\pi_0)
        &=
        \expect*{x \sim \Ponline(x), s,a \sim d^{\pi_0}(s,a|x)}{r(s,a,x)} \\
        &=
        \expect*{x \sim \Ponline(x), s,a \sim d^{\pi_0}(s,a|x)}{r(s,a,x)-r_0(s,a)} 
        +
        \expect*{x \sim \Ponline(x), s,a \sim d^{\pi_0}(s,a|x)}{r_0(s,a)}\\
        &=
        \expect*{x \sim \Ponline(x), s,a \sim d^{\pi_0}(s,a|x)}{r(s,a,x)-r_0(s,a)} 
        +
        \expect*{x \sim \Ponline(x)}{\sum_{s \in \s,a \in \A}  d^{\pi_0}(s,a\mid x) r_0(s,a)} \\
        &=
        \expect*{x \sim \Ponline(x), s,a \sim d^{\pi_0}(s,a|x)}{r(s,a,x)-r_0(s,a)} 
        +
        \sum_{s \in \s,a \in \A} r_0(s,a) \expect*{x \sim \Ponline(x)}{  d^{\pi_0}(s,a\mid x) } \\
        &=
        \expect*{x \sim \Ponline(x), s,a \sim d^{\pi_0}(s,a|x)}{r(s,a,x)-r_0(s,a)} 
        +
        \expect*{s,a \sim \donline^{\pi_0}(s,a)}{r_0(s,a)} \\
        &=
        \expect*{x \sim \Ponline(x), s,a \sim d^{\pi_0}(s,a|x)}{r(s,a,x)-r_0(s,a)} 
        +
        \expect*{s,a \sim \doffline^{\pi^*}(s,a)}{r_0(s,a)} \\
        &=
        \expect*{x \sim \Ponline(x), s,a \sim d^{\pi_0}(s,a|x)}{r(s,a,x)-r_0(s,a)} 
        +
        \expect*{x \sim \Poffline(x), s,a \sim d^{\pi^*}(s,a|x)}{r_0(s,a)}.
    \end{align*}
    
    Then,
    \begin{align*}
        v(\pi^*) - v(\pi_0)
        &=
        \expect*{x \sim \Poffline(x), s,a \sim d^{\pi^*}(s,a|x)}{r(s,a,x) - r_0(s,a)}
        -
        \expect*{x \sim \Ponline(x), s,a \sim d^{\pi_0}(s,a|x)}{r(s,a,x) - r_0(s,a)} \\
        &\leq
        \expect*{x \sim \Poffline(x)}{\epsilon(x)}
        +
        \expect*{x \sim \Ponline(x)}{\epsilon(x)},
    \end{align*}
    completing the proof.
\end{proof}

\end{document}